\newcommand{\xmark}{\ding{55}}
\DeclareMathOperator*{\argmax}{arg\,max}
\newtheorem{definition}{Definition}[section]
\newtheorem{proposition}{Proposition}
[section]
\newtheorem{lemma}{Lemma}[section]
\newtheorem{corollary}{Corollary}[section]
\newcommand{\norm}[1]{\left\lVert#1\right\rVert}
\newcommand{\minority}{\textcolor{red}{\text{min}}}
\newcommand{\majority}{\textcolor{blue}{\text{maj}}}
\title{Trained Models Tell Us How to Make Them Robust to Spurious Correlation without Group Annotation}
\author{%
  Mahdi~Ghaznavi
  \\
  \texttt{mahdi.ghaznavi@ce.sharif.edu} \\
  \And
  Hesam~Asadollahzadeh\thanks{Equal contribution} \\
  \texttt{hesam.asadzadeh26@researcher.sharif.edu} \\
  \AND
  Fahimeh Hosseini Noohdani\footnotemark[1] \\
  \texttt{fahim.hosseini.77@gmail.com} \\
  \And
  Soroush Vafaie Tabar \\
  \texttt{soroush.vafaie96@sharif.edu} \\
  \And
  Hosein Hasani \\
  \texttt{hosein.hasani@sharif.edu} \\
  \And
  Taha Akbari Alvanagh \\
  \texttt{tahaakbarialvanagh@gmail.com} \\
  \And
  Mohammad Hossein Rohban \\
  \texttt{rohban@sharif.edu} \\
  \And
  Mahdieh Soleymani Baghshah \\
  \texttt{soleymani@sharif.edu} \\
  \And
  \vspace{-5mm}\\
  Department of Computer Engineering \\
  Sharif University of Technology \\
  Tehran, Iran \\
}
\begin{document}

\maketitle

\maketitle
\begin{abstract}
Classifiers trained with Empirical Risk Minimization (ERM) tend to rely on attributes that have high spurious correlation with the target. This can degrade the performance on underrepresented (or \textit{minority}) groups that lack these attributes, posing significant challenges for both out-of-distribution generalization and fairness objectives. Many studies aim to enhance robustness to spurious correlation, but they sometimes depend on group annotations for training. Additionally, a common limitation in previous research is the reliance on group-annotated validation datasets for model selection. This constrains their applicability in situations where the nature of the spurious correlation is not known, or when group labels for certain spurious attributes are not available. To enhance model robustness with minimal group annotation assumptions, we propose Environment-based Validation and Loss-based Sampling (EVaLS). It uses the losses from an ERM-trained model to construct a balanced dataset of high-loss and low-loss samples, mitigating group imbalance in data. This significantly enhances robustness to group shifts when equipped with a simple post-training last layer retraining. By using environment inference methods to create diverse environments with correlation shifts, EVaLS can potentially eliminate the need for group annotation in validation data. In this context, the worst environment accuracy acts as a reliable surrogate throughout the retraining process for tuning hyperparameters and finding a model that performs well across diverse group shifts. EVaLS effectively achieves group robustness, showing that group annotation is not necessary even for validation. It is a fast, straightforward, and effective approach that reaches near-optimal worst group accuracy without needing group annotations, marking a new chapter in the robustness of trained models against spurious correlation. You can access the implementation at https://github.com/sharif-ml-lab/EVaLS.
\end{abstract}

\section{Introduction}

Training deep learning models using Empirical Risk Minimization (ERM) on a dataset, poses the risk of relying on \textit{spurious correlation}. These are correlations between certain patterns in the training dataset and the target (e.g., the class label in a classification task) despite lacking any causal relationship. Learning such correlations as shortcuts can negatively impact the models’ accuracy on \textit{minority groups} that do not contain the spurious patterns associated with the target~\citep{DFR, labonte2023towards}. This problem leads to concerns regarding fairness~\citep{hashimoto2018fairness}, and can also cause a marked reduction in the performance. This occurs particularly when minority groups, which are underrepresented during training, become overrepresented at the inference time, as a result of shifts within the subpopulations~\citep{yang2023change}. Hence, ensuring robustness to group shifts and developing methods that improve \textit{worst group accuracy} (WGA) is crucial for achieving both fairness and robustness in the realm of deep learning.

Many studies have proposed solutions to address this challenge. 
A promising line of research focuses on increasing the contribution of minority groups in the model's training ~\citep{JTT, SPARE, GDRO}. 
A strong assumption that is considered by some previous works is having access to group annotations for training or fully/partially fine-tuning a pretrained model~\citep{SSA, GDRO, DFR}. The study by \citet{DFR} proposes that retraining the last layer of a model on a dataset that is balanced in terms of group annotation can effectively enhance the model’s robustness against shifts in spurious correlation. 
While these works have shown tremendous robustness performance, their assumption for the availability of the group annotation restricts their usage.

In many real-world applications, the process of labeling samples according to their respective groups can be prohibitively expensive, and sometimes impractical, especially when all minority groups may not be identifiable beforehand. A widely adopted strategy in these situations involves the indirect inference of various groups, followed by the training of models using a loss function that is balanced across groups~\citep{JTT, AFR, LfF, yang2023change}. 
The loss value of the model, or its alternatives, are popular signals for recognizing minority groups ~\citep{JTT, AFR, LfF, DaC}.
While most of these techniques necessitate full training of a model, \citet{AFR} attempt to adapt the DFR method~\citep{DFR} with the aim of preserving computational efficiency while simultaneously improving robustness to the group shift. 
However, this method still requires group annotations of the validation set for the model selection and hyperparameter tuning. Consequently, this constitutes a restrictive assumption when adequate annotations for certain groups are not supplied. It also applies to situations where some shortcut attributes are completely unknown.


In this study, we present a novel strategy that effectively mitigates reliance on spurious correlation, completely eliminating the need for group annotations during both training and retraining. More interestingly, we provide empirical evidence indicating that group annotations are not necessary, even for model selection. We show that assembling a diverse collection of environments for model selection, which reflects group shifts can serve as an effective alternative approach. Our proposed scheme, Environment-based Validation and Loss-based Sampling (EVaLS), strengthens the robustness of trained models against spurious correlation, all without relying on group annotations. EVaLS is pioneering in its ability to eliminate the need for group annotations at {\it every phase}, including the model selection step. EVaLS posits that in the absence of group annotations, a set of \textit{environments} showcasing group shifts is sufficient. Worst Environment Accuracy (WEA) could then be utilized for model selection. We observe that spurious correlations, as a form of subpopulation shifts, cause significant group shifts when using environment inference methods~\citep{EIIL}. Consequently, the inferred environments—which could be obtained even by simply dividing validation data based on predictions from a random linear layer atop a trained model’s feature space—can effectively compare different sets of hyperparameters for tuning.
Figure~\ref{fig:method:overview} demonstrates the overall procedure of the main parts of EVaLS.

Aligned with AFR~\citep{AFR} and DFR~\citep{DFR}, EVaLS offers a significant advantage by not requiring any modifications to the standard ERM training procedure or the original training data. Moreover, it does not require information from the initial phases of ERM training, such as an early-stopped model. This characteristic is particularly beneficial in enhancing the robustness of ERM-pretrained networks against their potential inherent biases. Specifically, it eliminates the need to retrain the entire model, which may be impractical or infeasible when the original training data is unavailable.

Our empirical observations support prior research which suggests that high-loss data points in a trained model may signal the presence of minority groups~\citep{JTT, AFR, LfF}. EVaLS evenly selects from both high-loss and low-loss data to form a balanced dataset that is used for last-layer retraining. We offer theoretical explanations for the effectiveness of this approach in addressing group imbalances, and experimentally show the superiority of our efficient solution to the previous strategies. 
Comprehensive experiments conducted on spurious correlation benchmarks such as CelebA~\citep{CelebA}, Waterbirds~\citep{GDRO}, and UrbanCars~\citep{whac}, demonstrate that EVaLS achieves optimal accuracy. Moreover, when group annotations are accessible solely for model selection, our approach, EVaLS-GL, exhibits enhanced performance against various distribution shifts, including attribute imbalance, as seen in MultiNLI ~\citep{MultiNLI}, and class imbalance, exemplified by CivilComments ~\citep{CivilComments}. 
We further present a new dataset, \textit{Dominoes Colored-MNIST-FashionMNIST}, which depicts a situation featuring multiple independent shortcuts, that group annotations are only available for part of them (see Section~\ref{sec:prem:unknown}). In this setting, we show that strategies with lower levels of group supervision are paradoxically more effective in mitigating the reliance on both known and unknown shortcuts.

The main contributions of this paper are summarized as follows:

\begin{itemize}
\setlength\itemsep{0em}
    \item
    We present EVaLS, a simple yet effective post-hoc approach that enhances the robustness of ERM-pretrained models against both known and unknown spurious correlations, without relying on ground-truth group annotations.
    
    \item
    We offer both theoretical and empirical insights on how balanced sampling from high-loss and low-loss samples offers a dataset in which the group imbalance is notably mitigated.
    \item
    Using simple environment inference techniques, EVaLS introduces worst environment accuracy as a reliable indicator for model selection.
    \item
    EVaLS achieves near-optimal performance in spurious correlation benchmarks with zero group annotations, and delivers state-of-the-art performance
    when group annotations are available for model selection.

    \item
     By utilizing a newly introduced dataset with two spurious attributes, we demonstrate that EVaLS improves robustness to both known and unknown spurious attributes learned by an ERM-trained model better than methods relying on group information.

\end{itemize}

\section{Preliminaries}\label{sec:prelims}
\subsection{Problem Setting}
We assume a general setting of a supervised learning problem with distinct data partitions $\mathcal{D}^\text{Tr}$ for training, $\mathcal{D}^\text{Val}$ for validation, and $\mathcal{D}^\text{Te}$ for final evaluation. Each dataset comprises a set of paired samples $(x,y)$, where $x\in\mathcal{X}$ represents the data and $y\in\mathcal{Y}$ denotes the corresponding labels. Conventionally, $\mathcal{D}^\text{Tr}$, $\mathcal{D}^\text{Val}$, and $\mathcal{D}^\text{Te}$ are assumed to be uniformly sampled from the same distribution. However, this idealized assumption does not hold in many real-world problems where distribution shift is inevitable.
In this context, we consider the sub-population shift problem~\citep{yang2023change}. In a general form of this setting, it is assumed that data samples consist of different groups $\mathcal{G}_i$, where each group comprises samples that share a property. More specifically, the overall data distribution $p(x,y) = \sum_i \alpha_i p_i(x,y)$ is a composition of individual group distributions $p_i(x,y)$ weighted by their respective proportions $\alpha_i$,
where $\sum_i \alpha_i = 1$.
In this work, we assume that $\mathcal{D}^\text{Tr}$, $\mathcal{D}^\text{Val}$, and $\mathcal{D}^\text{Te}$ are composed of identical groups but with a different set of mixing coefficients $\{\alpha_i\}$. 
It is noteworthy that the validation set may have approximately identical coefficients to those of the training or testing sets, or it may have entirely different coefficients. 


Several kinds of subpopulation shifts are defined in the literature, including class imbalance, attribute imbalance, and spurious correlation~\citep{yang2023change}. Class imbalance refers to the cases where there is a difference between the proportion of samples from each class, while attribute imbalance occurs when instances with a certain attribute are underrepresented in the training data, even though this attribute may not necessarily be a reliable predictor of the label. On the other hand, spurious correlation occurs when various groups are differentiated by spurious attributes that are partially predictive and correlated with class labels but are causally irrelevant. More precisely, we can consider a set of spurious attributes $\mathcal{S}$ that partition the data into $|\mathcal{S}|\times|\mathcal{Y}|$ groups. When the concurrence of a spurious attribute with a label is significantly higher than its correlation with other labels, that spurious attribute could become predictive of the label, resulting in deep models relying on the spurious attributes as shortcuts instead of the core ones. This is followed by a decrease in the model’s performance on groups that do not have this attribute.

Given a class, the group containing samples with correlated spurious attributes is referred to as \textit{majority} group of that class, while the other groups are called the \textit{minority} groups. As an example, in the Waterbirds dataset~\citep{GDRO}, for which the task is to classify images of birds into landbird and waterbird, there are spurious attributes $\{\textit{water background}, \textit{land background}\}$. Each background is spuriously correlated with its associated label, decompose the data into two majority groups \textit{waterbird on water background}, and \textit{landbird on land background}, 
and two minority groups \textit{waterbird on land background} and \textit{landbird on water background}. Our goal is to make the classifier robust to spurious attributes by increasing performance for all groups.

\subsection{Robustness of a Trained Model to Unknown Shortcuts}\label{sec:prem:unknown}
In scenarios where group annotations are absent, traditional methods that depend on these annotations for training or model selection become infeasible. Moreover, as previously discussed by ~\citet{whac}, when data contains multiple spurious attributes and annotations are only available for some of them, such methods would make the model robust only to the known spurious attributes. To further explore such complex scenarios, we introduce the \textit{Dominoes Colored-MNIST-FashionMNIST (Dominoes CMF)} dataset (Figure~\ref{fig:unknown_spurious}(a)). Drawing inspiration from \citet{AtD} and \citet{IRM}, Dominoes CMF merges an image from CIFAR10~\citep{cifar} at the top with a colored (red or green) MNIST~\citep{mnist} or FashionMNIST~\citep{fashion_mnist} image at the bottom. The primary label is derived from the CIFAR10 image, while the bottom part introduces two independent spurious attributes: color (red or green) and style (MNIST or FashionMNIST). Although annotations for shape are provided for training and model selection, color remains an unknown variable until testing. For more details on the dataset refer to the Appendix. 

The illustrations in Figure~\ref{fig:domino_cmf}(a-c) depict the outlined scenario. A classifier trained using ERM is dependent on both spurious features (Figure~\ref{fig:domino_cmf}(b)). Yet, achieving robustness against one spurious correlation (Figure~\ref{fig:domino_cmf}(c)), does not ensure robustness against both (Figure~\ref{fig:domino_cmf}(a)). In Section~\ref{sec:experimetns} we show that our approach, which does not rely on the group annotations of the identified group, achieves enhanced robustness to both spurious correlations, outperforming strategies that depend on the known group’s information.


\section{Environment-based Validation and Loss-based Sampling}\label{sec:method:EVaLS}

EVaLS is designed to improve the robustness of ERM-trained deep learning models to group shifts without the need for group annotation. In line with the DFR~\citep{DFR} approach, we utilize a classifier defined as $f=h_\phi\circ g_\theta$, where $g_\theta$ represents a deep neural network serving as a feature extractor, and $h_\phi$ denotes a linear classifier. The classifier is initially trained with the ERM objective on the training dataset $\mathcal{D}^\text{Tr}$. Subsequently, we freeze the feature extractor $g_\theta$ and focus solely on retraining the last linear layer $h_\phi$ using the validation dataset $\mathcal{D}^\text{Val}$ as a held-out dataset. This scheme helps us make our method available in settings where $\mathcal{D}^\text{Tr}$ is not available, or where repeating the training process is infeasible.

We randomly divide the validation set $\mathcal{D}^\text{Val}$ into two subsets, $\mathcal{D}^\text{LL}$ and $\mathcal{D}^\text{MS}$ which are used for last layer training and model selection, respectively. 
In Section~\ref{sec:method:D1} we explain how to sample a subset of $\mathcal{D}^\text{LL}$ that statistically handles the group shifts inherent in the dataset.
In Section~\ref{sec:method:D2} we describe how $\mathcal{D}^\text{MS}$ is divided into different environments that are later used for model selection.
The optimal number of selected samples from $\mathcal{D}^\text{LL}$ and other hyperparameters is determined based on the worst environment accuracies among environments that are obtained from $\mathcal{D}^\text{MS}$. By combining our sampling and validation strategy, we aim to provide a robust linear classifier $h_{\phi^\ast}$ that significantly improves the accuracy of underrepresented groups without requiring group annotations of training or validation sets.
Finally in Section~\ref{sec:theory}, we provide theoretical support for the loss-based sampling procedure and its effectiveness. Figure~\ref{fig:method:overview} illustrates the comprehensive workflow of the EVaLS.

\begin{figure}
  \centering
  \includegraphics[width=\linewidth]{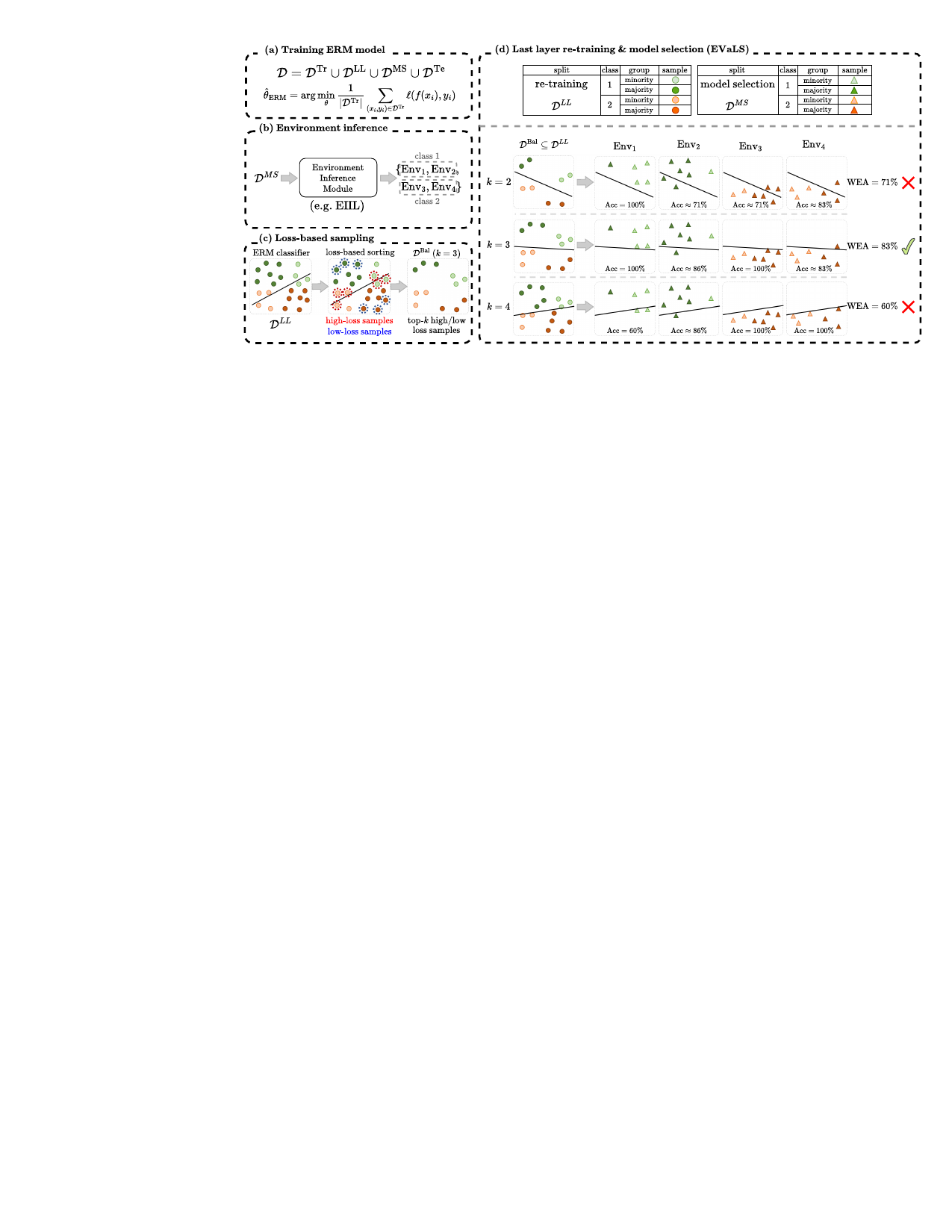}
   \caption{
Overview of the proposed approach.
(a) We randomly split the dataset $\mathcal{D}$ into $\mathcal{D}^\text{Tr}$, $\mathcal{D}^\text{MS}$, $\mathcal{D}^\text{LL}$ and $\mathcal{D}^\text{Te}$. We train the initial classifier on $\mathcal{D}^\text{Tr}$ with empirical risk minimization (ERM). Alternatively, we can assume that an ERM-trained model is given. (b) An environment inference method is utilized to infer diverse environments for each class of $\mathcal{D}^\text{MS}$. (c) We evaluate $\mathcal{D}^\text{LL}$ samples on the initial ERM classifier and sort high-loss and low-loss samples of each class for loss-based sampling. (d) Finally, we perform last-layer retraining on the loss-based selected samples $\mathcal{D}^\text{Bal}$. Each retraining setting (e.g. different $k$ for loss-based sampling) is validated based on the worst accuracy of the inferred environments. Note that majority and minority groups are shown with dark and light colors for better visualization, but are not known in our setting.
   }
   \vspace{-1em}
\label{fig:method:overview}
\end{figure}

\subsection{Loss-Based Instance Sampling}

\label{sec:method:D1}
Following previous works~\citep{JTT, LfF, AFR}, we use the loss value as an indicator for identifying minority groups. We first evaluate classifier $f$ on samples within $\mathcal{D}^\text{LL}$ and choose $k$ samples with the highest and lowest loss values in each class for a given $k$. By combining these $2k$ samples from each class, we construct a balanced set $\mathcal{D}^\text{Bal}$, consisting of high-loss and low-loss samples (see Figure~\ref{fig:method:overview}(c)). $\mathcal{D}^\text{Bal}$ is then used for the training of the last layer of the model.
As depicted in figure~\ref{fig:minority_prop}, the proportion of minority samples among various percentiles of samples with the highest loss values increases as we select a smaller subset of samples with the highest loss. This suggests that high and low-loss samples could serve as effective representatives of minority and majority groups, respectively.
In Section~\ref{sec:theory}, we offer theoretical insights explaining why this approach could lead to the creation of group-balanced data.

\subsection{Partitioning Validation Set into Environments} \label{sec:method:D2}

Contrary to common assumptions and practices in the field, precise group labels for the validation set are not essential for training models robust to spurious correlations. 
Our empirical findings, detailed in Section~\ref{sec:experimetns}, reveal that partitioning the validation set into environments that exhibit significant subpopulation shifts can be used for model selection. Under these conditions, the worst environment accuracy (WEA) emerges as a viable metric for selecting the most effective model and hyperparameters.

The concept of an \textit{environment}, as frequently discussed in the invariant learning literature, denotes partitions of data that exhibit different distributions. A model that consistently excels across these varied environments, achieving impressive worst environment accuracy (WEA), is likely to perform equally well across different groups in the test set. Several methods for inferring environments with notable distribution shifts have been introduced~\citep{EIIL, HRM}. Environment Inference for Invariant Learning (EIIL)~\citep{EIIL}, leverages the predictions from an earlier trained ERM model to divide the data into two distinct environments that significantly deviate from the invariant learning principle proposed by \citet{IRM}, thus creating environments with distribution shifts. Initially, EIIL is employed to split $\mathcal{D}^\text{MS}$ into two environments. Subsequently, each environment is further divided based on sample labels, resulting in $2 \times |\mathcal{Y}|$ environments. To measure the difference between the distribution of environments, we define \textit{group shift} of a class as the absolute difference in the proportion of a minority group between two environments of that class. A higher group shift suggests a more distinct separation between environments. As detailed in the Appendix, environments inferred by EIIL demonstrate an average group shift of $28.7\%$ over datasets with spurious correlation. Further information about EIIL and the group shift quantities for each dataset can be found in the Appendix.


We demonstrate that even more straightforward techniques, such as applying a random linear layer over the feature embedding space and distinguishing environments based on correctly and incorrectly classified samples of each class, can be effective to an extent in several cases (See Appendix~\ref{app:other_infer}). It underscores that the feature space of a trained model is a valuable resource of information for identifying groups affected by spurious correlations. This supports the logic of previous research that employs clustering~\citep{GEORGE} or contrastive methods~\citep{CnC} in this space to differentiate between groups.

\subsection{Theoretical Analysis}\label{sec:theory}

The environments obtained as described in Section~\ref{sec:method:D2} are utilized for hyperparameter tuning, specifically for tuning $k$, which is the number of selected samples from loss tails. It is known that minority samples are more prevalent among high-loss samples, while majority samples dominate the low-loss category. However, the question remains whether loss-based sampling can construct a balanced dataset without introducing spurious correlations. In this section, aligned with our practical approach, we provide theoretical insights into how loss-based sampling within a class can be used to create a group-balanced dataset.


Consider a binary classification problem with a cross-entropy loss function. Let logits be denoted as $L$. We assume a general assumption that in feature space (output of $g_\theta$) samples from the minority and majority of a class are derived from Gaussian distributions. As a result, we can consider $\mathcal{N}(\mu_{\text{min}}, \sigma_{\text{min}}^2)$ and $\mathcal{N}(\mu_{\text{maj}}, \sigma_{\text{maj}}^2)$ as the distribution of minority and majority samples in logits space (See Lemma~\ref{lemma:gaussian} in Appendix~\ref{sec:appendix-theory} for details). Because the loss function is a monotonic function of logits, the tails of the distribution of loss across samples are equivalent to that of the logits in each class.

\begin{proposition}\label{prop:main}[Feasiblity Of Loss-based Group Balancing]
Suppose that $L$ is derived from the mixture of two distributions $\mathcal{N}(\mu_{min}, \sigma_{min}^2)$ and $\mathcal{N}(\mu_{maj}, \sigma_{maj}^2)$ with proportion of $\varepsilon$ and $1-\varepsilon$, respectively, where $\varepsilon\leq\frac{1}{2}$. If (i) $\sigma_{min}>\sigma_{maj}$, or (ii) under sufficient and necessary conditions on $\mu_{min}$, $\mu_{maj}$, $\sigma_{min}$ and $\sigma_{maj}$ including inequality \ref{cond:loss-based-feasiblity-case2-main} (see App.\ref{sec:appendix-theory}),
there exists $\alpha$ and $\beta$ such that restricting $L$ to the $\alpha$-left and $\beta$-right tails of its distribution results in a group-balanced distribution; in which both components are equally represented. 

\label{prop:loss-based-feasiblity}
\end{proposition}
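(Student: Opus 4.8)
The plan is to reduce the statement to a one‑dimensional intermediate‑value argument. Write $p = \varepsilon f_{\text{min}} + (1-\varepsilon)f_{\text{maj}}$ for the mixture density on the logit axis (the reduction to a Gaussian mixture in logit space being exactly Lemma~\ref{lemma:gaussian}), and let $F_{\text{min}}, F_{\text{maj}}, F_p$ be the corresponding CDFs. For a threshold pair $a \le b$, the ``$\alpha$-left and $\beta$-right tails'' form the set $T(a,b) = (-\infty,a] \cup [b,+\infty)$, with $\alpha = F_p(a)$ and $\beta = 1 - F_p(b)$; the conditional law of $L$ on $T(a,b)$ is group-balanced exactly when
\[
  \varepsilon\big(F_{\text{min}}(a) + 1 - F_{\text{min}}(b)\big) \;=\; (1-\varepsilon)\big(F_{\text{maj}}(a) + 1 - F_{\text{maj}}(b)\big).
\]
Denoting the difference of the two sides by $\Delta(a,b)$, the task is to solve $\Delta(a,b)=0$ for an admissible pair. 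The structural fact I would lean on is that $\Delta$ is additively separable: $\Delta(a,b) = h(a) + \tilde h(b)$ with $h(a) := \varepsilon F_{\text{min}}(a) - (1-\varepsilon)F_{\text{maj}}(a)$ and $\tilde h(b) := \varepsilon\big(1-F_{\text{min}}(b)\big) - (1-\varepsilon)\big(1-F_{\text{maj}}(b)\big)$. These are continuous with $h(-\infty)=0$, $h(+\infty)=2\varepsilon-1\le 0$, $\tilde h(-\infty)=2\varepsilon-1\le 0$, $\tilde h(+\infty)=0$, and the degenerate ``keep everything'' configuration $a=b$ gives $\Delta(a,a)=2\varepsilon-1\le 0$. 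Hence, sliding $(a,b)$ continuously along a path that keeps $a\le b$ and runs from $a=b$ to any pair with $\Delta>0$, the intermediate value theorem delivers a balanced pair; so everything comes down to the single question: \emph{when can $\Delta$ be made positive?}

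For case (i) I would read this off from Gaussian tail asymptotics. The log-likelihood ratio $\log r(x) = \log\frac{\varepsilon f_{\text{min}}(x)}{(1-\varepsilon)f_{\text{maj}}(x)}$ is quadratic in $x$ with leading coefficient $\tfrac12\big(\sigma_{\text{maj}}^{-2}-\sigma_{\text{min}}^{-2}\big)$, which is positive precisely when $\sigma_{\text{min}}>\sigma_{\text{maj}}$; thus in case (i) $r(x)\to+\infty$ as $x\to\pm\infty$, so both $F_{\text{min}}(a)/F_{\text{maj}}(a)\to\infty$ as $a\to-\infty$ and $\big(1-F_{\text{min}}(b)\big)/\big(1-F_{\text{maj}}(b)\big)\to\infty$ as $b\to+\infty$. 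Consequently $h(-T)>0$ and $\tilde h(T)>0$ for $T$ large, so $\Delta(-T,T)>0$; taking the straight path $t\mapsto(-tT,tT)$ on $[0,1]$ (which respects $a\le b$) and using $\Delta\le 0$ at $t=0$ versus $\Delta>0$ at $t=1$, the IVT gives $t^\star\in(0,1]$ with $\Delta(-t^\star T,t^\star T)=0$; then $\alpha=F_p(-t^\star T)$ and $\beta=1-F_p(t^\star T)$, both in $(0,1)$. (When $\varepsilon=\tfrac12$ the mixture is already balanced and any full-support choice works.)

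For case (ii), $\sigma_{\text{min}}\le\sigma_{\text{maj}}$, the leading coefficient of $\log r$ is nonpositive, so $r$ does not blow up at both ends and positivity of $\Delta$ is no longer automatic. Assuming $\varepsilon<\tfrac12$ (else the mixture is already balanced) and WLOG $\mu_{\text{min}}\le\mu_{\text{maj}}$, I would use a single left tail ($b=+\infty$, $\beta=0$), for which the discrepancy is just $h(a)$. Since $h'(a)=\varepsilon f_{\text{min}}(a)-(1-\varepsilon)f_{\text{maj}}(a)$ has the sign of $r(a)-1$, and $\{r>1\}$ is an interval $(x_1,x_2)$ (the roots of the quadratic $\log r=0$, with $x_1=-\infty$ when $\sigma_{\text{min}}=\sigma_{\text{maj}}$), $h$ attains its maximum at $a^\star:=x_2$, a closed-form solution of $\frac{\varepsilon}{\sigma_{\text{min}}}\varphi\!\big(\tfrac{x-\mu_{\text{min}}}{\sigma_{\text{min}}}\big)=\frac{1-\varepsilon}{\sigma_{\text{maj}}}\varphi\!\big(\tfrac{x-\mu_{\text{maj}}}{\sigma_{\text{maj}}}\big)$. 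Because $h(+\infty)=2\varepsilon-1<0$, a finite zero of $h$ — hence a balanced tail — exists iff $h(a^\star)\ge 0$, i.e.
\[
  \varepsilon\,\Phi\!\Big(\tfrac{a^\star-\mu_{\text{min}}}{\sigma_{\text{min}}}\Big)\;\ge\;(1-\varepsilon)\,\Phi\!\Big(\tfrac{a^\star-\mu_{\text{maj}}}{\sigma_{\text{maj}}}\Big),
\]
which is the inequality invoked in the statement; if it is strict, one upgrades to a genuine two-tailed solution with $\alpha,\beta>0$ by sliding $b$ down from $+\infty$ towards $a^\star$ and applying the IVT to $b\mapsto\Delta(a^\star,b)=h(a^\star)+\tilde h(b)$, which runs from $h(a^\star)>0$ to $\Delta(a^\star,a^\star)=2\varepsilon-1<0$.

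The necessity half follows from the same separability: since $\Delta(a,b)=h(a)+\tilde h(b)$ and $\Delta(a,a)=2\varepsilon-1<0$, an admissible balanced pair exists iff $\sup_a h(a)\ge 0$ or $\sup_b\tilde h(b)\ge 0$, and under the ordering $\mu_{\text{min}}\le\mu_{\text{maj}}$ one checks that the left-tail quantity dominates, so infeasibility is exactly the negation of the displayed inequality. The routine parts are the IVT scaffolding and the case-(i) tail estimate; the step I expect to be the main obstacle is case (ii): pinning down the maximizer $a^\star$ via the Gaussian likelihood-ratio equation (choosing the correct root, treating the degeneracies $\sigma_{\text{min}}=\sigma_{\text{maj}}$ and $\mu_{\text{min}}=\mu_{\text{maj}}$), converting $h(a^\star)\ge 0$ into the clean CDF inequality, and — hardest of all — verifying in the necessity direction that adjoining a right tail cannot restore feasibility once the left-tail criterion fails, i.e.\ that $\sup_b\tilde h\le\sup_a h$ under the mean ordering.
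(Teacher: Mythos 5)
Your proposal is correct and follows essentially the same route as the paper's proof: your $h$ and $\tilde h$ are exactly the paper's left- and right-tail proportional density differences, your IVT scaffolding plays the role of the paper's continuity/mean-value corollary for the right tail, and your case-(ii) analysis via the quadratic log-likelihood-ratio equation and its larger root $a^\star$ reproduces the paper's conditions, with $h(a^\star)\ge 0$ packaging together the discriminant condition (which is the sigmoid inequality actually displayed in the statement, not the CDF inequality you attribute to it) and the paper's positivity condition at that root. The step you single out as the main obstacle---that $\sup_b \tilde h(b)\le \sup_a h(a)$ when $\mu_{\text{min}}\le\mu_{\text{maj}}$, so adjoining a right tail cannot restore feasibility once the left-tail criterion fails---is in fact true and follows in two lines by substituting $a=\mu_{\text{min}}+\mu_{\text{maj}}-b$ and using monotonicity of $\Phi$ to get $h(\mu_{\text{min}}+\mu_{\text{maj}}-b)\ge\tilde h(b)$ pointwise; the paper's own necessity argument silently omits this consideration (it only argues within the ``positive left-tail surplus'' regime), so your treatment is, if anything, the more careful one.
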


\begin{equation}
    \epsilon \geq \text{sigmoid}\Bigg(-\frac{(\mu_{\text{maj}}-\mu_{\text{min}}\big)^2}{2(\sigma_{\text{maj}}^2-\sigma_{\text{min}}^2)}-\log\Big(\frac{\sigma_{\text{maj}}}{\sigma_{\text{min}}}\Big)\Bigg)\label{cond:loss-based-feasiblity-case2-main}
\end{equation}

We provide an outline for proof of Proposition \ref{prop:loss-based-feasiblity} here and leave the complete and formal proof and also exact bounds to Appendix \ref{sec:appendix-theory}. We also analyze the conditions and effects of spurious correlation in satisfying these conditions. Practical justifications for Proposition~\ref{prop:main} can be found in Appendix~\ref{sec:prac_just}. To proceed with the outline, we first define a key concept.

\begin{figure}
\centering
    \begin{subfigure}{0.495\textwidth}
		\centering
		\includegraphics[width = 0.49\linewidth]{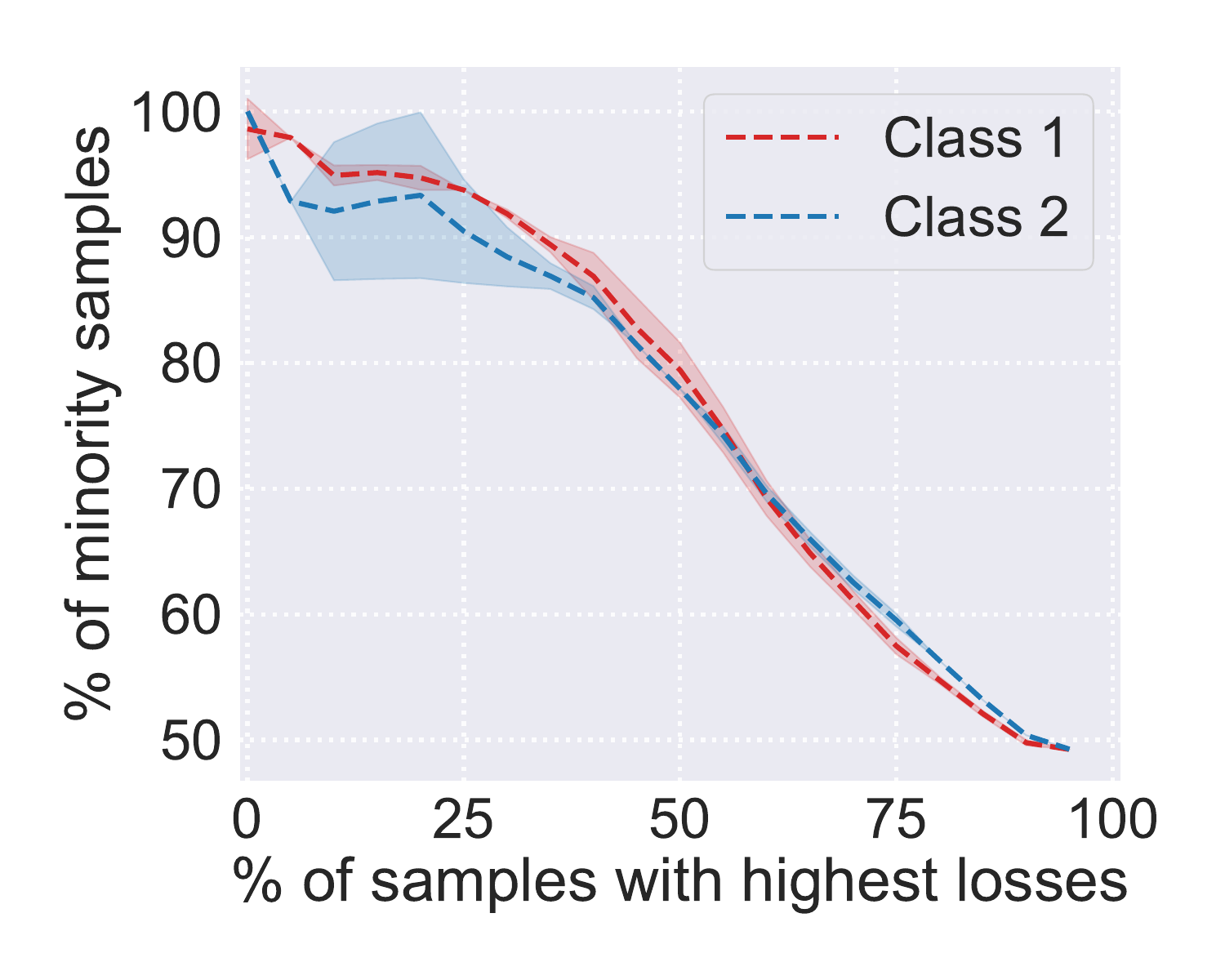}
		\includegraphics[width = 0.49\linewidth]{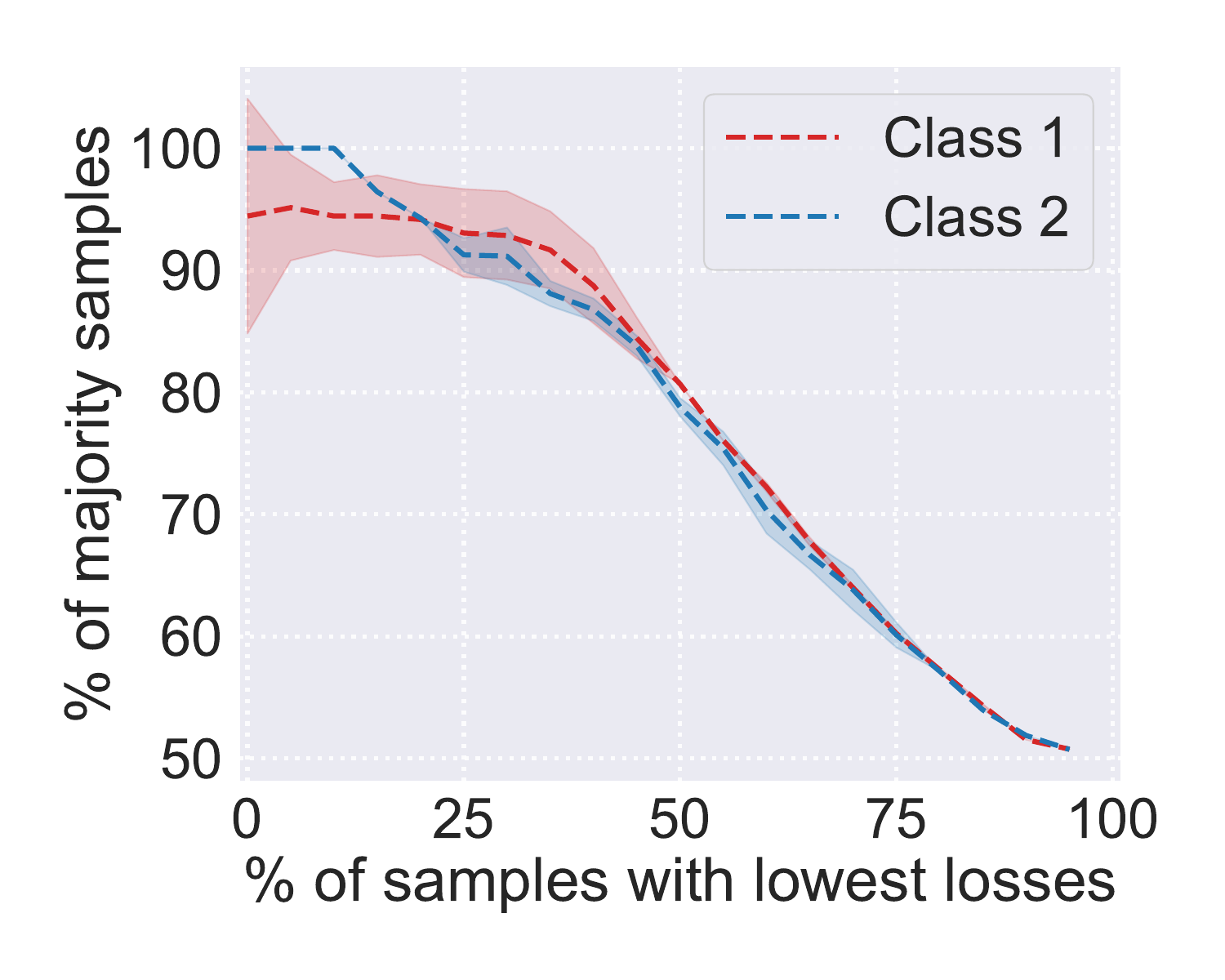}
        \vspace{-0.8em}
        \caption{Waterbirds}
		\label{fig:wb_groups}
    \end{subfigure}%
    \begin{subfigure}{0.495\textwidth}
		\centering
		\includegraphics[width = 0.49\linewidth]{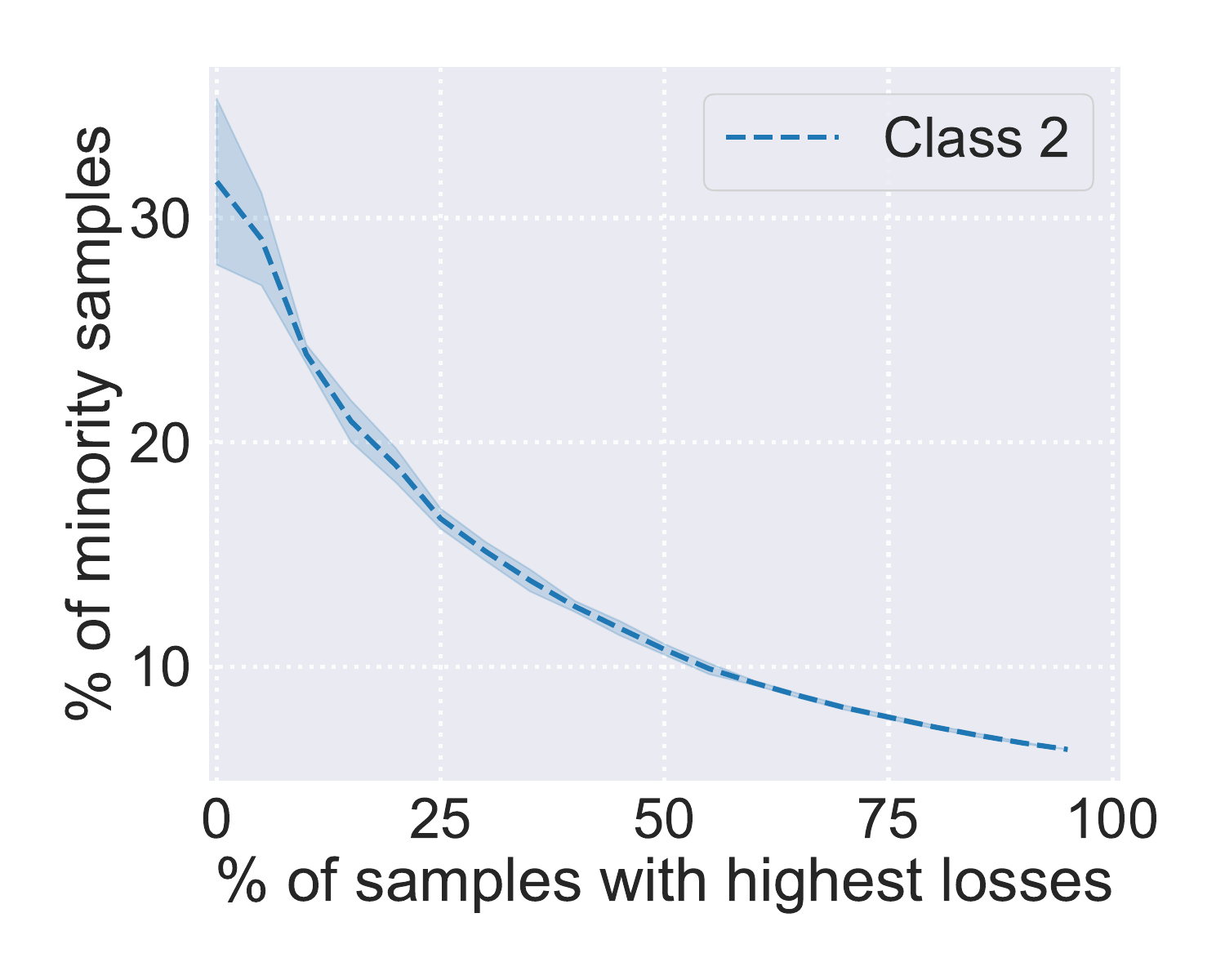}
		\includegraphics[width = 0.49\linewidth]{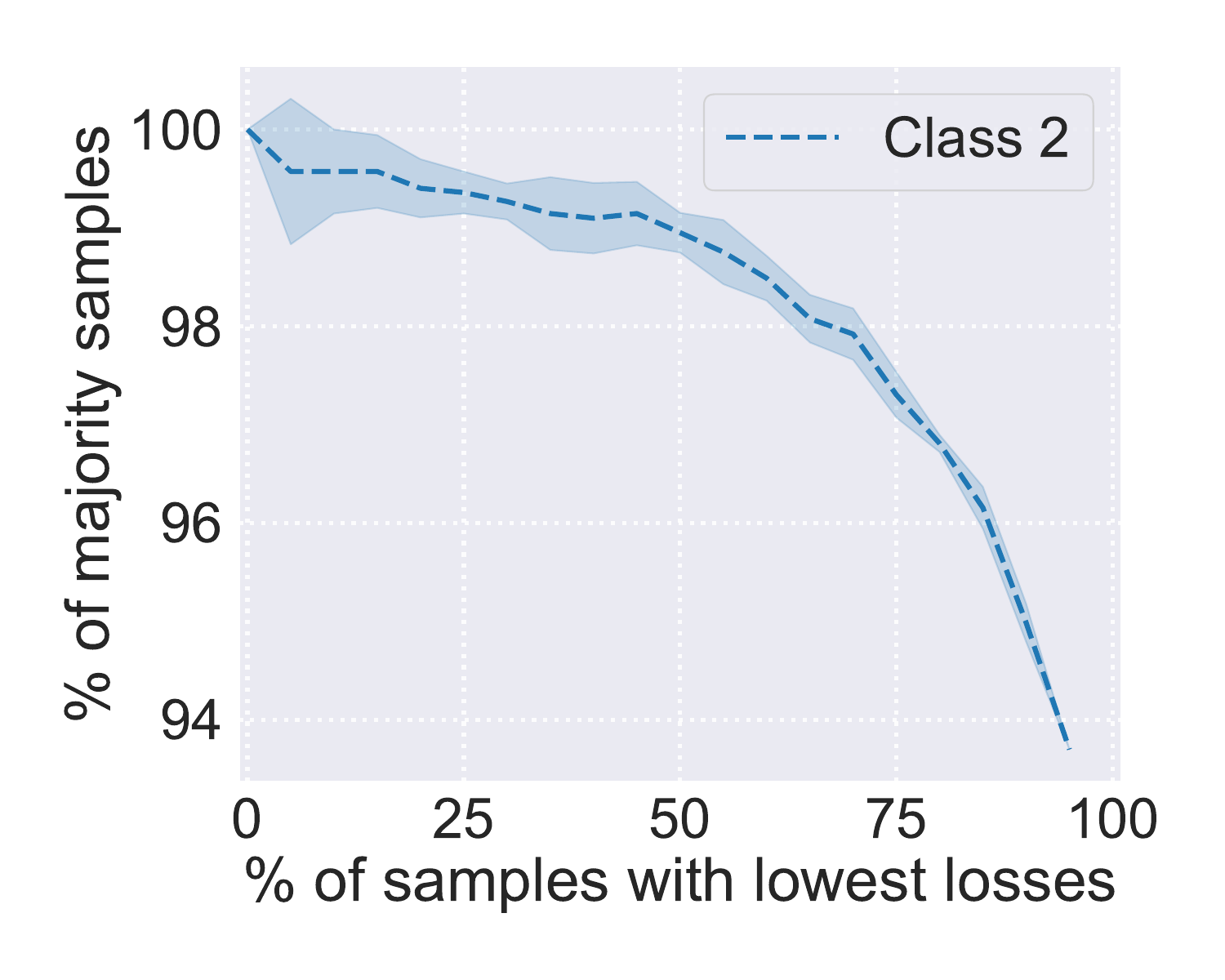}
        \vspace{-0.8em}
		\caption{CelebA}
		\label{fig:celeb_groups}
    \end{subfigure}
    
\caption[The proportion of minority(majority) samples across different classes within various percentages of $\mathcal{D}^\text{LL}$ samples with highest (lowest) loss for the Waterbirds (a) and CelebA (b) datasets. Minority group samples are more prevalent among high-loss samples, while majority group samples dominate the low-loss areas. The mean and standard deviation (error bar) are calculated across three ERM models.]
{
The percentage of samples with the highest (lowest) losses across various thresholds that belong to the minority (majority) group within different classes in $\mathcal{D}^\text{LL}$ for (a) the Waterbirds and (b) CelebA datasets. Minority group samples are more prevalent among high-loss samples, while majority group samples dominate the low-loss areas. The error bars are calculated across three ERM models.
  \footnotemark
  }
  \label{fig:minority_prop}
\end{figure}

\footnotetext{
Note that in the CelebA dataset, only the "blond hair" class includes a minority group.}

\begin{definition}[Proportional Density Difference]\label{def:proportional-density-diff}
For any interval $I=(a,b]$ and a mixture distribution $\varepsilon P_1(x)+(1-\varepsilon)P_2(x)$, 
the proportional density difference is defined as the difference of accumulation of two component distributions in the interval $I$ and is denoted by $\Delta_\varepsilon P_{mixture}(I)$.
\begin{equation}\Delta_\varepsilon P_{mixture}(I) \stackrel{\Delta}{=}  \varepsilon P_1\big(x \in I\big)-(1-\varepsilon)P_2\big(x \in I\big)
\end{equation}
\end{definition} 

\textbf{Proof outline} \quad
Our proof proceeds with three steps. 
First, we reformulate the theorem as an equality of left- and right-tail proportional distribution differences. 
In other words, we show that
the more mass the minority distribution has on one tail, the more mass the majority distribution must have on the other tail.
Afterward, supposing $\mu_{\text{min}}<\mu_{\text{maj}}$ WOLG,
we propose a proper range for $\beta$ values on the right tail.
We show that when
$\sigma_{\text{maj}}\leq\sigma_{\text{min}}$,
values for $\alpha$ trivially exist that can overcome the imbalance between the two distributions. In the last step, for the case in which the variance of the majority is higher than the minority, we discuss a necessary and sufficient condition for the existence of $\alpha$ and $\beta$ based on the left-tail proportional density difference using the properties of its derivative with respect to $\alpha$. 

Condition \ref{cond:loss-based-feasiblity-case2-main} suggests that for a given degree of spurious correlation \(\epsilon\) and variations \(\sigma_{\text{maj}}, \sigma_{\text{min}}\), an essential prerequisite for the efficacy of loss-based sampling is a sufficiently large disparity between the mean distributions of minority and majority samples, denoted by \(\|\mu_{\text{maj}} - \mu_{\text{min}}\|^2\). This indicates that the groups should be distinctly separable in the logits space.

Although the parameters \(\alpha\) and \(\beta\) are theoretically established under certain conditions, their actual values remain undetermined. Therefore, validation data is essential to identify the appropriate tails. For practicality and simplicity, we assume an equal number \(k\) of samples for both tails and explore this count (high- and low-loss samples) from a predefined set of values. By leveraging the worst environment accuracy on validation data after last-layer retraining, as detailed in Section~\ref{sec:method:D2}, we identify the optimal candidate that ensures uniform accuracy across all environments.


\section{Experiments}\label{sec:experimetns}
In this section, we evaluate the effectiveness of the proposed scheme through comprehensive experiments on multiple datasets and compare it with various methods and baselines. We begin by briefly describing evaluation datasets and then introduce baselines and comparative methods. Finally, we report and fully explain the results.

\paragraph{Datasets}
\label{sec:datasets}
Our approach, along with other baselines, is evaluated on Waterbirds~\citep{GDRO}, CelebA~\citep{CelebA}, UrbanCars~\citep{whac}, CivilComments~\citep{CivilComments}, and MultiNLI~\citep{MultiNLI}. As per the study by \citet{yang2023change}, Waterbirds, CelebA, and UrbanCars among these datasets exhibit spurious correlation. Among the rest, CivilComments has class and attribute imbalance, whereas MultiNLI exhibits attribute imbalance. For additional details on the datasets, please refer to the Appendix~\ref{app:datasets}.



\paragraph{Baselines}
\label{sec:baselines}
We compare EVaLS with six baselines in addition to standard ERM.
\textbf{GroupDRO}~\citep{GDRO} trains a model on the data with the objective of minimizing its average loss on the minority samples. This method requires group labels of both the training and validation sets. \textbf{DFR}~\citep{DFR} argues that models trained with ERM are capable of extracting the core features of images. Thus, it first trains a model with ERM, and retrains only the last linear classifier layer on a group-balanced subset of the validation or the held-out training data. While DFR reduces the number of group-annotated samples, it still requires group labels in the training phase. \textbf{GroupDRO + EIIL}~\citep{EIIL} infers environments of the training set and trains a model with GroupDRO on the inferred environments. \textbf{JTT}~\citep{JTT} first trains a model with ERM on the dataset, and then retrains it on the dataset by upweighting the samples that were misclassified by the initial ERM model. \textbf{ES Disagreement SELF}~\citep{labonte2023towards} selects samples with the highest difference in output when comparing an ERM-trained model to its early-stopped version. Then, they fine-tune the last layer of the ERM-trained model on the selected samples. \textbf{AFR}~\citep{AFR} trains a model with standard ERM, and retrains the classifier on a weighted held-out data. The weights assigned to retraining samples are determined by the probability that the ERM-pretrained model assigns to the ground-truth label, leading to an increased weighting of samples from minority groups. 

GroupDRO + EIIL, JTT, ES Disagreement SELF, and AFR eliminate the reliance on group annotations for their (re)training. However, unlike EVaLS, they all require group labels for model selection. JTT, GroupDRO, and GroupDRO + EIIL necessitate training the entire model to apply their methods. Additionally, ES Disagreement and SELF require early-stopped versions during training with ERM. In contrast, DFR, AFR, and EVaLS operate in a completely post-training manner without relying on any information from ERM training. This property makes these methods applicable in real-world scenarios when training checkpoints or training data are unavailable, or when it is infeasible to repeat the training due to reasons such as a large training set.

\begin{figure}
    \begin{subfigure}{0.3\textwidth}
		\centering
		\includegraphics[width = \linewidth]{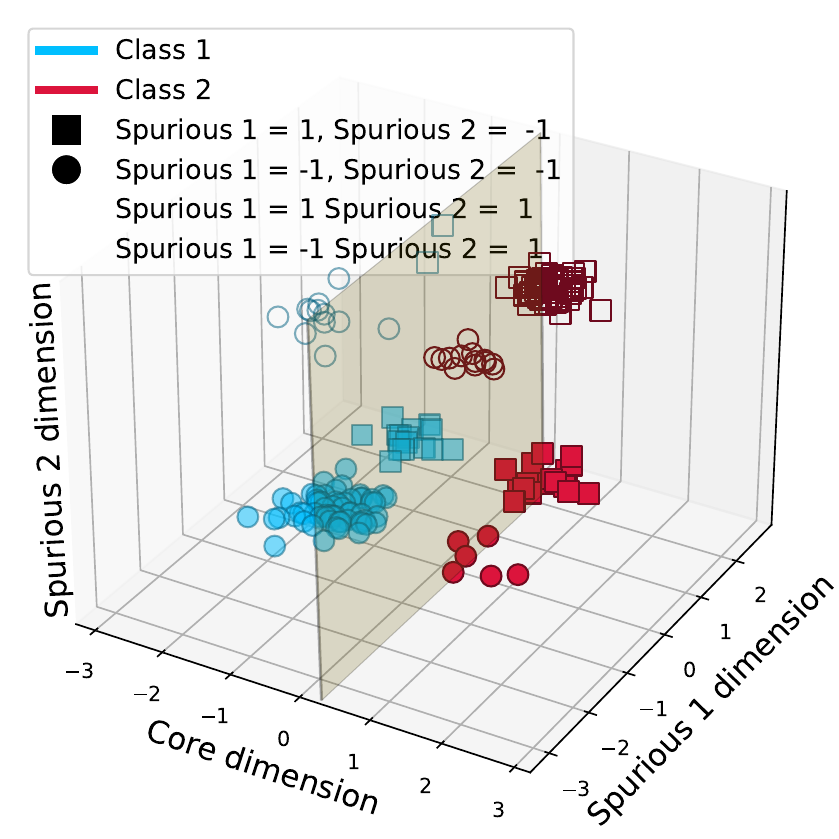}
		\caption{}
		\label{fig:urbancars_minority}
    \end{subfigure}
    \begin{subfigure}{0.3\textwidth}
		\centering
		\includegraphics[width = \linewidth]{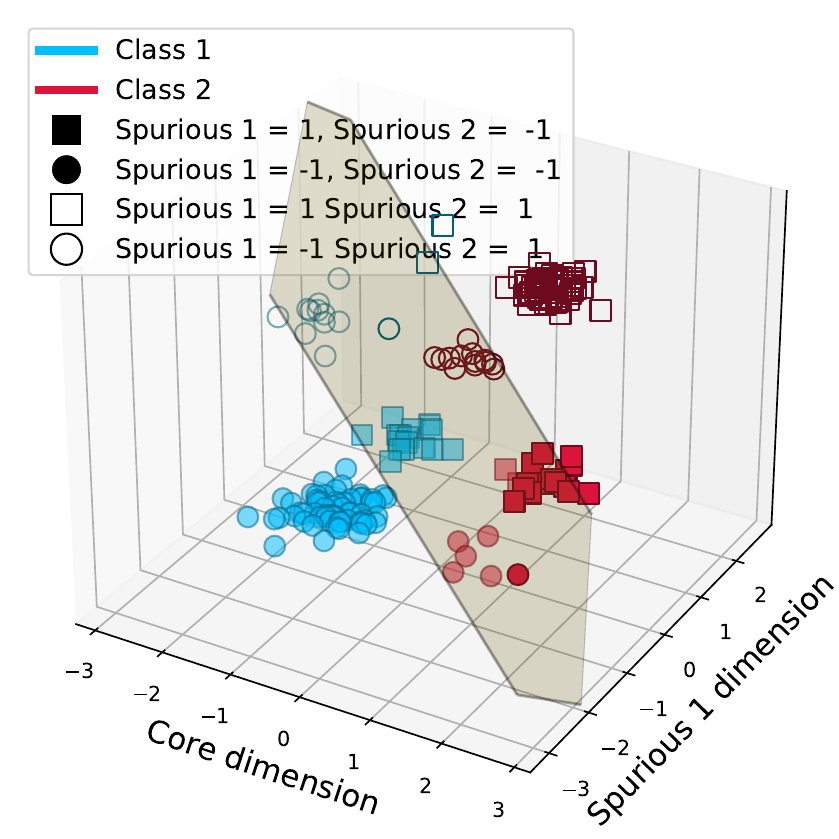}
		\caption{}
    \end{subfigure}
    \begin{subfigure}{0.3\textwidth}
		\centering
		\includegraphics[width = \linewidth]{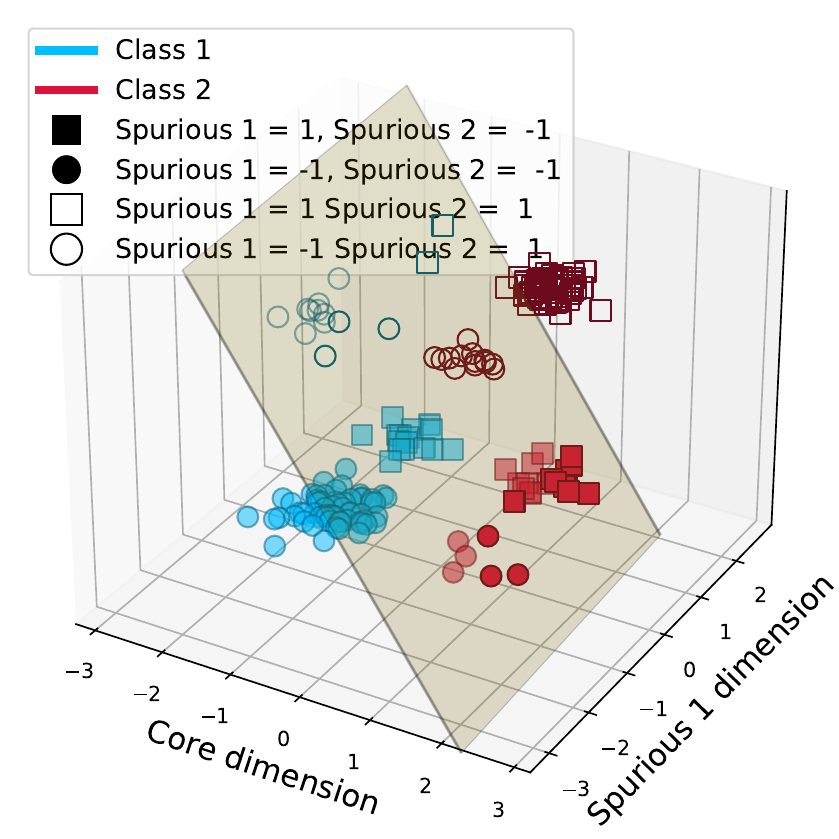}
		\caption{}
    \end{subfigure}
\caption{(a) If all spurious attributes in a dataset are known, they can be utilized to fit a classifier that captures the essential attributes. (b) In the absence of knowledge about all spurious attributes, the model would depend on them for classification, leading to incorrect classification of minority samples. (c) If some spurious attribute is unknown (Spurious 2), the model becomes robust only to the known spurious correlations (Spurious 1), but it still underperforms on minority samples. }
\label{fig:domino_cmf}
\end{figure}

\paragraph{Setup}
\label{sec:setup}
Similar to all the works mentioned in Section~\ref{sec:baselines}, we use ResNet-50~\citep{resnet} pretrained on ImageNet~\citep{imagenet} for image classification tasks. We used random crop and random horizontal flip as data augmentation, similar to \cite{DFR}. For a fair comparison with the baselines, we did not employ any data augmentation techniques in the process of retraining the last layer of the model. For the CivilComments and MultiNLI, we use pretrained BERT~\citep{bert} and crop sentences to 220 tokens length. In EvaLS, we use the implementation of EIIL by \verb|spuco| package~\citep{spuco} for environments inference on the model selection set with 20000 steps, SGD optimizer, and learning rate $10^{-2}$ for all datasets.

Model selection and hyper-parameter fine-tuning are done according to the worst environment (or group if annotations are assumed to be available) accuracy on the validation set. For each dataset, we assess the performance of our model in two cases: fine-tuning the ERM classifier or retraining it. For all datasets except MultiNLI and Urbancars, retraining yielded better validation results. We report the results of our experiments in two settings: (i) EVaLS, which incorporates loss-based instance sampling for training the last layer, and environment inference for model selection. (ii) EVaLS-GL, similar to EVaLS except in using ground-truth group labels for model selection. For more details on the ERM training and last layer re-training hyperparameters refer to the Appendix.

\subsection{Results}
The results of our experiments along with the reported results on GroupDRO~\citep{GDRO}, DFR~\citep{DFR}, JTT~\citep{JTT}, ES Disagreement SELF~\citep{labonte2023towards}, and AFR~\citep{AFR} on five datasets are shown in Table~\ref{table:main}. The reported results for GroupDRO, DFR, JTT, and AFR except those for the UrbanCars are taken from \citet{AFR}. For EIIL+Group DRO, the results for Waterbirds, CelebA, and CivilComments are reported from \citet{CnC}. The results of SELF on CelebA and MultiNLI are reported from the original paper~\citep{labonte2023towards}. We report only the worst group accuracy of methods in Table~\ref{table:main}. The average group accuracies are documented in the Appendix. The Group Info column shows whether group annotation is required for training or model selection entry for each method. Methods that do not require information regarding ERM training (such as training data or checkpoints) are identified with a $star$ in the table.


Overall, our approaches outperform methods that do not require group annotations for (re)training in 2 out of 3 datasets with spurious correlations. Moreover, EVaLS-GL surpasses other methods with a similar level of group supervision on MultiNLI~\citep{MultiNLI} and achieves state-of-the-art performance among all methods on UrbanCars~\citep{whac}. Furthermore, EVaLS and EVaLS-GL, similar to DFR~\citep{DFR} and AFR~\citep{AFR}, can be applied to ERM-trained models without needing further information about their training.

The comparison between EVaLS and GroupDRO + EIIL indicates that when environments are available instead of groups, our method, which uses environments solely for model selection and utilizes loss-based sampling, is more effective than GroupDRO, a potent invariant learning method.
 
Regarding the UrbanCars, which contains an un-annotated spurious attribute, \citet{whac} has shown that shortcut mitigation methods often struggle to address multiple shortcuts simultaneously. 
Notably, techniques such as DFR~\citep{DFR}  
and GDRO~\citep{GDRO}
which are designed to reduce reliance on a specific shortcut feature, fail to make the model robust to unknown shortcuts effectively. In contrast, our experiments suggest that annotation-free methods can mitigate the impact of both labeled and unlabeled shortcut features more effectively.


Our evaluation of EVaLS is based on the spurious correlation benchmarks. This is because, in other instances of subpopulation shift, the attributes that differ across groups are not predictive of the label, thereby reducing the visibility of these attributes’ effects in the model’s final layers~\citep{surgical}. Consequently, EIIL, which depends on output logits for prediction, might not effectively separate the groups. This observation is further supported by our findings related to the degree of group shift between the environments inferred by EIIL for each class in the CivilComments and MultiNLI datasets. The average group shift (defined in the Section~\ref{sec:method:D2}) in the environments of the minority class of CivilComments is only $0.8_{\pm 0.0}\%$. Also, environments associated with Classes 1 and 2 in MultiNLI show only $1.1_{\pm 0.3}\%$ and $1.9_{\pm 1.0}\%$ group shift respectively.  
More results and ablation studies can be found in the Appendix.


\begin{table}
    \centering
    \caption{Comparison of worst group accuracy across various methods, including ours, on five datasets. The Group Info column indicates if each method utilizes group labels of the training/validation data, with \checkmark\kern-0.5em\checkmark denoting that group information is employed during both stages. Bold numbers are the highest results overall, while underlined ones are the best among methods that may require group annotation only for model selection. CivilComments is class imbalanced, MultiNLI has imbalanced attributes, and the other three datasets have spurious correlations. The $\times$ sign indicates that the dataset is out of the scope of the method. Methods that do not rely on ERM training information are identified with $\star$. Mean and standard deviation are calculated over three runs.}
    \label{table:main}
    \resizebox{0.98\textwidth}{!}{
    \begin{tabular}{ccccc||cc}\toprule
    \multirow{2}{*}{Method} & Group Info & \multicolumn{5}{c}{Datasets}\\\cmidrule{3-7}
    & Train/Val & Waterbirds & CelebA & UrbanCars & CivilComments & MultiNLI\\\midrule
    GDRO~\citep{GDRO} & \checkmark/\checkmark & $91.4$ & $\boldsymbol{88.9}$ & $73.1$ & $69.9$ & $\boldsymbol{77.7}$\\
    DFR$^\star$~\citep{DFR} & \xmark/\checkmark\kern-0.5em\checkmark & $\boldsymbol{92.9_{\pm 0.2}}$ & $88.3_{\pm 1.1}$ & 
    $79.6_{\pm 2.2}$ & $\boldsymbol{70.1_{\pm 0.8}}$ & $74.7_{\pm 0.7}$\\\midrule
    GDRO + EIIL~\citep{EIIL} &\xmark/\checkmark &$77.2_{\pm 1}$ & $81.7_{\pm 0.8}$ & $76.5_{\pm 2.6}$ & $67.0_{\pm 2.4}$& $61.2_{\pm 0.5}$ \\
    JTT~\citep{JTT} & \xmark/\checkmark & $86.7$ & $81.1$ & $79.5$ & $\underline{69.3}$ & $72.6$ \\ SELF~\citep{labonte2023towards}&\xmark/\checkmark&$\underline{91.6_{\pm 1.4}}$ & $83.9_{\pm 0.9}$ & $83.2_{\pm 0.8}$ & $66.0_{\pm 1.7}$ & $70.7_{\pm 2.5}$ \\
    AFR$^\star$~\citep{AFR} & \xmark/\checkmark & $90.4_{\pm 1.1}$ & $82.0_{\pm 0.5}$ & $80.2_{\pm 2.0}$ &
    $68.7_{\pm 0.6}$ & $73.4 _{\pm 0.6}$\\
    EVaLS-GL$^\star$ (Ours) & \xmark/\checkmark & $89.4_{\pm 0.3}$ & $84.6_{\pm 1.6}$ & $\underline{\boldsymbol{83.5_{\pm 1.7}}}$ & $68.0_{\pm 0.5}$ & $\underline{75.1_{\pm 1.2}}$\\
    \midrule
    EVaLS$^\star$ (Ours) & \xmark/\xmark & $88.4_{\pm 3.1}$ & $\underline{85.3_{\pm 0.4}}$  & $82.1_{\pm 0.9}$ & $\times$ & $\times$\\
    ERM & \xmark/\xmark & $66.4_{\pm 2.3}$ & $47.4_{\pm 2.3}$ & $18.67_{\pm 2.0}$ & $61.2_{\pm 3.6}$ & $64.8_{\pm 1.9}$ \\
    \bottomrule
    \end{tabular}
    }
\end{table}

\paragraph{Mitigating Multiple Spurious Attributes}

To evaluate the performance of our method in the case of unknown spurious correlations, we train a ResNet-18 \cite{resnet} model on the \textit{Dominoes-CMF} dataset. 
We apply DFR \cite{DFR}, EVaLS-GL, and EVaLS on top of the trained ERMs to assess their ability to mitigate multiple shortcuts. We consider the style (MNIST/Fashion-MNIST) feature as the known group label, and the color as the unknown spurious attribute. We set the spurious correlation of the known attribute to $75\%$ and conduct experiments for various amount of unknown spurious correlation.
During model selection, we calculate the worst-group accuracy on the validation set considering only the label of the known shortcut, \textit{i.e.}, the lowest accuracy among the four groups based on the combination of the target label and the single known shortcut label. However, the final results on test data are based on the worst group accuracies, taking into account groups defined by the labels of both spurious attributes. The results are shown in Figure~\ref{fig:unknown_spurious}(b).
Note that EVaLS operates without using annotations for either the known or unknown spurious attributes.

\begin{figure}
\begin{subfigure}{0.35\textwidth}
		\centering
		\includegraphics[width = \linewidth]{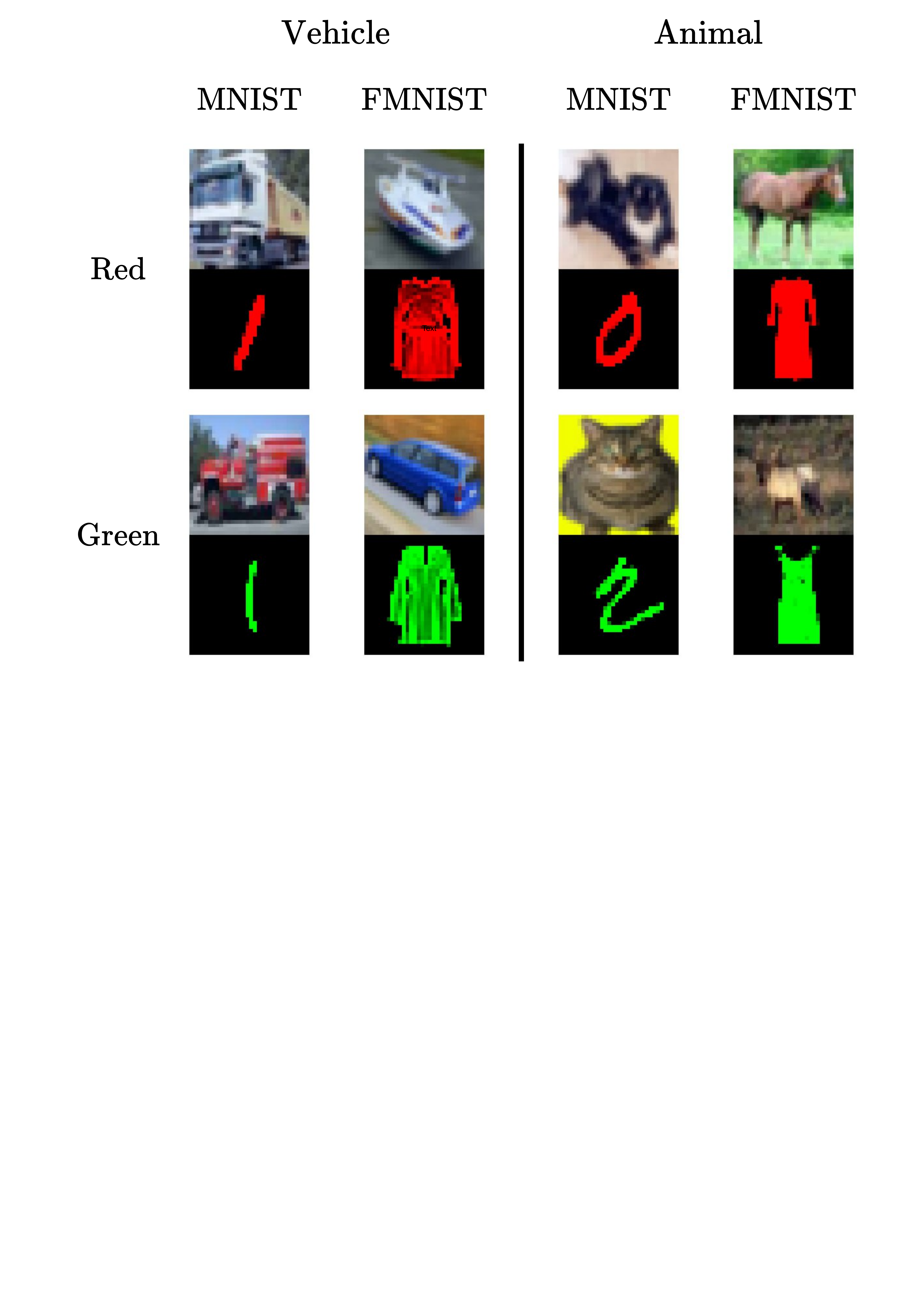}
		\caption{}
    \end{subfigure}
    \begin{subfigure}{0.64\textwidth}
		\centering
		\includegraphics[width = 0.7\linewidth]{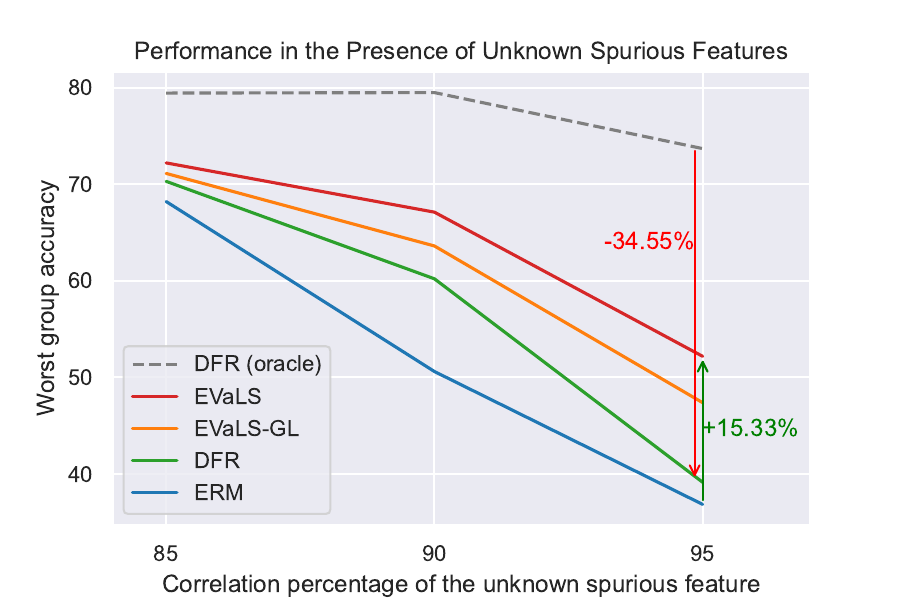}
		\caption{}
    \end{subfigure}
\caption{(a) The Dominoes-CMF dataset, which contains two spurious attributes. (b) Performance on Dominoes-CMF is measured by worst-group accuracy across varying levels of correlation between the target label and the unknown spurious attribute (color). Lower reliance on available group annotations (based on known spurious attributes, i.e., style) results in higher robustness to both attributes. The performance gap between EVaLS and EVaLS-GL with lower group supervision compared to DFR~\citep{DFR} increases with higher correlations. The oracle uses DFR~\citep{DFR} with complete group information regarding both attributes.}
\label{fig:unknown_spurious}
\vspace{-1em}
\end{figure}

Our results confirm findings by \citet{whac}, suggesting that methods using group labels mitigate reliance on the known shortcut but not necessarily on the unknown one. DFR~\citep{DFR} experiences a significant drop in performance ($34.55\%$ under $95\%$ color spurious correlation) when it relies on a single known spurious attribute for grouping, compared to the oracle that uses both attributes for grouping. EVaLS-GL reduces this issue using its loss-based sampling approach, but surprisingly EVaLS even outperforms EVaLS-GL. Combining a loss-based sampling approach for last layer training and environment-based model selection, results in a completely group-annotation-free method in a multi-shortcut setting with unknown spurious correlations, and successfully re-weights features to perform well with respect to multiple spurious attributes. It is also evident that increasing unknown spurious correlation results in a larger gap between the performance of EVaLS and EVaLS-GL compared to DFR~\citep{DFR}.

\section{Discussion}
This study presents EVaLS, a novel approach to improve robustness to spurious correlations with zero group annotation. EVaLS uses loss-based sampling to create a balanced training dataset that effectively disrupts spurious correlations and employs EIIL to infer environments for model selection. We also explore situations with multiple spurious correlations, some of which are unknown. In this context, we introduce Dominoes-CMF, a dataset in which two factors are spuriously correlated with the label, but only one is identified. Our findings suggest that EVaLS attains near-optimal worst test group accuracy on spurious correlation datasets. We also present EVaLS-GL, which needs group labels only for model selection. Our empirical tests on various datasets demonstrate that EVaLS-GL outperforms state-of-the-art methods requiring group labels during evaluation or training.

Note that this paper remains consistent with the findings of \citet{ZIN}. Our approach does not involve identifying spurious attributes without auxiliary information. Instead, the objective is to make a trained model robust against its reliance on shortcuts. Specifically, conditioning on what a trained model learns, we ascertain that both the loss value and the model’s feature space are instrumental in mitigating shortcuts.

EVaLS and EVaLS-GL may struggle with small datasets due to a low number of selected samples for the last layer training. Also, as environment inference from the last layer features is not effective for all types of subpopulation shifts, EVaLS is limited to datasets with spurious correlation. Similar to other methods in the field, EVaLS prioritizes the worst group accuracy at the cost of less average accuracy. Additionally, a notable variance has been observed in some of our experiments. 

EVaLS represents a significant advancement in the development of methods for enhancing model fairness and robustness without prior knowledge about group annotations. EVaLS could be simply applied as a plug-and-play solution on various ERM-pretrained models with unknown inherent biases to make them robust to possible spurious correlations. Future work could explore developing environment inference methods effective for other types of subpopulation shift, such as attribute and class imbalance.

\newpage

\bibliography{bibfile}

\begin{thebibliography}{41}
\providecommand{\natexlab}[1]{#1}
\providecommand{\url}[1]{\texttt{#1}}
\expandafter\ifx\csname urlstyle\endcsname\relax
  \providecommand{\doi}[1]{doi: #1}\else
  \providecommand{\doi}{doi: \begingroup \urlstyle{rm}\Url}\fi

\bibitem[Kirichenko et~al.(2023)Kirichenko, Izmailov, and Wilson]{DFR}
Polina Kirichenko, Pavel Izmailov, and Andrew~Gordon Wilson.
\newblock Last layer re-training is sufficient for robustness to spurious correlations.
\newblock In \emph{The Eleventh International Conference on Learning Representations}, 2023.
\newblock URL \url{https://openreview.net/forum?id=Zb6c8A-Fghk}.

\bibitem[LaBonte et~al.(2023)LaBonte, Muthukumar, and Kumar]{labonte2023towards}
Tyler LaBonte, Vidya Muthukumar, and Abhishek Kumar.
\newblock Towards last-layer retraining for group robustness with fewer annotations.
\newblock In \emph{Thirty-seventh Conference on Neural Information Processing Systems}, 2023.
\newblock URL \url{https://openreview.net/forum?id=kshC3NOP6h}.

\bibitem[Hashimoto et~al.(2018)Hashimoto, Srivastava, Namkoong, and Liang]{hashimoto2018fairness}
Tatsunori Hashimoto, Megha Srivastava, Hongseok Namkoong, and Percy Liang.
\newblock Fairness without demographics in repeated loss minimization.
\newblock In \emph{International Conference on Machine Learning}, pages 1929--1938. PMLR, 2018.

\bibitem[Yang et~al.(2023{\natexlab{a}})Yang, Zhang, Katabi, and Ghassemi]{yang2023change}
Yuzhe Yang, Haoran Zhang, Dina Katabi, and Marzyeh Ghassemi.
\newblock Change is hard: A closer look at subpopulation shift.
\newblock In \emph{International Conference on Machine Learning}, 2023{\natexlab{a}}.

\bibitem[Liu et~al.(2021{\natexlab{a}})Liu, Haghgoo, Chen, Raghunathan, Koh, Sagawa, Liang, and Finn]{JTT}
Evan~Z Liu, Behzad Haghgoo, Annie~S Chen, Aditi Raghunathan, Pang~Wei Koh, Shiori Sagawa, Percy Liang, and Chelsea Finn.
\newblock Just train twice: Improving group robustness without training group information.
\newblock In Marina Meila and Tong Zhang, editors, \emph{Proceedings of the 38th International Conference on Machine Learning}, volume 139 of \emph{Proceedings of Machine Learning Research}, pages 6781--6792. PMLR, 18--24 Jul 2021{\natexlab{a}}.
\newblock URL \url{https://proceedings.mlr.press/v139/liu21f.html}.

\bibitem[Yang et~al.(2023{\natexlab{b}})Yang, Gan, Dziugaite, and Mirzasoleiman]{SPARE}
Yu~Yang, Eric Gan, Gintare~Karolina Dziugaite, and Baharan Mirzasoleiman.
\newblock Identifying spurious biases early in training through the lens of simplicity bias.
\newblock \emph{ArXiv}, abs/2305.18761, 2023{\natexlab{b}}.
\newblock URL \url{https://api.semanticscholar.org/CorpusID:258967752}.

\bibitem[Sagawa et~al.(2019)Sagawa, Koh, Hashimoto, and Liang]{GDRO}
Shiori Sagawa, Pang~Wei Koh, Tatsunori~B Hashimoto, and Percy Liang.
\newblock Distributionally robust neural networks.
\newblock In \emph{International Conference on Learning Representations}, 2019.

\bibitem[Nam et~al.(2021)Nam, Kim, Lee, and Shin]{SSA}
Junhyun Nam, Jaehyung Kim, Jaeho Lee, and Jinwoo Shin.
\newblock Spread spurious attribute: Improving worst-group accuracy with spurious attribute estimation.
\newblock In \emph{International Conference on Learning Representations}, 2021.

\bibitem[Qiu et~al.(2023)Qiu, Potapczynski, Izmailov, and Wilson]{AFR}
Shikai Qiu, Andres Potapczynski, Pavel Izmailov, and Andrew~Gordon Wilson.
\newblock Simple and fast group robustness by automatic feature reweighting.
\newblock In \emph{International Conference on Machine Learning}, pages 28448--28467. PMLR, 2023.

\bibitem[Nam et~al.(2020)Nam, Cha, Ahn, Lee, and Shin]{LfF}
Junhyun Nam, Hyuntak Cha, Sungsoo Ahn, Jaeho Lee, and Jinwoo Shin.
\newblock Learning from failure: De-biasing classifier from biased classifier.
\newblock \emph{Advances in Neural Information Processing Systems}, 33:\penalty0 20673--20684, 2020.

\bibitem[Noohdani et~al.(2024)Noohdani, Hosseini, Parast, Araghi, and Baghshah]{DaC}
Fahimeh~Hosseini Noohdani, Parsa Hosseini, Aryan~Yazdan Parast, HamidReza~Yaghoubi Araghi, and Mahdieh~Soleymani Baghshah.
\newblock Decompose-and-compose: {A} compositional approach to mitigating spurious correlation.
\newblock \emph{CoRR}, abs/2402.18919, 2024.
\newblock \doi{10.48550/ARXIV.2402.18919}.
\newblock URL \url{https://doi.org/10.48550/arXiv.2402.18919}.

\bibitem[Creager et~al.(2021)Creager, Jacobsen, and Zemel]{EIIL}
Elliot Creager, Joern-Henrik Jacobsen, and Richard Zemel.
\newblock Environment inference for invariant learning.
\newblock In Marina Meila and Tong Zhang, editors, \emph{Proceedings of the 38th International Conference on Machine Learning}, volume 139 of \emph{Proceedings of Machine Learning Research}, pages 2189--2200. PMLR, 18--24 Jul 2021.
\newblock URL \url{https://proceedings.mlr.press/v139/creager21a.html}.

\bibitem[Liu et~al.(2014)Liu, Luo, Wang, and Tang]{CelebA}
Ziwei Liu, Ping Luo, Xiaogang Wang, and Xiaoou Tang.
\newblock Deep learning face attributes in the wild.
\newblock \emph{2015 IEEE International Conference on Computer Vision (ICCV)}, pages 3730--3738, 2014.
\newblock URL \url{https://api.semanticscholar.org/CorpusID:459456}.

\bibitem[Li et~al.(2023)Li, Evtimov, Gordo, Hazirbas, Hassner, Ferrer, Xu, and Ibrahim]{whac}
Zhiheng Li, Ivan Evtimov, Albert Gordo, Caner Hazirbas, Tal Hassner, Cristian~Canton Ferrer, Chenliang Xu, and Mark Ibrahim.
\newblock A whac-a-mole dilemma: Shortcuts come in multiples where mitigating one amplifies others.
\newblock In \emph{Proceedings of the IEEE/CVF Conference on Computer Vision and Pattern Recognition (CVPR)}, pages 20071--20082, June 2023.

\bibitem[Williams et~al.(2017)Williams, Nangia, and Bowman]{MultiNLI}
Adina Williams, Nikita Nangia, and Samuel~R Bowman.
\newblock A broad-coverage challenge corpus for sentence understanding through inference.
\newblock \emph{arXiv preprint arXiv:1704.05426}, 2017.

\bibitem[Borkan et~al.(2019)Borkan, Dixon, Sorensen, Thain, and Vasserman]{CivilComments}
Daniel Borkan, Lucas Dixon, Jeffrey Sorensen, Nithum Thain, and Lucy Vasserman.
\newblock Nuanced metrics for measuring unintended bias with real data for text classification.
\newblock In \emph{Companion proceedings of the 2019 world wide web conference}, pages 491--500, 2019.

\bibitem[Pagliardini et~al.(2022{\natexlab{a}})Pagliardini, Jaggi, Fleuret, and Karimireddy]{AtD}
Matteo Pagliardini, Martin Jaggi, Fran{\c{c}}ois Fleuret, and Sai~Praneeth Karimireddy.
\newblock Agree to disagree: Diversity through disagreement for better transferability.
\newblock In \emph{The Eleventh International Conference on Learning Representations}, 2022{\natexlab{a}}.

\bibitem[Arjovsky et~al.(2020)Arjovsky, Bottou, Gulrajani, and Lopez-Paz]{IRM}
Martin Arjovsky, Léon Bottou, Ishaan Gulrajani, and David Lopez-Paz.
\newblock Invariant risk minimization, 2020.

\bibitem[Krizhevsky and Hinton(2009)]{cifar}
Alex Krizhevsky and Geoffrey Hinton.
\newblock Learning multiple layers of features from tiny images.
\newblock Technical Report~0, University of Toronto, Toronto, Ontario, 2009.
\newblock URL \url{https://www.cs.toronto.edu/~kriz/learning-features-2009-TR.pdf}.

\bibitem[Deng(2012)]{mnist}
Li~Deng.
\newblock The mnist database of handwritten digit images for machine learning research.
\newblock \emph{IEEE Signal Processing Magazine}, 29\penalty0 (6):\penalty0 141--142, 2012.

\bibitem[Xiao et~al.(2017)Xiao, Rasul, and Vollgraf]{fashion_mnist}
Han Xiao, Kashif Rasul, and Roland Vollgraf.
\newblock Fashion-mnist: a novel image dataset for benchmarking machine learning algorithms, 2017.
\newblock URL \url{http://arxiv.org/abs/1708.07747}.
\newblock cite arxiv:1708.07747Comment: Dataset is freely available at https://github.com/zalandoresearch/fashion-mnist Benchmark is available at http://fashion-mnist.s3-website.eu-central-1.amazonaws.com/.

\bibitem[Liu et~al.(2021{\natexlab{b}})Liu, Hu, Cui, Li, and Shen]{HRM}
Jiashuo Liu, Zheyuan Hu, Peng Cui, Bo~Li, and Zheyan Shen.
\newblock Heterogeneous risk minimization.
\newblock In Marina Meila and Tong Zhang, editors, \emph{Proceedings of the 38th International Conference on Machine Learning}, volume 139 of \emph{Proceedings of Machine Learning Research}, pages 6804--6814. PMLR, 18--24 Jul 2021{\natexlab{b}}.
\newblock URL \url{https://proceedings.mlr.press/v139/liu21h.html}.

\bibitem[Sohoni et~al.(2020)Sohoni, Dunnmon, Angus, Gu, and R{\'e}]{GEORGE}
Nimit Sohoni, Jared Dunnmon, Geoffrey Angus, Albert Gu, and Christopher R{\'e}.
\newblock No subclass left behind: Fine-grained robustness in coarse-grained classification problems.
\newblock \emph{Advances in Neural Information Processing Systems}, 33:\penalty0 19339--19352, 2020.

\bibitem[Zhang et~al.(2021)Zhang, Sohoni, Zhang, Finn, and R{\'e}]{CnC}
Michael Zhang, Nimit~Sharad Sohoni, Hongyang~R. Zhang, Chelsea Finn, and Christopher R{\'e}.
\newblock Correct-n-contrast: A contrastive approach for improving robustness to spurious correlations.
\newblock In \emph{NeurIPS 2021 Workshop on Distribution Shifts: Connecting Methods and Applications}, 2021.
\newblock URL \url{https://openreview.net/forum?id=Q41kl_DwS3Y}.

\bibitem[He et~al.(2016)He, Zhang, Ren, and Sun]{resnet}
Kaiming He, Xiangyu Zhang, Shaoqing Ren, and Jian Sun.
\newblock Deep residual learning for image recognition.
\newblock In \emph{2016 IEEE Conference on Computer Vision and Pattern Recognition (CVPR)}, pages 770--778, 2016.
\newblock \doi{10.1109/CVPR.2016.90}.

\bibitem[Russakovsky et~al.(2015)Russakovsky, Deng, Su, Krause, Satheesh, Ma, Huang, Karpathy, Khosla, Bernstein, et~al.]{imagenet}
Olga Russakovsky, Jia Deng, Hao Su, Jonathan Krause, Sanjeev Satheesh, Sean Ma, Zhiheng Huang, Andrej Karpathy, Aditya Khosla, Michael Bernstein, et~al.
\newblock Imagenet large scale visual recognition challenge.
\newblock \emph{International journal of computer vision}, 115:\penalty0 211--252, 2015.

\bibitem[Devlin et~al.(2019)Devlin, Chang, Lee, and Toutanova]{bert}
Jacob Devlin, Ming-Wei Chang, Kenton Lee, and Kristina Toutanova.
\newblock {BERT}: Pre-training of deep bidirectional transformers for language understanding.
\newblock In Jill Burstein, Christy Doran, and Thamar Solorio, editors, \emph{Proceedings of the 2019 Conference of the North {A}merican Chapter of the Association for Computational Linguistics: Human Language Technologies, Volume 1 (Long and Short Papers)}, pages 4171--4186, Minneapolis, Minnesota, June 2019. Association for Computational Linguistics.
\newblock \doi{10.18653/v1/N19-1423}.
\newblock URL \url{https://aclanthology.org/N19-1423}.

\bibitem[Joshi et~al.(2023)Joshi, Yang, Xue, Yang, and Mirzasoleiman]{spuco}
Siddharth Joshi, Yu~Yang, Yihao Xue, Wenhan Yang, and Baharan Mirzasoleiman.
\newblock Towards mitigating spurious correlations in the wild: A benchmark \& a more realistic dataset.
\newblock \emph{ArXiv}, abs/2306.11957, 2023.
\newblock URL \url{https://api.semanticscholar.org/CorpusID:259211935}.

\bibitem[Lee et~al.(2023)Lee, Chen, Tajwar, Kumar, Yao, Liang, and Finn]{surgical}
Yoonho Lee, Annie~S Chen, Fahim Tajwar, Ananya Kumar, Huaxiu Yao, Percy Liang, and Chelsea Finn.
\newblock Surgical fine-tuning improves adaptation to distribution shifts.
\newblock In \emph{The Eleventh International Conference on Learning Representations}, 2023.
\newblock URL \url{https://openreview.net/forum?id=APuPRxjHvZ}.

\bibitem[Lin et~al.(2022)Lin, Zhu, Tan, and Cui]{ZIN}
Yong Lin, Shengyu Zhu, Lu~Tan, and Peng Cui.
\newblock Zin: When and how to learn invariance without environment partition?
\newblock In S.~Koyejo, S.~Mohamed, A.~Agarwal, D.~Belgrave, K.~Cho, and A.~Oh, editors, \emph{Advances in Neural Information Processing Systems}, volume~35, pages 24529--24542. Curran Associates, Inc., 2022.
\newblock URL \url{https://proceedings.neurips.cc/paper_files/paper/2022/file/9b77f07301b1ef1fe810aae96c12cb7b-Paper-Conference.pdf}.

\bibitem[Shen et~al.(2021)Shen, Liu, He, Zhang, Xu, Yu, and Cui]{OoDSurvey}
Zheyan Shen, Jiashuo Liu, Yue He, Xingxuan Zhang, Renzhe Xu, Han Yu, and Peng Cui.
\newblock Towards out-of-distribution generalization: A survey.
\newblock \emph{ArXiv}, abs/2108.13624, 2021.
\newblock URL \url{https://api.semanticscholar.org/CorpusID:237364121}.

\bibitem[Seo et~al.(2022)Seo, Lee, and Han]{seo2022information}
Seonguk Seo, Joon-Young Lee, and Bohyung Han.
\newblock Information-theoretic bias reduction via causal view of spurious correlation.
\newblock In \emph{Proceedings of the AAAI Conference on Artificial Intelligence}, volume~36, pages 2180--2188, 2022.

\bibitem[Mao et~al.(2023)Mao, Deng, Yao, Ye, Kawaguchi, and Zou]{mao2023last}
Yuzhen Mao, Zhun Deng, Huaxiu Yao, Ting Ye, Kenji Kawaguchi, and James Zou.
\newblock Last-layer fairness fine-tuning is simple and effective for neural networks.
\newblock \emph{arXiv preprint arXiv:2304.03935}, 2023.

\bibitem[Krueger et~al.(2021)Krueger, Caballero, Jacobsen, Zhang, Binas, Zhang, Priol, and Courville]{REx}
David Krueger, Ethan Caballero, Joern-Henrik Jacobsen, Amy Zhang, Jonathan Binas, Dinghuai Zhang, Remi~Le Priol, and Aaron Courville.
\newblock Out-of-distribution generalization via risk extrapolation (rex).
\newblock In Marina Meila and Tong Zhang, editors, \emph{Proceedings of the 38th International Conference on Machine Learning}, volume 139 of \emph{Proceedings of Machine Learning Research}, pages 5815--5826. PMLR, 18--24 Jul 2021.
\newblock URL \url{https://proceedings.mlr.press/v139/krueger21a.html}.

\bibitem[Rame et~al.(2022)Rame, Dancette, and Cord]{rame2022fishr}
Alexandre Rame, Corentin Dancette, and Matthieu Cord.
\newblock Fishr: Invariant gradient variances for out-of-distribution generalization.
\newblock In \emph{International Conference on Machine Learning}, pages 18347--18377. PMLR, 2022.

\bibitem[Ahmed et~al.(2021)Ahmed, Bengio, van Seijen, and Courville]{PGI}
Faruk Ahmed, Yoshua Bengio, Harm van Seijen, and Aaron Courville.
\newblock Systematic generalisation with group invariant predictions.
\newblock In \emph{International Conference on Learning Representations}, 2021.
\newblock URL \url{https://openreview.net/forum?id=b9PoimzZFJ}.

\bibitem[Ghaznavi et~al.(2023)Ghaznavi, Asadollahzadeh, Araghi, Noohdani, Rohban, and Baghshah]{LFR}
Mahdi Ghaznavi, Hesam Asadollahzadeh, HamidReza~Yaghoubi Araghi, Fahimeh~Hosseini Noohdani, Mohammad~Hossein Rohban, and Mahdieh~Soleymani Baghshah.
\newblock Annotation-free group robustness via loss-based resampling.
\newblock \emph{CoRR}, abs/2312.04893, 2023.
\newblock \doi{10.48550/ARXIV.2312.04893}.
\newblock URL \url{https://doi.org/10.48550/arXiv.2312.04893}.

\bibitem[Pagliardini et~al.(2022{\natexlab{b}})Pagliardini, Jaggi, Fleuret, and Karimireddy]{pagliardini2022agree}
Matteo Pagliardini, Martin Jaggi, Fran{\c{c}}ois Fleuret, and Sai~Praneeth Karimireddy.
\newblock Agree to disagree: Diversity through disagreement for better transferability.
\newblock \emph{arXiv preprint}, arXiv:2202.04414, 2022{\natexlab{b}}.

\bibitem[Shah et~al.(2020)Shah, Tamuly, Raghunathan, Jain, and Netrapalli]{shah2020pitfalls}
Harshay Shah, Kaustav Tamuly, Aditi Raghunathan, Prateek Jain, and Praneeth Netrapalli.
\newblock The pitfalls of simplicity bias in neural networks.
\newblock \emph{Advances in Neural Information Processing Systems}, 33:\penalty0 9573--9585, 2020.

\bibitem[Koh et~al.(2021)Koh, Sagawa, Marklund, Xie, Zhang, Balsubramani, Hu, Yasunaga, Phillips, Gao, Lee, David, Stavness, Guo, Earnshaw, Haque, Beery, Leskovec, Kundaje, Pierson, Levine, Finn, and Liang]{wilds}
Pang~Wei Koh, Shiori Sagawa, Henrik Marklund, Sang~Michael Xie, Marvin Zhang, Akshay Balsubramani, Weihua Hu, Michihiro Yasunaga, Richard~Lanas Phillips, Irena Gao, Tony Lee, Etienne David, Ian Stavness, Wei Guo, Berton Earnshaw, Imran Haque, Sara~M Beery, Jure Leskovec, Anshul Kundaje, Emma Pierson, Sergey Levine, Chelsea Finn, and Percy Liang.
\newblock Wilds: A benchmark of in-the-wild distribution shifts.
\newblock In Marina Meila and Tong Zhang, editors, \emph{Proceedings of the 38th International Conference on Machine Learning}, volume 139 of \emph{Proceedings of Machine Learning Research}, pages 5637--5664. PMLR, 18--24 Jul 2021.
\newblock URL \url{https://proceedings.mlr.press/v139/koh21a.html}.

\bibitem[Beery et~al.(2018)Beery, Van~Horn, and Perona]{Beery_2018_ECCV}
Sara Beery, Grant Van~Horn, and Pietro Perona.
\newblock Recognition in terra incognita.
\newblock In \emph{Proceedings of the European Conference on Computer Vision (ECCV)}, September 2018.

\end{thebibliography}

\newpage
\appendix
\section{Related Work}\label{sec:related_work}

Robustness to spurious correlation is a critical concern across various machine learning subfields. It is a form of out-of-distribution generalization~\citep{OoDSurvey} where the distribution shift arises from the disproportionate representation of minority groups—those instances that are devoid of the correlated spurious patterns associated with their labels~\citep{yang2023change}. The issue of spurious correlation also intersects with the discourse on fairness in machine learning~\citep{seo2022information, mao2023last}.

Past studies have proposed a range of strategies to mitigate the models’ reliance on spurious correlation. Broadly speaking, these methods can be categorized according to the degree of supervision they require regarding group labels.

Invariant learning (IL) methods ~\citep{IRM, REx, rame2022fishr} operate under the assumption of having access to a collection of environments that comprise group shift. By imposing invariant conditions on these environments, IL methods strive to create classifiers robust against group-sensitive features. IRM~\citep{IRM} is designed to learn a feature extractor, which, when utilized, guarantees the existence of a classifier that would be optimal in all training environments.
VREx~\citep{REx} aims to decrease the risk variance among different training environments.
PGI~\citep{PGI} works by minimizing the distance between the expected softmax distribution of labels, conditioned on inputs across both majority and minority environments. Lastly, Fishr~\citep{rame2022fishr} focuses on bringing the variance of risk gradients closer together across different training environments. For scenarios that the environments are not available, environment inference methods~\citep{EIIL, HRM} are used to obtain a set of environments. \citet{EIIL} introduce environment inference for invariant learning (EIIL), which tries to partition samples into two groups such that the objective of IRM~\citep{IRM} is maximized. HRM~\citep{HRM} aims to optimize both an environment inference module and an invariant prediction module jointly, with the goal of achieving an invariant predictor.

When group annotations are accessible, various methods leverage this information to equalize the impact of different groups on the model’s loss. The Group Distributionally Robust Optimization (GDRO) approach~\citep{GDRO}, for instance, focuses on optimizing the loss for the worst-performing group during training. \citet{DFR} has shown that models can still learn and extract core data features even in the presence high spurious correlation. Consequently, They suggest that retraining just the last layer of a model initially trained with Empirical Risk Minimization (ERM) can effectively reduce reliance on spurious correlation for predicting class labels. This method, termed Deep Feature Re-weighting (DFR), has been validated as not only highly effective but also significantly more efficient than earlier techniques that necessitated retraining the full model~\citep{SSA, GDRO}. However, availability of group annotations is considered a serious restrictive assumption.

Several recent studies have endeavored to enhance model robustness against spurious correlation, even in the absence of group annotations~\citep{JTT, CnC, AFR, labonte2023towards, SPARE, LFR}. \citet{JTT} introduce a two-stage method that involves training a model using ERM for a number of epochs before retraining it to give more weight to misclassified samples. The study by \citet{CnC} employs the same two-stage training process, but with a twist for the second stage: they utilize contrastive methods. The goal is to bring samples from the same class but with divergent predictions closer in the feature space, while simultaneously increasing the separation between samples from different classes that have similar predictions. Another method, known as automatic feature reweighting (AFR)~\citep{AFR}, reweights the last layer of an ERM-pretrained model to favor samples that the original model was less accurate on. \citet{labonte2023towards} refine the last layer of an ERM-trained model through class-balanced finetuning, identifying challenging data points by comparing the classifier’s predictions with those of an early-stopped version. While these methods have significantly reduced the reliance on group annotations, they still required for validation and model selection. This remains a constraint, particularly when the spurious correlation is completely unknown.

To make a trained model robust to subpopulation shifts with zero group annotations, \citet{labonte2023towards} have recently demonstrated that class-balanced retraining of a model pretrained with ERM can effectively improve the worst-group accuracy (WGA) for certain datasets. While this method effectively reduces the impact of class imbalance, it fails in datasets with spurious correlations.

\section{Environment Inference for Invariant Learning}
Consider the training dataset $\mathcal{D}^\text{Tr}=\{(x^{(i)}, y^{(i)}) | x^{(i)}\in \mathcal{X}, y^{(i)}\in \mathcal{Y}\}$, where $\mathcal{X}$ and $\mathcal{Y}$ represent the input and output spaces, respectively. This dataset can be partitioned into different environments $\mathcal{E}^{tr} = \{e_1, ..., e_n\}$, such that for any $i\neq j$, the data distribution in $e_i$ and $e_j$ differs. The objective of invariant learning is to train a predictor that performs consistently across all environments in $\mathcal{E}^{tr}$. Under certain conditions, this predictor is also expected to perform well on $e^{tst}$, a test environment with a distribution distinct from the training data. Invariant Risk Minimization (IRM)~\citep{IRM} approaches this problem by learning a feature extractor $\Phi(.)$ such that a classifier $\omega(.)$ exists, where $\omega\circ\Phi(.)$ performs consistently across all training environments. The practical implementation of the IRM objective is to minimize
\begin{equation}
    \label{eq:irmv1}\sum_{e\in\mathcal{E}^{tr}}R^e(\Phi)+\lambda||\nabla_{\bar{\omega}}R^e(\bar{\omega}\circ\Phi)||^2,
\end{equation}

where $\bar{\omega}$ is a constant scalar with a value of $1.0$, $\lambda$ is a hyperparameter, and $R^e(f) = \mathbb{E}_{(x,y)\sim p_e}[l(f(x),y)]$ is referred to as the risk on environment $e$.

In real-world scenarios, training environments might not always be available. To address this, Environment Inference for Invariant Learning (EIIL)~\citep{EIIL} partitions samples into two environments in a way that maximizes the objective in Eq~\ref{eq:irmv1}. 

During the training phase, the EIIL algorithm replaces the hard assignment of environments to samples with a soft assignment $\mathbf{q}_i(e) = p(e|(x^{(i)}, y^{(i)}))$, where $\mathbf{q}_i$ is learnable. Consequently, the relaxed version of the risk function is defined as $\tilde{R}^e(\Phi) =\frac{1}{N} \sum_{i}^{N} \mathbf{q}_i(e)[l(\Phi(x^{(i)}),y^{(i)})]$. Given a model $\Phi$ that has been trained with ERM on the dataset, EIIL optimizes 
\begin{equation}
\mathbf{q}^*=\argmax_{\mathbf{q}} ||\nabla_{\bar{\omega}}\tilde{R}^e(\bar{\omega}\circ\Phi)||. 
\end{equation}

As discussed in ~\citet{EIIL}, using a biased base model $\Phi$ could lead to environments exhibiting varying degrees of spurious correlation. During the inference phase, the soft assignment is converted to a hard assignment. The average group shift between the inferred environments using EIIL is illustrated in Table~\ref{tab:env_group_shift}.

\begin{table}
    \centering
    \caption{The average and variation percentage (\%)(across 3 seeds) of group shift between the inferred environments using EIIL~\citep{EIIL} for each class, which is the absolute difference between the proportion of a minority group in the two environments of a class. Higher group shift indicates better separation of environments. In most cases, a significant group shift is observed between the inferred environments.}
    \begin{tabular}{cccc}\toprule
         \multirow{2}{*}{Class No.} & \multicolumn{3}{c}{Dataset}\\\cmidrule{2-4}
         & Waterbirds & CelebA & UrbanCars\\\midrule
         $0$ & $16.6_{\pm 0.7}$ & $3.6_{\pm 0.2}$ & $17.7_{\pm 1.2}, 23.5_{\pm 0.1}$, $62.1_{\pm 1.9}$ \\
         $1$ & $50.5_{\pm 0.3}$ & $14.1_{\pm 0.9}$ & $40.7_{\pm 7.9},13.8_{\pm 0.1} , 19.2_{\pm 3.9}$ \\
         \bottomrule
    \end{tabular}
    \label{tab:env_group_shift}
\end{table}

\newpage

\section{Algorithm}

\begin{algorithm}
\caption{EVaLS}\label{alg:loss_based_sampling}
\begin{algorithmic}[1]
\State \textbf{Input:} Held-out dataset $\mathcal{D}^\text{Val}$, ERM-trained model $f_{\text{ERM}}$, maximum $k$ value $k_\text{max}$
\State \textbf{Output:} Optimal number of samples $k^*$, best model $f^*$, best performance $\text{wea}^*$

\State $(\mathcal{D}^\text{LL}, \mathcal{D}^\text{MS}) \gets \text{splitDataset}(\mathcal{D}^\text{Val})$ \Comment{Split the held-out dataset}
\State $\text{Envs}[y] \gets \text{inferEnvs}(\mathcal{D}^{\text{MS}})[y] \quad \forall y \in \mathcal{Y} $ \Comment{Infer environments from $\mathcal{D}^\text{MS}$}
\State $\text{sortedSamples}[y] \gets \text{sortByLoss}(f_{\text{ERM}}, \mathcal{D}^\text{LL}[y]) \quad \forall y \in \mathcal{Y} $ \Comment{Sort $\mathcal{D}^\text{LL}$ samples by their loss} 

\State Initialize $\text{wea}^* \gets 0$, $k^* \gets 0$, $f^* \gets \text{None}$

\For{$k = 1$ to $k_\text{max}$}
    \State $\text{highLossSamples}[y] \gets \text{sortedSamples}[y][:k] \quad \forall y \in \mathcal{Y}$ \Comment{Select top-$k$ high-loss samples}
    \State $\text{lowLossSamples}[y] \gets \text{sortedSamples}[y][-k:] \quad \forall y \in \mathcal{Y}$ \Comment{Select top-$k$ low-loss samples}
    \State $\mathcal{D}^\text{Bal} \gets \{\text{highLossSamples}, \text{lowLossSamples}\}$ \Comment{Combine samples}
    \State $f \gets \text{retrainLastLayer}(\mathcal{D}^\text{Bal})$ \Comment{Retrain the last layer with combined samples}
    \State $\text{wea} \gets \text{evaluateWEA}(f, \text{Envs})$ \Comment{Evaluate the retrained model}
    \If{$\text{wea} > \text{wea}^*$}
        \State \hspace{1.5em} $\text{wea}^* \gets \text{wea}$, $f^* \gets f$, $k^* \gets k$ \Comment{Record the best configuration}
    \EndIf
\EndFor

\State \textbf{Return:} $k^*$, $\text{wea}^*$, $f^*$

\end{algorithmic}
\end{algorithm}

\section{Theoretical Analysis}\label{sec:appendix-theory}

In this section, we establish a more formal description of loss-based sampling for balanced dataset creation and then prove it. We thoroughly analyze the close relationship between the availability of the balanced dataset and the gap between spurious features of minority and majority groups. 

\subsection{Feasibility Of Loss-based Group Balancing}

Consider a binary classification problem with a cross-entropy loss function. Let logits be denoted as $L$. Because loss is a monotonic function of logits, the tails of the distribution of loss across samples are equivalent to that of the logits in each class. We assume that in feature space (output of $g_\theta$) samples from the minority and majority of a class are derived from Gaussian distributions $\mathcal{N}(h_{\text{min}}, {\Sigma}_{\text{min}})$ and $\mathcal{N}(h_{\text{maj}}, {\Sigma}_{\text{maj}})$, respectively. Before diving into the group balance problem we initially show that the distribution of minority and majority samples in the logit space (output of $h_\phi$) are Gaussian too.   


\begin{lemma}\label{lemma:gaussian} [Gaussain Distribution of Logits] 
\label{thm:4.1}
 Considering a Gaussian distribution $Z \sim \mathcal{N}(h,\Sigma)$ in feature space and $W \in \mathbb{R}^d$, 
then the distribution of logits is as follows: $L=\langle W, Z\rangle \sim \mathcal{N}\big(Wh,\,\norm{W}_{\Sigma}^2\big)$.
\end{lemma}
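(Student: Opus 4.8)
The statement is the standard fact that an affine (here, purely linear) functional of a multivariate Gaussian is a univariate Gaussian, with the variance expressed through the $\Sigma$-weighted norm $\norm{W}_\Sigma^2 = W^\top \Sigma W$. My plan is to prove it directly via the moment generating function (equivalently, the characteristic function), which is the cleanest route and avoids any appeal to density transformations or Jacobians — important here since $W$ is a single vector and $L$ is scalar, so a change-of-variables argument would be degenerate.

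\textbf{Step 1: Recall the MGF of a multivariate Gaussian.} For $Z \sim \mathcal{N}(h, \Sigma)$ with $Z \in \mathbb{R}^d$, the moment generating function is $\mathbb{E}[e^{\langle t, Z\rangle}] = \exp\!\big(\langle t, h\rangle + \tfrac12 t^\top \Sigma t\big)$ for all $t \in \mathbb{R}^d$. I would state this as a known property of Gaussian vectors.

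\textbf{Step 2: Compute the MGF of $L = \langle W, Z\rangle$.} For a scalar parameter $s \in \mathbb{R}$, write $\mathbb{E}[e^{sL}] = \mathbb{E}[e^{s\langle W, Z\rangle}] = \mathbb{E}[e^{\langle sW, Z\rangle}]$, and apply Step 1 with $t = sW$. This gives $\mathbb{E}[e^{sL}] = \exp\!\big(s\langle W, h\rangle + \tfrac12 s^2 W^\top \Sigma W\big) = \exp\!\big(s\, Wh + \tfrac12 s^2 \norm{W}_\Sigma^2\big)$, using the paper's notation $Wh := \langle W, h\rangle$ and $\norm{W}_\Sigma^2 := W^\top \Sigma W$.

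\textbf{Step 3: Identify the distribution.} This last expression is exactly the MGF of a univariate Gaussian with mean $Wh$ and variance $\norm{W}_\Sigma^2$. Since the MGF (being finite in a neighborhood of $0$, indeed on all of $\mathbb{R}$) uniquely determines the distribution, we conclude $L \sim \mathcal{N}\big(Wh, \norm{W}_\Sigma^2\big)$. I would also note in passing that $\norm{W}_\Sigma^2 \ge 0$ since $\Sigma$ is positive semidefinite, so this is a well-defined (possibly degenerate) Gaussian. There is essentially no obstacle here — the only thing to be slightly careful about is matching the paper's shorthand notation ($Wh$ for the inner product, $\norm{\cdot}_\Sigma$ for the $\Sigma$-norm) so the statement reads consistently; if one prefers to avoid MGFs, an equally short alternative is to write $Z = h + \Sigma^{1/2} \xi$ with $\xi \sim \mathcal{N}(0, I_d)$ and observe $L = Wh + \langle \Sigma^{1/2} W, \xi\rangle$ is a fixed linear combination of i.i.d. standard normals, hence Gaussian with the stated moments.
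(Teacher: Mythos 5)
Your proof is correct, and it takes a slightly different route from the paper's. The paper's proof simply invokes the closure property --- $L = W^\top Z$ is a linear combination of jointly Gaussian coordinates, hence univariate Gaussian --- and then computes the two parameters separately: $\mathbb{E}[L] = W^\top \mathbb{E}[Z] = \langle W, h\rangle$ by linearity, and $\mathrm{Var}(L) = W^\top \Sigma W = \norm{W}_\Sigma^2$ by the standard quadratic-form identity for the variance of a linear functional. Your MGF argument establishes Gaussianity and identifies both parameters in a single computation, so it is more self-contained: it does not presuppose the "linear combinations of jointly Gaussian variables are Gaussian" fact (which the paper asserts without proof), and your remark that $\norm{W}_\Sigma^2 \ge 0$ covers the degenerate positive-semidefinite case, which the paper does not address. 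Your alternative via $Z = h + \Sigma^{1/2}\xi$ is essentially the justification underlying the paper's closure claim, so either of your two routes subsumes the paper's argument; the paper's version is shorter only because it leans on the unproven closure property. Both are fully adequate for the role this lemma plays in the analysis.
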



\begin{proof}
Let \( Z \sim \mathcal{N}(h, \Sigma) \). 

Consider \(L = \langle W, Z \rangle = W^T Z\), where \(W \in \mathbb{R}^d\). L is a linear combination of jointly gaussian random variables which makes it an univariate gaussian random variable.

To find the distribution of \(L\), we need to determine its mean and variance.

1. \textbf{Mean of \(L\)}

   \[
   \mathbb{E}[L] = \mathbb{E}[\langle W, Z \rangle] = \mathbb{E}[W^T Z] = W^T \mathbb{E}[Z] = W^T h = \langle W, h \rangle.
   \]

   Therefore, the mean of \(L\) is \(W h\).

2. \textbf{Variance of \(L\)}:

   The variance of \(L\) can be computed using the properties of covariance. Recall that if $Z \sim \mathcal{N}(h, \Sigma)$, then the covariance matrix of \(Z\) is $\Sigma$.

   The variance of the linear combination \(L = W^T Z\) is given by:

   \[
   \mathrm{Var}(L) = \mathrm{Var}(W^T Z) = W^T \Sigma W = \norm{W}_{\Sigma}^2,
   \]

   where $\norm{W}_{\Sigma}$ denotes the Mahalanobis norm of \(W\).

Thus, we have proved that if \(Z \sim \mathcal{N}(h, \Sigma)\), then the logits \(L = \langle W, Z \rangle\) follow the distribution \(\mathcal{N}(W h, \norm{W}_{\Sigma}^2)\).
\end{proof}

From now on, we consider $\mathcal{N}(\mu_{\text{min}}, \sigma_{\text{min}}^2)$ and $\mathcal{N}(\mu_{\text{maj}}, \sigma_{\text{maj}}^2)$ as the distribution of minority and majority samples in logits space. 

Next, we prove the more formal version of the main proposition \ref{prop:loss-based-feasiblity}, which describes the existence of a balanced dataset, only after we define a key concept, \textit{proportional density difference} (illustrated in figure~\ref{fig:definitions}) to outline our proof.

\begin{definition}[Proportional Density Difference]\label{def:proportional-density-diff-app}
For any interval $I=(a,b]$ and a mixture distribution $\varepsilon P_1(x)+(1-\varepsilon)P_2(x)$, 
proportional density difference is defined by the difference of accumulation of two component distributions in the interval $I$ and is denoted by $\Delta_\varepsilon P_{mixture}(I)$.
$$\Delta_\varepsilon P_{mixture}(I) \stackrel{\Delta}{=}  \varepsilon P_1\big(x \in I\big)-(1-\varepsilon)P_2\big(x \in I\big)$$
\end{definition} 

\begin{definition}[Tail Proportional Density Difference] \label{def:tail-proportional-density-diff}
For a mixture distribution $\varepsilon P_1(x)+(1-\varepsilon)P_2(x)$, we define $tail_L(\alpha)$ as $\Delta_\varepsilon P_{mixture}\Big((-\infty,\alpha]\Big)$
and 
$tail_R(\beta)$ as $-\Delta_\varepsilon P_{mixture}\Big((\beta,+\infty)\Big)$.
    
\end{definition}

\begin{corollary}
    $$tail_L(\alpha)=\varepsilon F^{1}(\alpha) - (1-\varepsilon) F^{2}(\alpha)$$
    $$tail_R(\beta)= (1-\varepsilon) \big[1-F^{2}(\beta)\big] - \varepsilon \big[1-F^{1}(\beta)\big]$$
where $F^1$ and $F^2$ are CDF of two component distributions.
\end{corollary}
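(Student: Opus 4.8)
The plan is to obtain both identities by directly substituting the appropriate semi-infinite interval into Definition~\ref{def:proportional-density-diff-app} (Proportional Density Difference) and then rewriting each component probability in terms of its CDF. Since $F^1$ and $F^2$ are by definition the CDFs $F^i(t)=P_i(x\le t)$, the entire argument reduces to translating ``probability mass of a component on an interval'' into ``value of the CDF,'' so there is no genuine analytic obstacle here---only endpoint and complement bookkeeping. First I would handle the left tail. By Definition~\ref{def:tail-proportional-density-diff}, $tail_L(\alpha)=\Delta_\varepsilon P_{mixture}\big((-\infty,\alpha]\big)$, and applying Definition~\ref{def:proportional-density-diff-app} with $I=(-\infty,\alpha]$ gives
\begin{equation}
tail_L(\alpha) = \varepsilon\, P_1\big(x \in (-\infty,\alpha]\big) - (1-\varepsilon)\, P_2\big(x \in (-\infty,\alpha]\big).
\end{equation}
Because $(-\infty,\alpha]$ is exactly the event $\{x\le\alpha\}$, each component probability equals the corresponding CDF at $\alpha$, i.e.\ $P_i\big(x\in(-\infty,\alpha]\big)=F^i(\alpha)$; substituting yields $tail_L(\alpha)=\varepsilon F^1(\alpha)-(1-\varepsilon)F^2(\alpha)$, as claimed.

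Next I would treat the right tail, where the extra minus sign in the definition and the passage to a complement must be tracked with care. By Definition~\ref{def:tail-proportional-density-diff}, $tail_R(\beta)=-\Delta_\varepsilon P_{mixture}\big((\beta,+\infty)\big)$, and applying Definition~\ref{def:proportional-density-diff-app} with $I=(\beta,+\infty)$ gives
\begin{equation}
tail_R(\beta) = -\Big[\varepsilon\, P_1\big(x>\beta\big) - (1-\varepsilon)\, P_2\big(x>\beta\big)\Big].
\end{equation}
Here the interval is open on the left, so $\{x>\beta\}$ is the complement of $\{x\le\beta\}$, whence $P_i(x>\beta)=1-F^i(\beta)$. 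Substituting and distributing the outer minus sign then gives $tail_R(\beta)=(1-\varepsilon)\big[1-F^2(\beta)\big]-\varepsilon\big[1-F^1(\beta)\big]$, which is the stated formula.

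The only points genuinely requiring attention are the interval-endpoint conventions---$(-\infty,\alpha]$ is right-closed and matches the standard right-continuous CDF $F^i(\alpha)=P_i(x\le\alpha)$, while $(\beta,+\infty)$ is left-open and produces the complement $1-F^i(\beta)$---together with the sign flip built into the definition of $tail_R$. Keeping these two conventions consistent is essentially the whole content of the corollary; no estimation, continuity, or Gaussian-specific machinery is invoked, and the result holds for arbitrary component distributions with well-defined CDFs.
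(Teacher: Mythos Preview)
Your proposal is correct and follows exactly the approach implicit in the paper: the corollary is stated immediately after Definitions~\ref{def:proportional-density-diff-app} and~\ref{def:tail-proportional-density-diff} and is meant to be read as a direct unwinding of those definitions together with the identity $P_i(x\le t)=F^i(t)$. Your careful tracking of the endpoint conventions and the sign flip in $tail_R$ is more explicit than the paper's treatment, but the underlying argument is identical.
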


\begin{figure}[htbp]
    \centering
    \begin{subfigure}[b]{0.45\textwidth}
        \centering
        \includegraphics[width=\textwidth]{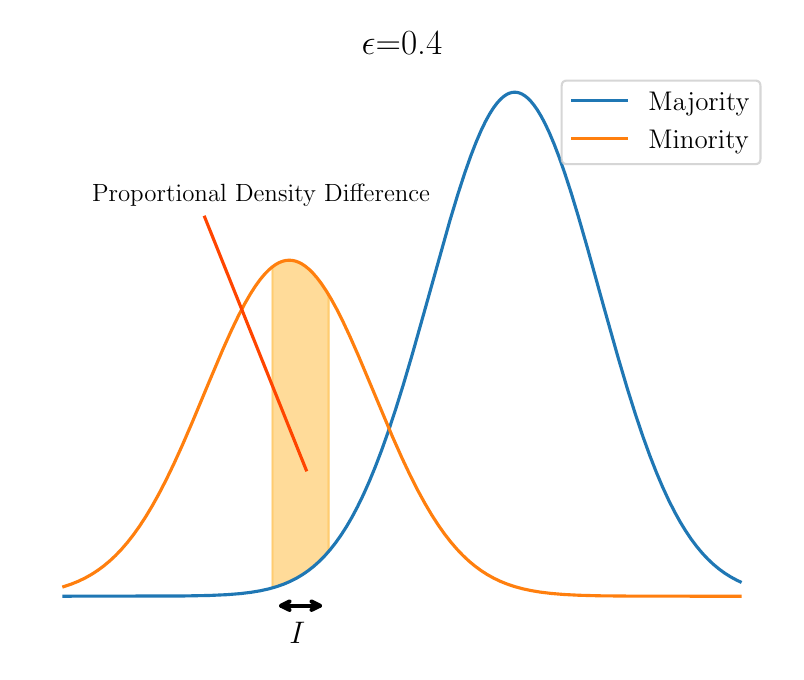}
        \caption*{(a)}
    \end{subfigure}
    \begin{subfigure}[b]{0.45\textwidth}
        \centering
        \includegraphics[width=\textwidth]{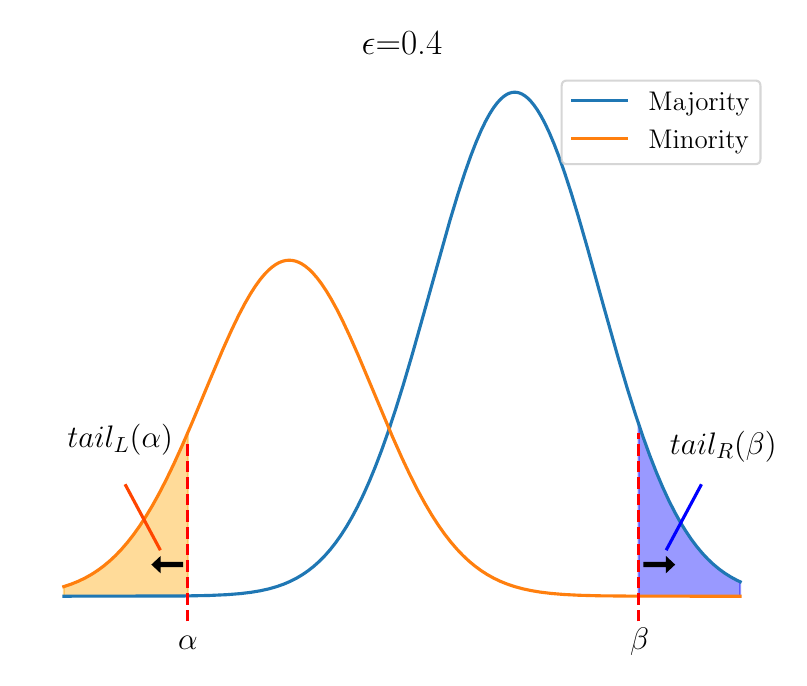}
        \caption*{(b)}
    \end{subfigure}
    \caption{(a) Illustration of proportion density difference \ref{def:proportional-density-diff-app}, (b) equation of $tail_L(\alpha)=tail_R(\beta)$} at \ref{def:tail-proportional-density-diff}.
    \label{fig:definitions}
\end{figure}
\newpage
\begin{proposition}\label{prop:loss-based-feasibility-extended}[Feasiblity Of Loss-based Group Balancing]
Suppose that $L$ is derived from the mixture of two distributions $\mathcal{N}(\mu_{\text{min}}, \sigma_{\text{min}}^2)$ and $\mathcal{N}(\mu_{\text{maj}}, \sigma_{\text{maj}}^2)$ with proportion of $\varepsilon$ and $1-\varepsilon$, respectively, where $\varepsilon\leq\frac{1}{2}$. There exists $\alpha$ and $\beta$ such that restricting $L$ to the $\alpha$-left and $\beta$-right tails of its distribution results in a group-balanced distribution if and only if (i) 
\begin{equation}
\sigma_{\text{min}}\geq\sigma_{\text{maj}},
\label{cond:sigmas}
\end{equation} 
or (ii)
\begin{equation}
tail_L(\frac{-B+\sqrt{\Delta}}{2A})>0\label{cond:loss-based-feasiblity-case1}
\end{equation}
and
\begin{equation}
    \epsilon \geq \text{sigmoid}\Bigg(-\frac{(\mu_{\text{maj}}-\mu_{\text{min}}\big)^2}{2(\sigma_{\text{maj}}^2-\sigma_{\text{min}}^2)}-\log\Big(\frac{\sigma_{\text{maj}}}{\sigma_{\text{min}}}\Big)\Bigg).\label{cond:loss-based-feasiblity-case2}
\end{equation}
where $A=\Big(\frac{1}{2\sigma_{\text{maj}} ^ 2}-\frac{1}{2\sigma_{\text{min}} ^ 2}\Big)$, $B=\Big(\frac{\mu_{\text{min}}}{\sigma_{\text{min}} ^ 2} - \frac{\mu_{\text{maj}}}{\sigma_{\text{maj}} ^ 2}\Big)$ and $\Delta=\frac{(\mu_{\text{min}} - \mu_{\text{maj}}) ^ 2}{\sigma_{\text{min}} ^ 2 \sigma_{\text{maj}} ^ 2} - 4\Big[\log\Big(\frac{\sigma_{\text{maj}}}{\sigma_{\text{min}}}\Big) + \log\Big(\frac{\epsilon}{1 - \epsilon}\Big)\Big]\Big[\frac{1}{2\sigma_{\text{maj}} ^ 2} - \frac{1}{2\sigma_{\text{min}} ^ 2}\Big]$.
\label{prop:loss-based-feasiblity-extended}
\end{proposition}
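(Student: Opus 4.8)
The plan is to follow the three-step outline the authors sketch, but with the tail functions $tail_L$ and $tail_R$ from Definition~\ref{def:tail-proportional-density-diff} as the central objects. First I would show that the balancing requirement ``restricting $L$ to the $\alpha$-left and $\beta$-right tails yields an equal mixture of the two components'' is equivalent to the single scalar equation $tail_L(\alpha) = tail_R(\beta)$. Indeed, if the left tail $(-\infty,\alpha]$ contributes mass $\varepsilon F^1(\alpha)$ from the minority and $(1-\varepsilon)F^2(\alpha)$ from the majority, and symmetrically for the right tail, then the combined retained set has equal minority and majority mass exactly when the minority surplus on the left, $\varepsilon F^1(\alpha)-(1-\varepsilon)F^2(\alpha) = tail_L(\alpha)$, equals the majority surplus on the right, $(1-\varepsilon)[1-F^2(\beta)]-\varepsilon[1-F^1(\beta)] = tail_R(\beta)$. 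So the whole problem reduces to: for which parameter regimes does the range of $tail_L$ intersect the range of $tail_R$ (restricted to $\alpha \le \beta$, which one checks is automatically satisfiable since both tails can be taken arbitrarily extreme)? WLOG I take $\mu_{\min} < \mu_{\max}$, so the minority distribution sits to the left; this is why the minority should dominate the left tail and the majority the right tail.

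Second, I would analyze $tail_R(\beta)$ as $\beta \to +\infty$. Since $\sigma_{\min}$ and $\sigma_{\max}$ may differ, the sign of $1-F^2(\beta) - \frac{\varepsilon}{1-\varepsilon}(1-F^1(\beta))$ for large $\beta$ is governed by which Gaussian tail is heavier, i.e.\ by comparing $\sigma_{\max}$ and $\sigma_{\min}$. When $\sigma_{\min} \ge \sigma_{\max}$ (case (i), condition \eqref{cond:sigmas}), the minority has the fatter right tail, but more importantly one shows directly that $tail_R$ takes all values in some interval including values that match attainable values of $tail_L$; the authors phrase this as ``$\alpha$ values trivially exist to overcome the imbalance.'' Concretely, when $\sigma_{\min}\ge\sigma_{\max}$ the function $tail_L(\alpha)$ starts at $0$ (at $\alpha=-\infty$), and because the minority density exceeds the scaled majority density near the far left, $tail_L$ becomes positive; meanwhile a suitable $\beta$ makes $tail_R(\beta)$ hit any small positive value, so a solution exists without further constraints. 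This disposes of case (i).

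Third, and this is where the real work and the exact conditions \eqref{cond:loss-based-feasiblity-case1}--\eqref{cond:loss-based-feasiblity-case2} come from: when $\sigma_{\max} > \sigma_{\min}$, the majority has the fatter tails on \emph{both} sides, so $tail_L(\alpha) \to 0^-$ as $\alpha \to -\infty$ (the majority eventually wins the left tail too), and $tail_L$ is not monotone — it can rise above zero in the middle region before descending back. The maximum of $tail_L$ over $\alpha$ is the key quantity: group balance is achievable iff this maximum is positive and large enough to be matched by $tail_R$. Differentiating $tail_L(\alpha) = \varepsilon F^1(\alpha) - (1-\varepsilon)F^2(\alpha)$ gives $\varepsilon \phi_{\min}(\alpha) - (1-\varepsilon)\phi_{\max}(\alpha)$, and setting this to zero yields the quadratic $A\alpha^2 + B\alpha + C = 0$ with $A,B$ as defined in the statement and $C$ absorbing the $\log$ terms — its discriminant is the $\Delta$ in the proposition. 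The critical point $\alpha^\ast = \frac{-B+\sqrt\Delta}{2A}$ is the location of the relevant extremum, so condition \eqref{cond:loss-based-feasiblity-case1}, $tail_L(\alpha^\ast) > 0$, says the peak of the left-tail surplus is genuinely positive. Condition \eqref{cond:loss-based-feasiblity-case2} is the requirement that $\Delta \ge 0$ so that a real critical point exists at all — rearranging $\Delta \ge 0$, i.e.\ $\frac{(\mu_{\min}-\mu_{\max})^2}{\sigma_{\min}^2\sigma_{\max}^2} \ge 4\big[\log\frac{\sigma_{\max}}{\sigma_{\min}} + \log\frac{\varepsilon}{1-\varepsilon}\big]\big[\frac{1}{2\sigma_{\max}^2}-\frac{1}{2\sigma_{\min}^2}\big]$, and solving for $\varepsilon$ (noting the sign of $\frac{1}{2\sigma_{\max}^2}-\frac{1}{2\sigma_{\min}^2}$ is negative in this case) produces exactly the sigmoid lower bound $\varepsilon \ge \mathrm{sigmoid}\big(-\frac{(\mu_{\max}-\mu_{\min})^2}{2(\sigma_{\max}^2-\sigma_{\min}^2)} - \log\frac{\sigma_{\max}}{\sigma_{\min}}\big)$. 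Finally I would check the matching on the right: once $\max_\alpha tail_L > 0$, one verifies $tail_R$ attains this value for some $\beta$ (its range at large $\beta$ approaches $0$ from above or below depending on the same variance comparison, and by continuity it sweeps through the needed value), closing the ``if'' direction; the ``only if'' direction follows because if $\Delta < 0$ or $tail_L(\alpha^\ast)\le 0$ then $tail_L$ is everywhere $\le 0$ while $tail_R$ would need to be matched with the roles reversed, which the $\mu_{\min}<\mu_{\max}$ ordering forbids.

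\textbf{Main obstacle.} The delicate part is the non-monotone case $\sigma_{\max} > \sigma_{\min}$: one must correctly identify which root of the quadratic is the maximizer of $tail_L$ (not the minimizer), carefully track the signs of $A$ and of $\log\frac{\varepsilon}{1-\varepsilon}$ (both negative here, since $A<0$ and $\varepsilon\le\frac12$), and then argue that $tail_L(\alpha^\ast)>0$ together with $\Delta\ge0$ is not just sufficient but \emph{necessary} — i.e.\ that there is no solution hiding in a different sign configuration of $(\alpha,\beta)$. Getting the ``only if'' direction airtight, and handling the boundary/edge cases $\sigma_{\max}=\sigma_{\min}$ and $\varepsilon=\frac12$ where several expressions degenerate, is where most of the care is needed; the algebra turning $\Delta\ge0$ into the sigmoid bound is routine once the setup is right.
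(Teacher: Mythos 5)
Your proposal follows essentially the same route as the paper's own proof: the reduction of the balance requirement to $tail_L(\alpha)=tail_R(\beta)$, the continuity/mean-value argument showing $tail_R$ sweeps a range of small positive values so that any positive value of $tail_L$ suffices, the easy case $\sigma_{\text{min}}\ge\sigma_{\text{maj}}$ via the sign of the leading coefficient of the quadratic coming from $\tfrac{d}{d\alpha}tail_L$, and, for $\sigma_{\text{maj}}>\sigma_{\text{min}}$, the requirement that the discriminant $\Delta$ be nonnegative (which rearranges into the sigmoid bound) together with positivity of $tail_L$ at the relevant critical point. The delicate points you flag (identifying the maximizing root when $A<0$, sign bookkeeping, and making the ``only if'' direction airtight) are precisely the spots the paper itself treats informally, so your sketch is faithful to, and no less rigorous than, the published argument.
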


\textbf{Proof outline} \quad

Our proof proceeds with three steps. 
First, we reformulate the theorem as an equality of left- and right-tail proportional distribution differences. 
In other words, we show that
the more mass the minority distribution has on one tail, the more  mass the majority distribution must have on the other tail.
Afterward, supposing $\mu_{\text{min}}<\mu_{\text{maj}}$ WLOG
,
we propose a proper range for $\beta$ values on the right tail.
We show that when
$\sigma_{\text{maj}}\leq\sigma_{\text{min}}$,
values for $\alpha$ trivially exist that can overcome the imbalance between the two distributions. In the last step, for the case in which the variance of the majority is higher than the minority, we discuss a necessary and sufficient condition for the existence of $\alpha$ and $\beta$ based on the left-tail proportional density difference using the properties of its derivative with respect to $\alpha$. 

\textbf{Step 1}\quad \textit{Reformulating the problem based on proportional distribution difference.}

We introduce a utility random variable \textit{Logit Value Tier} as $T$, which is defined as a function of a random variable $L$.

\begin{equation}
T_{\alpha,\beta} =
\begin{cases}
High & \text{if $L\geq\beta$} \\
Mid & \text{if $\alpha<L<\beta$} \\
Low & \text{if $L\leq\alpha$}
\end{cases}
\end{equation}

We can rewrite the problem in formal form as finding an $\alpha$ and $\beta$ which satisfies the following equation:
\label{eq:balanced_prob}
\begin{equation}
    P\Big(g=\minority\Big|T_{\alpha,\beta}\neq\ Mid\Big)  =  P\Big(g=\majority\Big|T_{\alpha,\beta}\neq Mid\Big)
\end{equation}

Equation \ref{cond:loss-based-feasiblity-case2} now can be rewritten to a more suitable form:
\begin{align}
    & & {} P\Big(g=\minority\Big|T_{\alpha,\beta}\neq Mid\Big) & =  P\Big(g=\majority\Big|T_{\alpha,\beta}\neq Mid\Big) \\
    &\iff & {} \frac{P\Big(T_{\alpha,\beta}\neq Mid\Big|g=\minority\Big)P\big(g=\minority\big)}{P\Big(T_{\alpha,\beta}\neq Mid\Big)} & = \frac{P\Big(T_{\alpha,\beta}\neq Mid|g=\majority\Big)P\big(g=\majority\big)}{P\Big(T_{\alpha,\beta}\neq Mid\Big)} \\
    & \iff & {} P\Big(T_{\alpha,\beta}\neq Mid\Big|g=\minority\Big)P\big(g=\minority\big) & = P\Big(T_{\alpha,\beta}\neq Mid\Big|g=\majority\Big)P\big(g=\majority\big) \\
    & \iff & {} \varepsilon P\Big(T_{\alpha,\beta}\neq Mid\Big|g=\minority\Big) & = (1-\varepsilon) P\Big(T_{\alpha,\beta}\neq Mid\Big|g=\majority\Big) \\
    & \iff &  {} \varepsilon \bigg[P\Big(T_{\alpha,\beta}=Low\Big|g=\minority\Big)+ P\Big(T_{\alpha,\beta}&=High\Big|g=\minority\Big)\bigg]  = \\
    & {} & (1-\varepsilon)\bigg[P\Big(T_{\alpha,\beta}=Low\Big| g=\majority\Big)&+  P\Big(T_{\alpha,\beta}=High\Big|g=\majority\Big)\bigg] \\
    & \iff & {} \varepsilon \bigg[P\Big(L\leq\alpha\Big|g=\minority\Big)+P\Big(L\geq \beta\Big|g=&\minority\Big)\bigg] = \\
    & {} & (1-\varepsilon) \bigg[P\Big(L\leq\alpha\Big|g=\majority\Big)&+P\Big(L\geq \beta\Big|g=\majority\Big)\bigg] \\
    & \iff & {} \varepsilon \bigg[F^{\minority}(\alpha)+\Big(1-F^{\minority}(\beta)\Big)\bigg] = (1-&\varepsilon) \bigg[F^{\majority}(\alpha)+\Big(1-F^{\majority}(\beta)\Big)\bigg] \\
    & \iff & {} \varepsilon F^{\minority}(\alpha)-(1-\varepsilon) F^{\majority}(\alpha) = (1-\varepsilon)&\Big[1-F^{\majority}(\beta)\Big]-\varepsilon\Big[1-F^{\minority}(\beta)\Big]\label{eq:only-alpha-and-beta}
\end{align}

We can see the left side of equation \ref{eq:only-alpha-and-beta} is just a function of $alpha$. The same goes for the right side of the equation which is a function of $\beta$.


Rewriting the left side of the equation as $tail_L(\alpha)$ and right side as $tail_R(\beta)$, 
the problem is now reduced to finding an $\alpha$ and $\beta$ that satisfies
\begin{equation}
tail_L(\alpha)=tail_R(\beta)\label{eq:tail_equality}
\end{equation}
which is shown in figure~\ref{fig:definitions}.

Before reaching out to step two we discuss the properties of $tail_L$ and $tail_R$ in Lemma \ref{lemma:4.2}.

\begin{lemma} $tail_L(\alpha)$ and $tail_R(\beta)$ are continuous functions and $\lim_{\alpha\to-\infty}tail_L(\alpha)=0$, $\lim_{\alpha\to+\infty}tail_L(\alpha)=2\varepsilon-1<0$
, $\lim_{\beta\to+\infty}tail_R(\beta)=0$ and $\lim_{\beta\to-\infty}tail_R(\beta)=1-2\varepsilon > 0$.
\label{lemma:4.2}
\end{lemma}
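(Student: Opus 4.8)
The plan is to work directly from the closed-form expressions for $tail_L$ and $tail_R$ established in the Corollary, namely $tail_L(\alpha)=\varepsilon F^1(\alpha)-(1-\varepsilon)F^2(\alpha)$ and $tail_R(\beta)=(1-\varepsilon)\big[1-F^2(\beta)\big]-\varepsilon\big[1-F^1(\beta)\big]$, where $F^1$ and $F^2$ denote the CDFs of the minority component $\mathcal{N}(\mu_{\text{min}},\sigma_{\text{min}}^2)$ and the majority component $\mathcal{N}(\mu_{\text{maj}},\sigma_{\text{maj}}^2)$, respectively. Since both $F^1$ and $F^2$ are Gaussian CDFs, they are continuous (indeed $C^\infty$) on all of $\mathbb{R}$, and any finite linear combination of continuous functions is continuous; hence both $tail_L$ and $tail_R$ are continuous. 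This settles the continuity claim immediately, so the remaining work is purely the evaluation of the four one-sided limits.

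For the limits I would invoke the standard boundary behavior of any CDF: $\lim_{t\to-\infty}F^i(t)=0$ and $\lim_{t\to+\infty}F^i(t)=1$ for $i\in\{1,2\}$. Substituting these into the expression for $tail_L$ gives $tail_L(\alpha)\to\varepsilon\cdot 0-(1-\varepsilon)\cdot 0=0$ as $\alpha\to-\infty$, and $tail_L(\alpha)\to\varepsilon\cdot 1-(1-\varepsilon)\cdot 1=2\varepsilon-1$ as $\alpha\to+\infty$. Likewise, for $tail_R$ one uses $1-F^i(\beta)\to 0$ as $\beta\to+\infty$ and $1-F^i(\beta)\to 1$ as $\beta\to-\infty$, yielding $tail_R(\beta)\to 0$ and $tail_R(\beta)\to(1-\varepsilon)-\varepsilon=1-2\varepsilon$, respectively. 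All four limits then follow by the algebra of limits, requiring no delicate estimates.

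The only point that needs care — and the closest thing to an obstacle in an otherwise routine argument — is the strictness of the two sign assertions $2\varepsilon-1<0$ and $1-2\varepsilon>0$, which are both equivalent to $\varepsilon<\tfrac12$, whereas the proposition only posits $\varepsilon\leq\tfrac12$. I would resolve this by noting at the outset that $\varepsilon=\tfrac12$ is the degenerate case in which the two groups are already equally represented, so the mixture is balanced with no tail restriction needed and the feasibility question is vacuous; the genuinely interesting regime, and the one under which the downstream existence argument via the Intermediate Value Theorem is run, is therefore $\varepsilon<\tfrac12$, under which both strict inequalities hold. Stating this reduction explicitly justifies the strict signs in the limits and completes the proof.
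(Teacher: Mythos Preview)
Your proof is correct and follows essentially the same approach as the paper, which dispatches the lemma in one line by appealing to the definition of the $tail$ functions and the standard properties of CDFs. Your expansion makes explicit what the paper leaves implicit, and your remark on the $\varepsilon<\tfrac12$ versus $\varepsilon\leq\tfrac12$ issue is a fair observation that the paper does not address.
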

\begin{proof}
Simply proved by the definition of $tail$ functions and properties of CDF.
\end{proof}

\textbf{Step 2}\quad \textit{Solving the equation \ref{eq:tail_equality} for simple cases.}

\begin{lemma}
$tail_R(\mu_{\text{maj}})>\frac{1}{2} - \varepsilon \geq 0$
\end{lemma}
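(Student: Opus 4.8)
\textbf{Proof plan for Lemma: $tail_R(\mu_{\text{maj}})>\frac{1}{2}-\varepsilon\geq 0$.}

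The plan is to evaluate $tail_R$ at the point $\beta=\mu_{\text{maj}}$ directly from the closed form given in the Corollary, namely $tail_R(\beta)=(1-\varepsilon)\big[1-F^{\majority}(\beta)\big]-\varepsilon\big[1-F^{\minority}(\beta)\big]$, and then bound each of the two CDF terms. The right inequality $\frac{1}{2}-\varepsilon\geq 0$ is immediate from the standing hypothesis $\varepsilon\leq\frac{1}{2}$, so the entire content is the strict left inequality.

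First I would observe that $F^{\majority}(\mu_{\text{maj}})=\frac{1}{2}$ exactly, since $\mu_{\text{maj}}$ is the mean (hence the median, by symmetry) of the Gaussian $\mathcal{N}(\mu_{\text{maj}},\sigma_{\text{maj}}^2)$; therefore $1-F^{\majority}(\mu_{\text{maj}})=\frac{1}{2}$ and the first term contributes exactly $\frac{1-\varepsilon}{2}$. For the second term I would use the assumption carried through the proof outline that $\mu_{\text{min}}<\mu_{\text{maj}}$ (taken WLOG in Step 2): this gives $F^{\minority}(\mu_{\text{maj}})>F^{\minority}(\mu_{\text{min}})=\frac{1}{2}$, since a CDF is strictly increasing at its mean for a non-degenerate Gaussian, and hence $1-F^{\minority}(\mu_{\text{maj}})<\frac{1}{2}$. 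Combining, $tail_R(\mu_{\text{maj}})=\frac{1-\varepsilon}{2}-\varepsilon\big[1-F^{\minority}(\mu_{\text{maj}})\big]>\frac{1-\varepsilon}{2}-\frac{\varepsilon}{2}=\frac{1}{2}-\varepsilon$, which is the claim; the chained $\geq 0$ then follows from $\varepsilon\leq\frac12$.

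The argument is essentially a one-line estimate, so there is no real obstacle; the only thing to be careful about is that the lemma is being applied in a regime where $\mu_{\text{min}}<\mu_{\text{maj}}$ has been fixed WLOG, so I would state that assumption explicitly at the start of the proof (or, if one wants a symmetric statement, replace $<$ by $\neq$ and note that the degenerate case $\mu_{\text{min}}=\mu_{\text{maj}}$ makes the second term exactly $\frac{\varepsilon}{2}$, giving $tail_R(\mu_{\text{maj}})=\frac12-\varepsilon$, so the strict inequality genuinely needs $\mu_{\text{min}}\neq\mu_{\text{maj}}$). I would also note that strictness of the CDF increment requires $\sigma_{\text{min}}>0$, which holds since the minority component is a genuine Gaussian. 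This lemma is presumably a stepping stone toward showing that $tail_R$ attains values in a useful range so that equation~\eqref{eq:tail_equality} can be solved, so I would close by remarking that together with Lemma~\ref{lemma:4.2} (which gives $tail_R\to 0^+$ as $\beta\to+\infty$ and $tail_R\to 1-2\varepsilon$ as $\beta\to-\infty$) and continuity, the intermediate value theorem will later supply the needed $\beta$.
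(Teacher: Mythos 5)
Your proof is correct and follows essentially the same route as the paper: evaluate $tail_R$ at $\beta=\mu_{\text{maj}}$, use $F^{\text{maj}}(\mu_{\text{maj}})=\tfrac{1}{2}$ for the first term, and bound the second term via $F^{\text{min}}(\mu_{\text{maj}})>\tfrac{1}{2}$ (the paper writes this as $\phi\big(\tfrac{\mu_{\text{maj}}-\mu_{\text{min}}}{\sigma_{\text{min}}}\big)>\tfrac{1}{2}$ under the WLOG assumption $\mu_{\text{min}}<\mu_{\text{maj}}$). Your explicit remark that strictness requires $\mu_{\text{min}}\neq\mu_{\text{maj}}$ is a useful clarification that the paper leaves implicit, but it does not change the argument.
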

\begin{proof}
\begin{align}
    tail_R(\mu_{\majority}) & = (1-\varepsilon)\Big[1-F^{\majority}(\mu_{\majority})\Big]-\varepsilon\Big[1-F^{\minority}(\mu_{\majority})\Big] \\
    & = (1-\varepsilon)\Big[1-\phi(0)\Big]-\varepsilon\Big[1-\phi\big(\frac{\mu_{\majority}-\mu_{\minority}}{\sigma_{\minority}}\big)\Big] \\
    & > \frac{(1-\varepsilon)}{2} - \varepsilon\big(1-\frac{1}{2}\big) = \frac{1-2\varepsilon}{2} = \frac{1}{2} - \varepsilon
\end{align}
\end{proof}
\begin{corollary}\label{col:right_tail}
Because $tail_R$ is continuous and $\lim_{\beta\to+\infty}tail_R(\beta)=0$, based on the mean value theorem, any value between zero and $\frac{(1-2\varepsilon)}{2}$ is obtainable by selecting a $\beta$ in $\left[ \mu_2,+\infty \right)$.

\end{corollary}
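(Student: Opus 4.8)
The plan is to apply the intermediate value theorem to $tail_R$ on the half-line $[\mu_{\text{maj}},+\infty)$, using the three facts already in hand: $tail_R$ is continuous there (Lemma~\ref{lemma:4.2}), $tail_R(\mu_{\text{maj}})>\tfrac{1}{2}-\varepsilon=\tfrac{1-2\varepsilon}{2}$ (the preceding lemma, with $\mu_2=\mu_{\text{maj}}$), and $\lim_{\beta\to+\infty}tail_R(\beta)=0$ (again Lemma~\ref{lemma:4.2}). The only wrinkle relative to a textbook application is that the domain is unbounded, so I would first truncate the infinite tail using the limit and then invoke the IVT on a genuine closed bounded interval.

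Concretely, fix any target value $v$ with $0<v\le\tfrac{1-2\varepsilon}{2}$. Since $\lim_{\beta\to+\infty}tail_R(\beta)=0<v$, the definition of the limit supplies a finite $\beta_0>\mu_{\text{maj}}$ with $tail_R(\beta_0)<v$. On the other side, the preceding lemma gives $tail_R(\mu_{\text{maj}})>\tfrac{1-2\varepsilon}{2}\ge v$. Thus $v$ lies strictly between the two values $tail_R(\beta_0)$ and $tail_R(\mu_{\text{maj}})$.

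Now restrict $tail_R$ to the closed bounded interval $[\mu_{\text{maj}},\beta_0]$, on which it is continuous. By the intermediate value theorem there exists $\beta^\ast\in[\mu_{\text{maj}},\beta_0]\subseteq[\mu_{\text{maj}},+\infty)$ with $tail_R(\beta^\ast)=v$, which is exactly the claim for this particular $v$. Since $v\in(0,\tfrac{1-2\varepsilon}{2}]$ was arbitrary, every such value is attained by a suitable $\beta$; the endpoint $v=0$ is only the limiting value approached as $\beta\to+\infty$ and need not be realized at a finite $\beta$, so I would phrase the attained range as the half-open interval accordingly (this is harmless downstream, since the main argument only needs interior values).

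I do not anticipate a genuine obstacle: the content is entirely continuity together with the two boundary/limit values, all of which are already established. The one point requiring minor care — and the step I would flag — is precisely the unbounded domain, which I handle by the cut-off $\beta_0$ so that the IVT is applied on a compact interval rather than invoked directly on $[\mu_{\text{maj}},+\infty)$. I would also note that the corollary's reference to the ``mean value theorem'' should read ``intermediate value theorem,'' since no derivative hypothesis is used.
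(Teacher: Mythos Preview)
Your proposal is correct and essentially the same approach as the paper: the corollary is stated there as an immediate consequence of the continuity and limit from Lemma~\ref{lemma:4.2} together with the preceding bound $tail_R(\mu_{\text{maj}})>\tfrac{1}{2}-\varepsilon$, invoking the intermediate value theorem (misnamed ``mean value theorem'' in the paper, as you correctly flag). Your extra care in truncating to a compact interval $[\mu_{\text{maj}},\beta_0]$ before applying the IVT, and your observation that $v=0$ is only a limiting value, make the argument more rigorous than the paper's one-line justification but do not differ from it in substance.
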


According to the previous corollary~\ref{col:right_tail} finding a positive $tail_L(\alpha)$ will satisfy our need. to find a suitable point, we employ derivatives and properties of relative PDFs to maximize $tail_L(\alpha)$ and find a positive value.

\begin{align}
\begin{split}
\frac{\mathrm{d}tail_L(\alpha)}{\mathrm{d\alpha}} & = \varepsilon f^{\minority}(\alpha)-(1-\varepsilon) f^{\majority}(\alpha) = \varepsilon f^{\majority}(\alpha)\Big[\frac{f^{\minority}(\alpha)}{f^{\majority}(\alpha)}-\frac{1-\varepsilon}{\varepsilon}\Big]
\end{split}
\end{align}

The term $[\frac{f^{\minority}(\alpha)}{f^{\majority}(\alpha)}-\frac{1-\varepsilon}{\varepsilon}]$ has the same sign with derivative of $tail_L(\alpha)$, also it's roots are critical points of $tail_L$, analyzing characteristics of $\log\frac{f^{\minority}(\alpha)}{f^{\majority}(\alpha)}$
is the key insight to find a proper $\alpha$ value.

$$\log f ^ {\minority}(\alpha) - \log f^ {\majority}(\alpha) = \log\Big(\frac{1 - \epsilon}{\epsilon}\Big)$$

$$\Rightarrow \log\Big(\frac{\sigma_{\majority}}{\sigma_{\minority}}\Big) - \log\Big(\frac{1 - \epsilon}{\epsilon}\Big) - \frac{(\alpha - \mu_{\minority}) ^ 2}{2\sigma_{\minority} ^ 2} + \frac{(\alpha - \mu_{\majority}) ^ 2}{2\sigma_{\majority} ^ 2} = 0$$

$$\Rightarrow \Big(\frac{1}{2\sigma_{\majority} ^ 2}-\frac{1}{2\sigma_{\minority} ^ 2}\Big) \alpha ^ 2 + \Big(\frac{\mu_{\minority}}{\sigma_{\minority} ^ 2} - \frac{\mu_{\majority}}{\sigma_{\majority} ^ 2}\Big) \alpha + \Big[\frac{\mu_{\majority} ^ 2}{2\sigma_{\majority} ^ 2} - \frac{\mu_{\minority} ^ 2}{2\sigma_{\minority} ^ 2} + \log\Big(\frac{\sigma_{\majority}}{\sigma_{\minority}}\Big) + \log\Big(\frac{\epsilon}{1 - \epsilon}\Big)\Big] = 0$$

Because $\lim_{\alpha\to-\infty}tail_L(\alpha)=0$ and $\lim_{\beta\to+\infty}tail_R(\beta)<0$ to have a positive $tail_L(\alpha)$, we need to have an interval which $\frac{\mathrm{d}tail_L(\alpha)}{\mathrm{d\alpha}}$ is positive. For a second degree polynomial like $ax^2+bx+c$ to have positive value, either $a\geq0$ or $\Delta>0$, in our case $a$ is $\Big(\frac{1}{\sigma_{\majority}^2}-\frac{1}{\sigma_{\minority}^2}\Big)$. if $\sigma_{\minority} \geq \sigma_{\majority}$ then $a\geq0$ and the minority CDF function will dominate the majority CDF function in the left-side tail and by choosing a negative number with big enough absolute value for alpha and $tail_L(\alpha)$ will be positive.

\begin{figure}[htbp]
    \centering
    \begin{subfigure}[b]{0.32\textwidth}
        \centering
        \includegraphics[width=\textwidth]{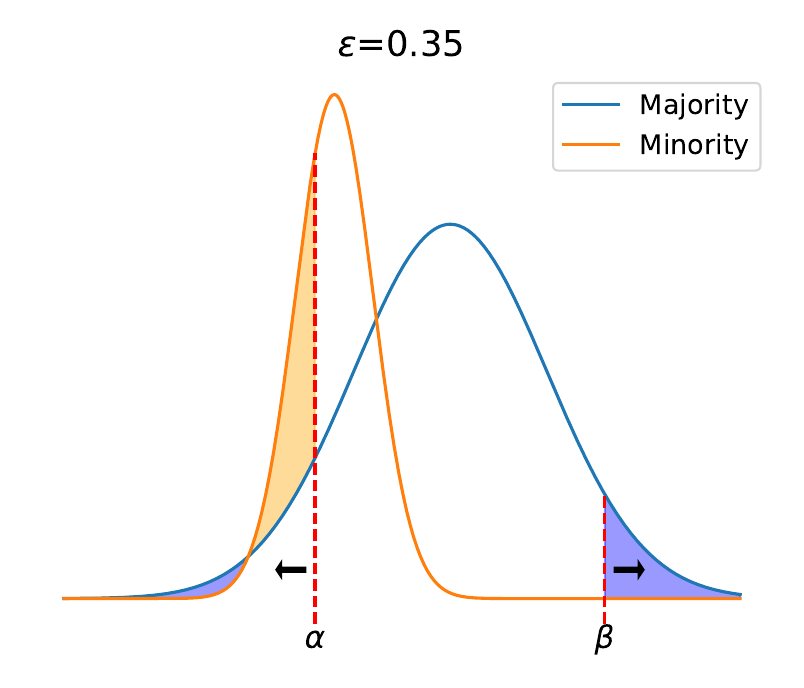}
        \caption*{(a)}
    \end{subfigure}
    \begin{subfigure}[b]{0.32\textwidth}
        \centering
        \includegraphics[width=\textwidth]{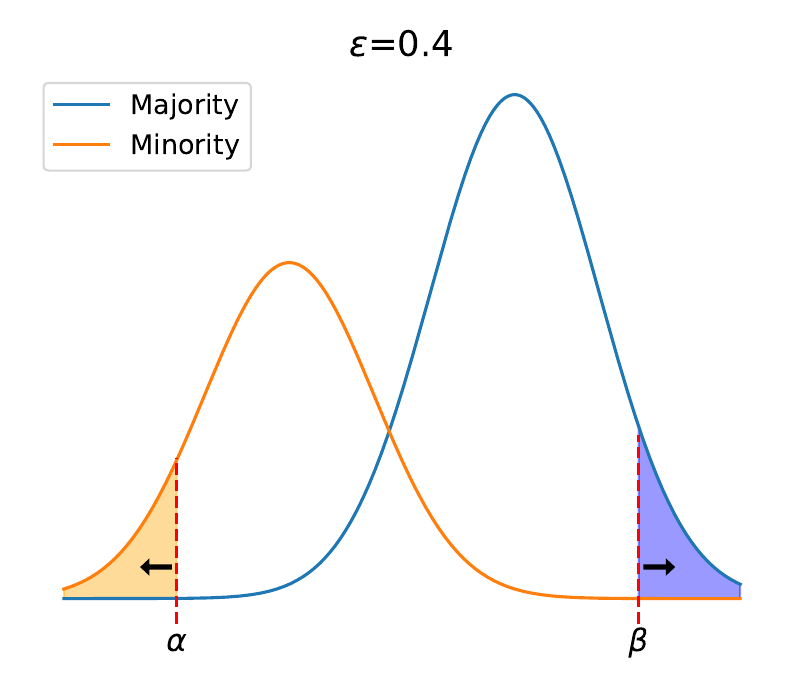}
        \caption*{(b)}
    \end{subfigure}
    \begin{subfigure}[b]{0.32\textwidth}
        \centering
        \includegraphics[width=\textwidth]{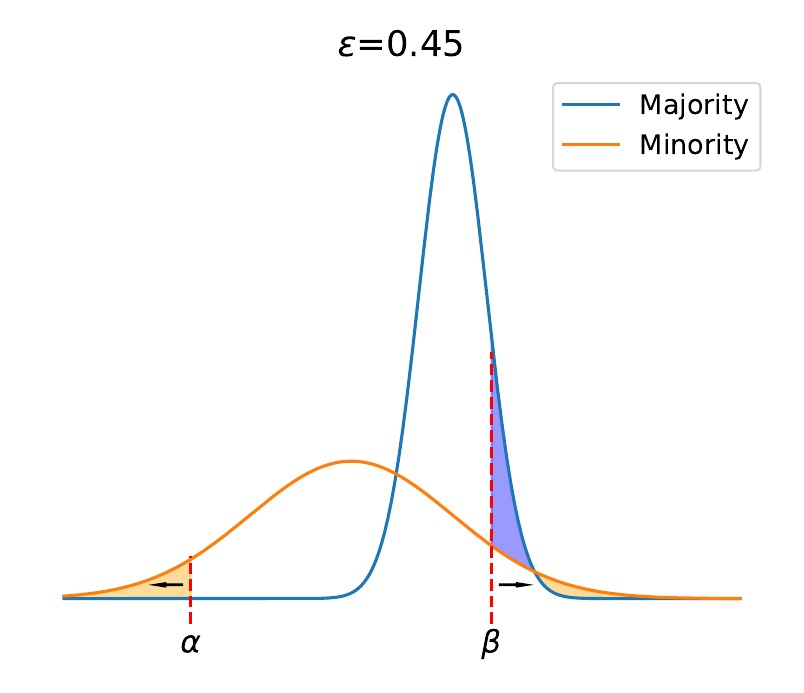}
        \caption*{(c)}
    \end{subfigure}
    \caption{Tail thresholds for three cases: (a) minority group variance is less than majority ($\sigma_{\minority}<\sigma_{\majority}$), (b) the variance of two groups are equal ($\sigma_{\minority}=\sigma_{\majority}$) and (c) the variance of the minority group is more than majority ($\sigma_{\minority}>\sigma_{\majority}$).}
    \label{fig:three_images}
\end{figure}

 \textbf{Step 3}\quad \textit{Solving equation \ref{eq:tail_equality} for special case $\sigma_{min}<\sigma_{maj}$}
 In case of $\sigma_{\minority} \leq \sigma_{\majority}$, having $\Delta>0$ is a necessary condition, also derivative of $tail_L(\alpha)$ is only positive in $(\frac{-b-\sqrt{\Delta}}{2a},\frac{-b+\sqrt{\Delta}}{2a})$ so the maximum of $tail_L$ is either in $-\infty$ or in $\frac{-b+\sqrt{\Delta}}{2a}$. Having $tail_L(\frac{-b+\sqrt{\Delta}}{2a})>0$ next to $\Delta > 0$ condition, would be the necessary and also sufficient in this case.

$$B ^ 2 = \frac{\mu_{\minority} ^ 2}{\sigma_{\minority} ^ 4} + \frac{\mu_{\majority} ^ 2}{\sigma_{\majority} ^ 4} - 2 \frac{\mu_{\majority}\mu_{\minority}}{\sigma_{\majority} ^ 2\sigma_{\minority} ^ 2}$$

$$4AC = \frac{\mu_{\minority} ^ 2}{\sigma_{\minority} ^ 4} - \frac{\mu_{\minority} ^ 2}{\sigma_{\majority} ^ 2 \sigma_{\minority} ^ 2} - \frac{\mu_{\majority} ^ 2}{\sigma_{\majority} ^ 2 \sigma_{\minority} ^ 2} + \frac{\mu_{\majority} ^ 2}{\sigma_{\majority} ^ 4} + 4\Big[\log\Big(\frac{\sigma_{\majority}}{\sigma_{\minority}}\Big) + \log\Big(\frac{\epsilon}{1 - \epsilon}\Big)\Big]\Big[\frac{1}{2\sigma_{\majority} ^ 2} - \frac{1}{2\sigma_{\minority} ^ 2}\Big]$$

$$\Delta = \frac{(\mu_{\minority} - \mu_{\majority}) ^ 2}{\sigma_{\minority} ^ 2 \sigma_{\majority} ^ 2} - 4\Big[\log\Big(\frac{\sigma_{\majority}}{\sigma_{\minority}}\Big) + \log\Big(\frac{\epsilon}{1 - \epsilon}\Big)\Big]\Big[\frac{1}{2\sigma_{\majority} ^ 2} - \frac{1}{2\sigma_{\minority} ^ 2}\Big] \ge 0$$

$$\iff (\mu_{\minority} - \mu_{\majority}) ^ 2 \ge 2 \Big[  \log\Big(\frac{1 - \epsilon}{ \epsilon}\Big)-\log\Big(\frac{\sigma_{\majority}}{\sigma_{\minority}}\Big)\Big]\Big[\sigma_{\majority} ^ 2 -\sigma_{\minority} ^ 2\Big]$$


$$\iff \epsilon \geq \text{sigmoid}\Bigg(-\frac{(\mu_{\majority}-\mu_{\minority}\big)^2}{2(\sigma_{\majority}^2-\sigma_{\minority}^2)}-\log\Big(\frac{\sigma_{\majority}}{\sigma_{\minority}}\Big)\Bigg)\label{eq:epsilon-bound}$$

Next, we investigate properties of the conditions of the proposition \ref{prop:loss-based-feasiblity-extended} in case of $\sigma_{\majority} < \sigma_{\minority}$. Schematic interpretation of these conditions is presented in figure~\ref{fig:bound-discussion}.
\begin{itemize}
    \item As equation \ref{cond:loss-based-feasiblity-case2} indicates, the minority group is not allowed to be too underrepresented. This especially has a direct relation with the difference of means. The more mean values of groups are different, the more imbalance can be mitigated through loss-based sampling. Mean value difference is especially affected by the spurious correlation, it escalates as the model relies on spurious correlation and also when the spurious features between groups are too different.

    \item On the other hand condition \ref{cond:loss-based-feasiblity-case1} is more complex and doesn't have a simple closed form, we analytically describe its behaviors by fixating the means and calculating the valid values for $\varepsilon$. As the results show in figure~\ref{fig:bound-discussion}, most of $\varepsilon$ are feasible in for $\sigma_{\minority}<\Delta\mu$
    as we can see the possible region declines with an increase of $\sigma_{\minority}$ and valid $\varepsilon$ values cease to exist.
\end{itemize}

\begin{figure}[htbp]
    \centering
    \begin{subfigure}[b]{0.42\textwidth}
        \centering
        \includegraphics[width=\textwidth]{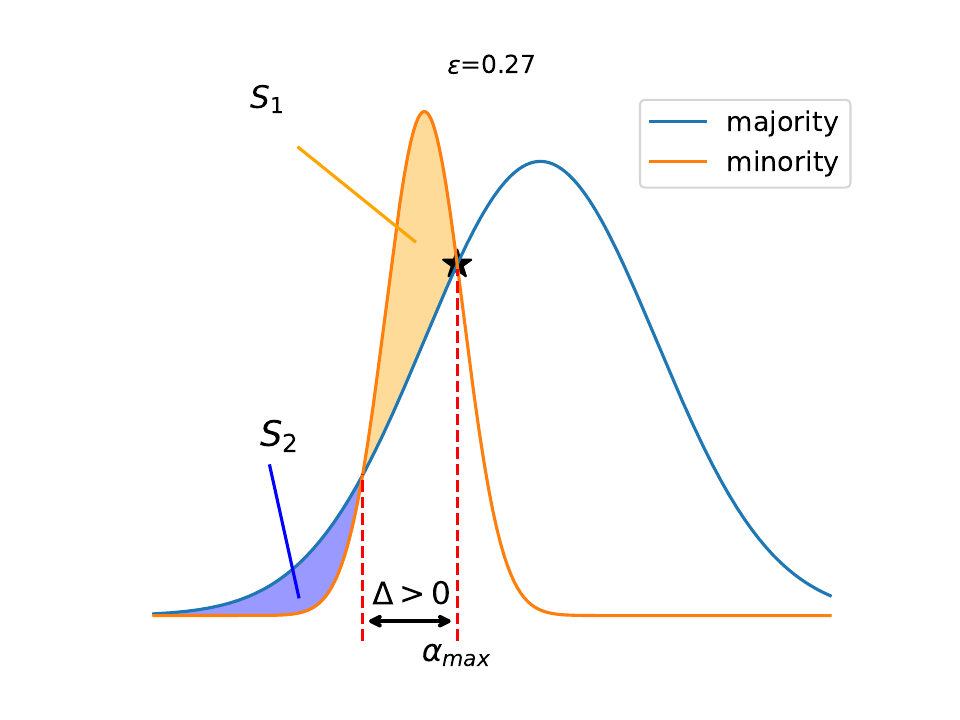}
        \caption*{(a)}
    \end{subfigure}
    \begin{subfigure}[b]{0.42\textwidth}
        \centering
        \includegraphics[width=\textwidth]{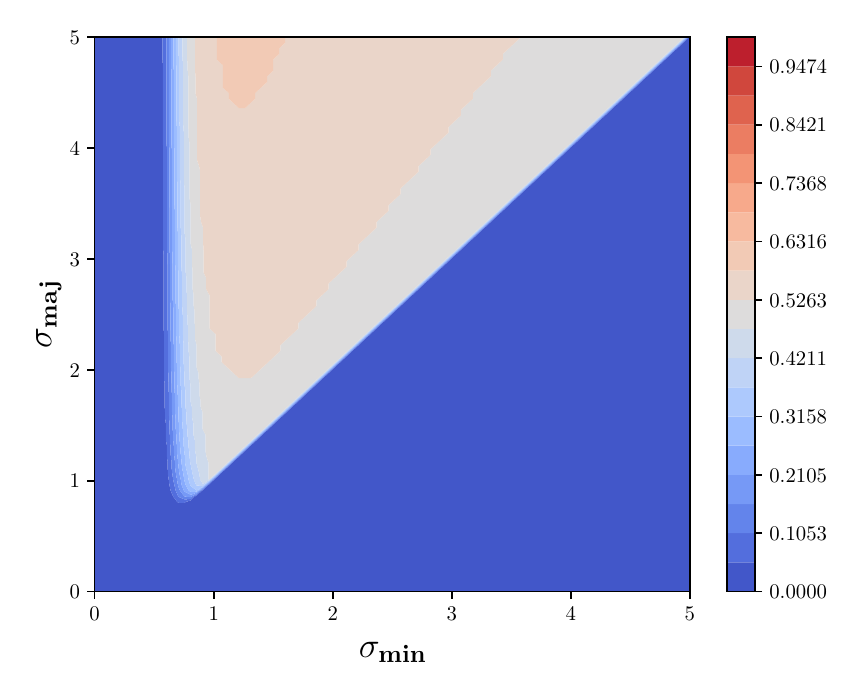}
        \caption*{(b)}
    \end{subfigure}
    \begin{subfigure}[b]{0.42\textwidth}
        \centering
        \includegraphics[width=\textwidth]{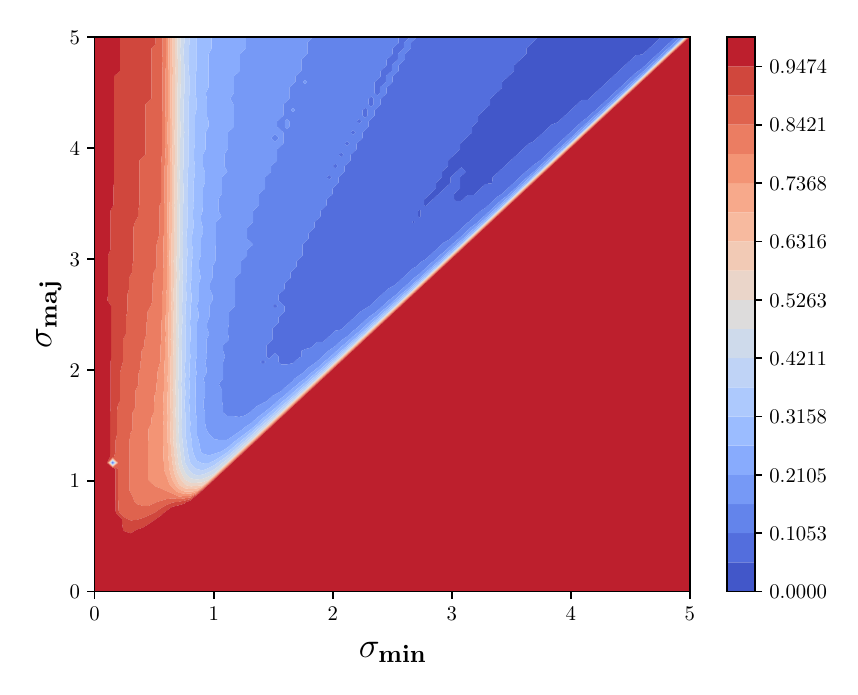}
        \caption*{(d)}
    \end{subfigure}
    \begin{subfigure}[b]{0.42\textwidth}
        \centering
        \includegraphics[width=\textwidth]{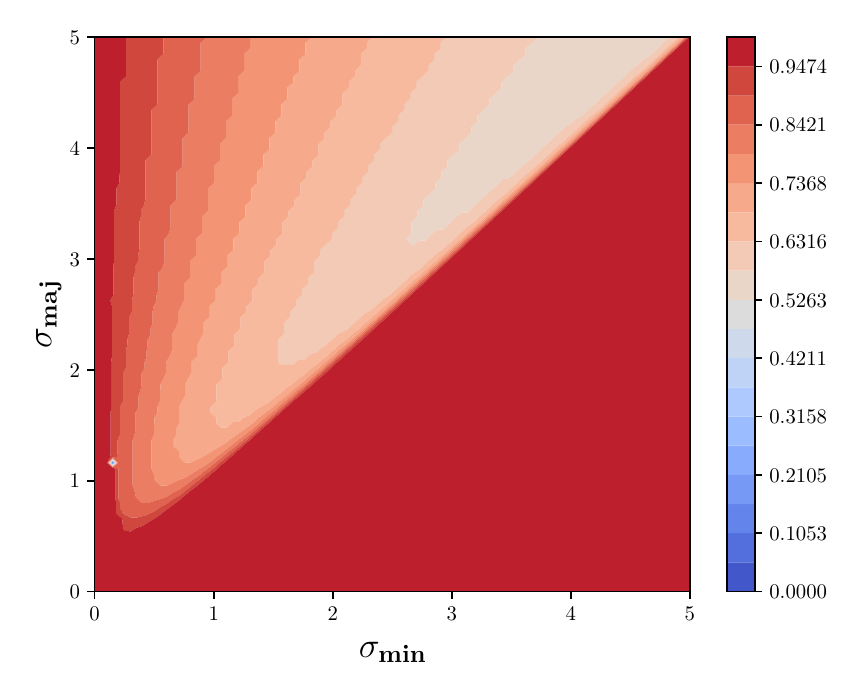}
        \caption*{(c)}
    \end{subfigure}
    \caption{(a) Conditions if $\sigma_{\minority}>\sigma_{\majority}$, (b), (c), (d) minimum, maximum and interval length of feasible $\varepsilon$ values across $(\sigma_{\minority},\sigma_{\majority})$ field for $\mu_{\minority}=0$, $\mu_{\majority}=1$.}
    \label{fig:bound-discussion}
\end{figure}


\subsection{Practical Justification}\label{sec:prac_just}
As shown in Table~\ref{tab:std}, the standard deviation ($\sigma$) of the minority group is consistently greater than that of the majority group across all analyzed datasets. Consequently, condition $(i)$ (Eq.~\ref{cond:sigmas}) of Proposition~\ref{prop:loss-based-feasibility-extended} is satisfied. Therefore, we theoretically expect the existence of properly balanced left and right tails.

\begin{table}[h]
\centering
\caption{Means, standard deviations (STD), and Earth Mover’s Distance across WaterBirds and CelebA datasets.}
\label{tab:std}
\begin{tabular}{c c c c c c c}
\toprule
                                & \multicolumn{4}{c}{\textbf{Waterbirds}} & \multicolumn{2}{c}{\textbf{CelebA}}  \\ 
                                & \multicolumn{2}{c}{\textbf{Class 1}}    & \multicolumn{2}{c}{\textbf{Class 2}}   & \multicolumn{2}{c}{\textbf{Class 2}} \\ 
                                & \textbf{Min} & \textbf{Maj}    & \textbf{Min} & \textbf{Maj}   & \textbf{Min} & \textbf{Maj} \\ 
\midrule
\textbf{Mean ($\mu$)}                   & $-6.77$ & $-19.17$      & $2.55$ & $11.39$       & $-1.02$ & $6.42$     \\ 
\textbf{STD ($\sigma$)}                    & $6.31$ & $6.23$         & $6.97$ & $4.75$        & $7.64$ & $6.48$      \\ 
\textbf{Earth Mover’s Distance}  & \multicolumn{2}{c}{$12.40$}               & \multicolumn{2}{c}{$8.84$}               & \multicolumn{2}{c}{$7.43$}            \\ 
\bottomrule
\end{tabular}
\end{table}

\section{Experimental Details}
\subsection{Complete Results}
The complete results on Waterbirds, CelebA, and UrbanCars, in addition to complete results on CivilComments and MultiNLI are reported in Tables \ref{tab:compl_sp} and \ref{tab:compl_othr} respectively. The results for all methods except Group DRO + EIIL on all datasets except UrbanCars are reported by \citet{AFR}. The results for Group DRO + EIIL are taken from ~\citet{CnC}. Also, the results of our method and DFR are shown in Table~\ref{table:domino_compl}

\begin{table}
    \centering
    \caption{A comparison of the various methods, ours included, on spurious correlation datasets. The Group Info column indicates if each method utilizes group labels of the training/validation data, with \checkmark\kern-0.5em\checkmark denoting that group information is employed during both the training and validation stages. Both the average test accuracy and worst test group accuracy are reported. The mean and standard deviation are calculated over three runs with different seeds. The numbers in bold represent the highest results among all methods, while the underlined numbers represent the best results among methods that may not require group annotation in the training phase.}
    \label{tab:compl_sp}
    \resizebox{0.98\textwidth}{!}{
    \begin{tabular}{cccccccc}\toprule
    \multirow{2}{*}{Method} & Group Info & \multicolumn{2}{c}{Waterbirds} & \multicolumn{2}{c}{CelebA} & \multicolumn{2}{c}{UrbanCars}\\\cmidrule{3-8}
    & Train/Val & Worst & Average & Worst & Average & Worst & Average \\\midrule
    GDRO~\citep{GDRO} & \checkmark/\checkmark & $91.4$ & $93.5$ & $\boldsymbol{88.9}$ &$92.9$ & $73.1$ & $84.2_{\pm 1.3}$ \\
    DFR~\citep{DFR} & \xmark/\checkmark\kern-0.5em\checkmark & $\boldsymbol{92.9_{\pm 0.2}}$ & $94.2_{\pm 0.4}$ & $88.3_{\pm 1.1}$ & $91.3_{\pm 0.3}$&
    $79.6_{\pm 2.22}$ & $87.5_{\pm 0.6}$ \\\midrule
    GDRO + EIIL~\citep{EIIL} &\xmark/\checkmark &$77.2_{\pm 1}$ & $\underline{\boldsymbol{96.5_{\pm 0.2}}}$&$81.7_{\pm 0.8}$ & $85.7_{\pm 0.1}$& $76.5_{\pm 2.6}$ & $85.4_{\pm 2.1}$ \\
    JTT~\citep{JTT} & \xmark/\checkmark & $86.7$ & $93.3$& $81.1$ & $88.0$ & $79.5$ & $86.3$ \\
    SELF~\citep{labonte2023towards} & \xmark/\checkmark  & 
    $\underline{91.6_{\pm 1.4}}$&
    $93.6_{\pm 1.1}$& 
    $83.9_{\pm 0.9}$
    &$91.7_{\pm 0.4}$ & 
    $83.2_{\pm 0.8}$ & $\underline{\boldsymbol{90.0_{\pm 0.5}}}$ \\
    AFR~\citep{AFR} & \xmark/\checkmark & $90.4_{\pm 1.1}$ & $94.2_{1.2}$& $82.0_{\pm 0.5}$ & $91.3_{\pm 0.3}$& $80.2_{\pm 2.0}$ & $87.1_{\pm 1.2}$ \\
    EVaLS-GL (Ours) & \xmark/\checkmark & $89.4_{\pm 0.3}$ &$95.1_{\pm 0.3}$ & $84.6 _{\pm 1.6}$ & $91.1_{\pm 0.6}$& $\underline{\boldsymbol{83.5}_{\pm 1.7}}$ & $ 88.3_{\pm 0.9}$ \\\midrule
    ERM & \xmark/\xmark & $66.4_{\pm 2.3}$ &  
 $90.3_{\pm 0.5}$&$47.4_{\pm 2.3}$ &$\underline{\boldsymbol{95.5_{\pm 0.0}}}$ &$18.67_{\pm 2.01}$ & $76.5_{\pm 4.6}$ \\
    EVaLS (Ours) & \xmark/\xmark & $88.4_{\pm 3.1}$ & $94.1_{\pm 0.1}$ & $\underline{85.3_{\pm 0.4}}$  & $89.4_{\pm 0.5}$ & $82.1_{\pm 0.9}$ & $88.1_{\pm 0.9}$ \\
    \bottomrule
    \end{tabular}}
\end{table}

\begin{table}
    \centering
    \caption{A comparison of the various methods, ours included, on CivilComments and MultiNLI. The Group Info column indicates if each method utilizes group labels of the training/validation data, with \checkmark\kern-0.5em\checkmark denoting that group information is employed during both the training and validation stages. Both the average test accuracy and worst test group accuracy are reported. The mean and standard deviation are calculated over three runs with different seeds. The numbers in bold represent the highest results among all methods, while the underlined numbers represent the best results among methods that may not require group annotation in the training phase.}
    \label{tab:compl_othr}
    \resizebox{0.98\textwidth}{!}{\begin{tabular}{cccccc}\toprule
    \multirow{2}{*}{Method} & Group Info & \multicolumn{2}{c}{CivilComments} & \multicolumn{2}{c}{MultiNLI}\\\cmidrule{3-6}
    & Train/Val & Worst & Average & Worst & Average  \\\midrule
    GDRO~\citep{GDRO} & \checkmark/\checkmark & $\boldsymbol{69.9}$ & $88.9$ & $\boldsymbol{77.7}$ &$81.4$ \\
    DFR~\citep{DFR} & \xmark/\checkmark\kern-0.5em\checkmark & $70.1_{\pm 0.8}$ & $87.2_{\pm 0.3}$ & $74.7_{\pm 0.7}$ & $82.1_{\pm 0.2}$\\\midrule
    GDRO + EIIL~\citep{EIIL} &\xmark/\checkmark &$67.0_{\pm 2.4}$ & $90.5_{\pm 0.2}$& $61.2_{\pm 0.5}$ & $79.4_{\pm 0.2}$\\
    JTT~\citep{JTT} & \xmark/\checkmark & $\underline{69.3}$ & $91.1$& $72.6$ & $78.6$\\  
    SELF~\citep{labonte2023towards} & \xmark/\checkmark & $65.9 _{\pm 1.7}$ & $89.7_{\pm 0.6}$ & $70.7_{\pm 2.5}$ & $81.2_{\pm 0.7}$\\
    AFR~\citep{AFR} & \xmark/\checkmark & $68.7_{\pm 0.6}$ & $89.8_{\pm 0.6}$& $73.4_{\pm 0.6}$ & $81.4_{\pm 0.2}$ \\
    EVaLS-GL (Ours) & \xmark/\checkmark & $68.0_{\pm 0.5}$ &$89.2_{\pm 0.3}$ & $\underline{75.1 _{\pm 1.2}}$ & $81.6_{\pm 0.2}$\\\midrule
    ERM & \xmark/\xmark & $61.2_{\pm 3.6}$ &  
 $\underline{\boldsymbol{92.0_{\pm 0.0}}}$&$64.8_{\pm 1.9}$ & $\underline{\boldsymbol{82.6_{\pm 0.0}}}$\\
    \bottomrule
    \end{tabular}}
\end{table}


\begin{table}[ht]
\centering
\caption{A Comparison of ERM, DFR, EVaLS, and EVaLS-GL on the Dominoes-CMF with different spurious correlations for the unknown feature. Both the worst and average of test group accuracies are presented. The mean and standard deviation are calculated based on runs with three distinct seeds.}
\label{table:domino_compl}
\begin{tabular}{l|cc|cc|cc}
\toprule
& \multicolumn{2}{c|}{$85\%$ Corr.} & \multicolumn{2}{c|}{$90\%$ Corr.} & \multicolumn{2}{c}{$95\%$ Corr.} \\
Method & Worst & Average & Worst & Average & Worst & Average \\\midrule
ERM & $68.27_{\pm 1.5}$ & $97.14_{\pm 0.5}$ & $50.6_{\pm 1.0}$ & $96.1_{\pm 0.0}$ & $36.84_{\pm 2.0}$ & $95.37_{\pm 1.0}$ \\
DFR & $70.71_{\pm 0.5}$ & $86.2_{\pm 0.6}$ & $60.2_{\pm 1.2}$ & $84.6_{\pm 0.4}$ & $42.74_{\pm 2.7}$ & $81.5_{\pm 1.2}$ \\
EVaLS-GL & $70.13_{\pm 2.94}$ & $82.5_{\pm 1.8}$ & $63.6_{\pm 1.3}$ & $78.7_{\pm 1.5}$ & $48.53_{\pm 0.8}$ & $77.0_{\pm 2.0}$ \\
EVaLS & $\boldsymbol{72.97_{\pm 4.8}}$ & $81.5_{\pm 1.8}$ & $\boldsymbol{67.1_{\pm 4.2}}$ & $78.6_{\pm 2.0}$ & $\boldsymbol{51.15_{\pm 1.43}}$ & $77.5_{\pm 2.5}$ \\
\bottomrule
\end{tabular}
\end{table}

\subsection{Dominoes-Colored-MNIST-FashionMNIST}
\paragraph{Dominoes-Colored-MNIST-FashionMNIST (Dominoes-CMF)} is a synthetic dataset.
We adopt a similar approach to previous works~\cite{pagliardini2022agree, shah2020pitfalls, DFR} using a modified version of the \textit{Dominoes} binary classification dataset. This dataset consists of images with the top half showing CIFAR-10 images \cite{cifar}, divided into two meaningful classes: vehicles (airplane, car, ship, truck) and animals (cat, dog, horse, deer). The bottom half displays either MNIST \cite{mnist} images from classes $\{0-3\}$ or Fashion-MNIST \cite{fashion_mnist} images from classes $\{\text{T-shirt}, \text{Dress}, \text{Coat}, \text{Shirt}\}$. The complex feature (top half) serves as the core feature and the simple feature (bottom half) is linearly separable and correlated with the class label at 75\%. Furthermore, inspired by the approaches in \citet{CnC, IRM}, we intentionally introduce an additional spurious attribute by artificially coloring a subset of images as follows: for three different datasets, 85\%, 90\%, and 95\% of the images in the bottom half of class $c_1$ are randomly assigned a red color in each respective dataset, while 15\%, 10\%, and 5\% of the images are assigned a green color, respectively. The same procedure is applied inversely for class $c_2$.

See Table \ref{table:domino_stats} for more details about the dataset statistics.

\begin{table}[ht]
\centering
\caption{\textit{Dominoes-CMF} Dataset Statistics for 85\%, 90\%, and 95\% Correlation}
\label{table:domino_stats}
\resizebox{0.98\textwidth}{!}{
\begin{tabular}{lccccccc}
\toprule
\multicolumn{1}{c}{\textbf{Top part}} & 
\multicolumn{1}{c}{} & 
\multicolumn{2}{c}{\textbf{Bottom Part ($85\%$ Corr.)}} & 
\multicolumn{2}{c}{\textbf{Bottom Part ($90\%$ Corr.)}} & 
\multicolumn{2}{c}{\textbf{Bottom Part ($95\%$ Corr.)}} \\
\cmidrule(lr){3-4} \cmidrule(lr){5-6} \cmidrule(lr){7-8}
\textbf{CIFAR-10 Class} & \textbf{Color} & \textbf{MNIST} & \textbf{Fashion-MNIST} & \textbf{MNIST} & \textbf{Fashion-MNIST} & \textbf{MNIST} & \textbf{Fashion-MNIST} \\
\midrule
\multirow{2}{*}{$c_1$ (Vehicle)} 
 & Red & 12,750 & 4,250 & 13,500 & 4,500 & 14,250 & 4,750
\\
 & Green & 2,250 & 750 & 1,500 & 500 & 750 & 250 \\
\midrule
\multirow{2}{*}{$c_2$ (Animal)} 
 & Red & 750 & 2,250 & 500 & 1,500 & 250 & 750 \\
 & Green & 4,250 & 12,750 & 4,500 & 13,500 & 4,750 & 14,250 \\
\midrule
\textbf{Total} & & \multicolumn{2}{c}{40,000} & \multicolumn{2}{c}{40,000} & \multicolumn{2}{c}{40,000} \\
\bottomrule
\end{tabular}}
\end{table}


\begin{table}[ht]
\centering
\caption{ERM Accuracies on \textit{Dominoes-CMF} Dataset. The mean and standard deviation are reported based on three runs with different seeds.}
\label{table:domino_ERM}
\resizebox{0.98\textwidth}{!}{
\begin{tabular}{lccccccc}
\toprule
\multicolumn{1}{c}{\textbf{Top part}} & 
\multicolumn{1}{c}{} & 
\multicolumn{2}{c}{\textbf{Bottom Part ($85\%$ Corr.)}} & 
\multicolumn{2}{c}{\textbf{Bottom Part ($90\%$ Corr.)}} & 
\multicolumn{2}{c}{\textbf{Bottom Part ($95\%$ Corr.)}} \\
\cmidrule(lr){3-4} \cmidrule(lr){5-6} \cmidrule(lr){7-8}
\textbf{CIFAR-10 Class} & \textbf{Color} & \textbf{MNIST} & \textbf{Fashion-MNIST} & \textbf{MNIST} & \textbf{Fashion-MNIST} & \textbf{MNIST} & \textbf{Fashion-MNIST} \\
\midrule
\multirow{2}{*}{$c_1$ (Vehicle)} 
 & Red  & $98.53_{\pm 0.01}\%$ & $95.61_{\pm 1.1}\%$ &  $99.2_{\pm 0.01}\%$ & $95.2_{\pm 1.1}\%$ & $99.63_{\pm 0.01}\%$ & $98.11_{\pm 1.1}\%$
\\
 & Green & $89.33_{\pm 2.4}\%$ & $68.57_{\pm 0.5}\%$ & $84.5_{\pm 2.4}\%$ & $54.7_{\pm 0.5}\%$ & $63.1_{\pm 1.4}\%$ & $36.84_{\pm 0.5}\%$ \\
\midrule
\multirow{2}{*}{$c_2$ (Animal)} 
 & Red & $68.28_{\pm 2.6}\%$ & $86.18_{\pm 2.4}\%$ & $56.8_{\pm 5.6}\%$ & $86.7_{\pm 2.4}\%$ & $39.13_{\pm 1.6}\%$ & $68.53_{\pm 2.4}\%$  \\
 & Green & $93.97_{\pm 0.5}\%$ & $98.36_{\pm 0.2}\%$ & $96.2_{\pm 0.5}\%$ & $99.3_{\pm 0.2}\%$ & $97.92_{\pm 0.5}\%$ & $99.25_{\pm 0.2}\%$ \\
\bottomrule
\end{tabular}}
\end{table}

\subsection{Datasets}\label{app:datasets}
\paragraph{Waterbirds~\citep{GDRO}}
The dataset comprises images of diverse bird species, classified into two categories: waterbirds and landbirds. Each image features a bird set against a backdrop of either water or land. Interestingly, the background scene acts as a spurious feature in this classification task. Waterbirds are primarily shown against water backgrounds, and landbirds against land backgrounds. Consequently, waterbirds on water and landbirds on land form the minority groups in the training data. It’s important to note that the validation dataset for waterbirds is group-balanced, meaning birds from each class are equally represented against both water and land backgrounds. This dataset is mainly categorized as a spurious correlation dataset.

\paragraph{CelebA~\citep{CelebA}}
is a widely used dataset in image classification tasks, featuring annotations for 40 binary facial attributes such as hair color, gender, and age. Hair color classification is particularly prominent in literature focusing on spurious correlation robustness. Notably, gender serves as a spurious attribute within this dataset, where a significant majority $94\%$ of individuals with blond hair are women, while men with blond hair represent a minority group. In addition to spurious correlation in the class of blond hair, this dataset also exhibits class imbalance.

\paragraph{MultiNLI~\citep{MultiNLI}} dataset involves a text classification task focused on determining the relationship between pairs of sentences: contradiction, entailment, or neutral. Sentences containing negation words such as "no" or "never" are under-represented in all three classes, inducing attribute imbalance in the dataset.
Figure~\ref{fig:minority_prop2} illustrates the distinct behavior of this dataset compared to other datasets that contain spurious attributes.

\paragraph{CivilComments~\citep{CivilComments}} dataset, as part of the WILDS benchmark, involves a text classification task focused on labeling online comments as either "toxic" or "not toxic". Each comment is associated with 8 attributes, including gender (male, female), sexual orientation (LGBTQ), race (black, white), and religion (Christian, Muslim, or other), based on whether these characteristics are mentioned in the comment. While there is a small attribute imbalance in the dataset, it can categorized into datasets with class imbalance. The detailed proportion of each attribute in each class is described in Table ~\ref{tab:civilcomments_groups}. In this paper, we use the implementation of the dataset by the \verb|WILDS| package~\citep{wilds}.

\begin{table}
\caption{Proportion of attributes in each class for CivilComments dataset.}
\centering
\label{tab:civilcomments_groups}
\resizebox{0.98\textwidth}{!}{
\begin{tabular}{lcccccccc}
\toprule
Toxicity (Class) & Male & Female & LGBTQ & Christian & Muslim & Other Religions & Black & White \\
\midrule
$0$ & $0.11$ & $0.12$ & $0.03$ & $0.10$ & $0.05$ & $0.02$ & $0.03$ & $0.05$ \\
$1$ & $0.14$ & $0.15$ & $0.08$ & $0.08$ & $0.10$ & $0.03$ & $0.1$ & $0.14$ \\
\bottomrule
\end{tabular}}
\end{table}

\paragraph{UrbanCars~\citep{whac}} is an image classification dataset with multiple shortcuts. Each image in the dataset consists of a car in the center of the image on a natural scene background, with another object to the right of the image. Images are labeled \textit{Urban} or \textit{City} according to the type of car present in the center. However, each of the backgrounds and the additional objects is highly correlated with the label. While the test set consists of 8 environments based on combinations of the core and two spurious patterns, the training and validation set consist of four groups, based on combinations of the label and only one of the shortcuts.

\begin{figure}
\centering
    \begin{subfigure}{0.495\textwidth}
		\centering
		\includegraphics[width = 0.49\linewidth]{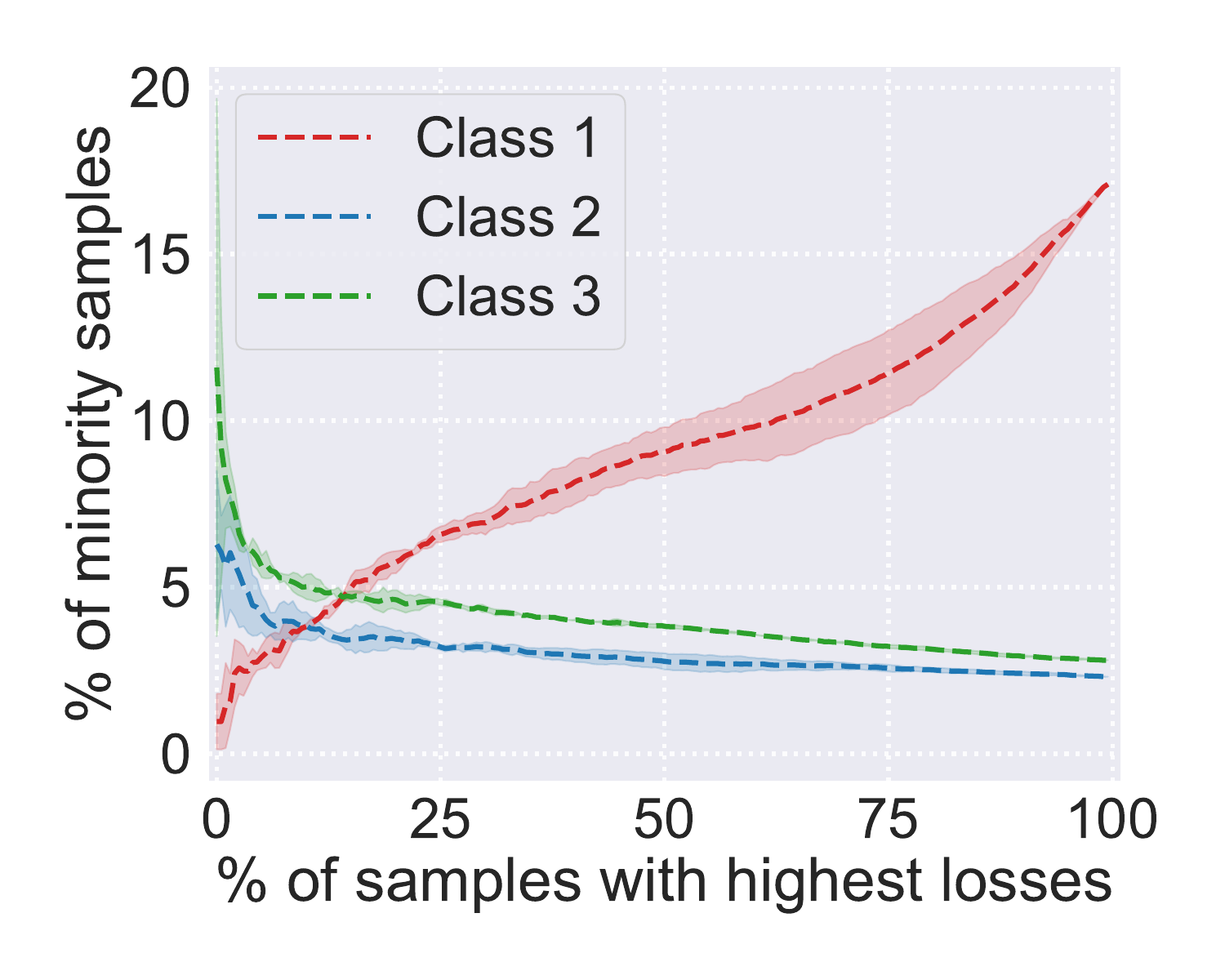}
		\includegraphics[width = 0.49\linewidth]{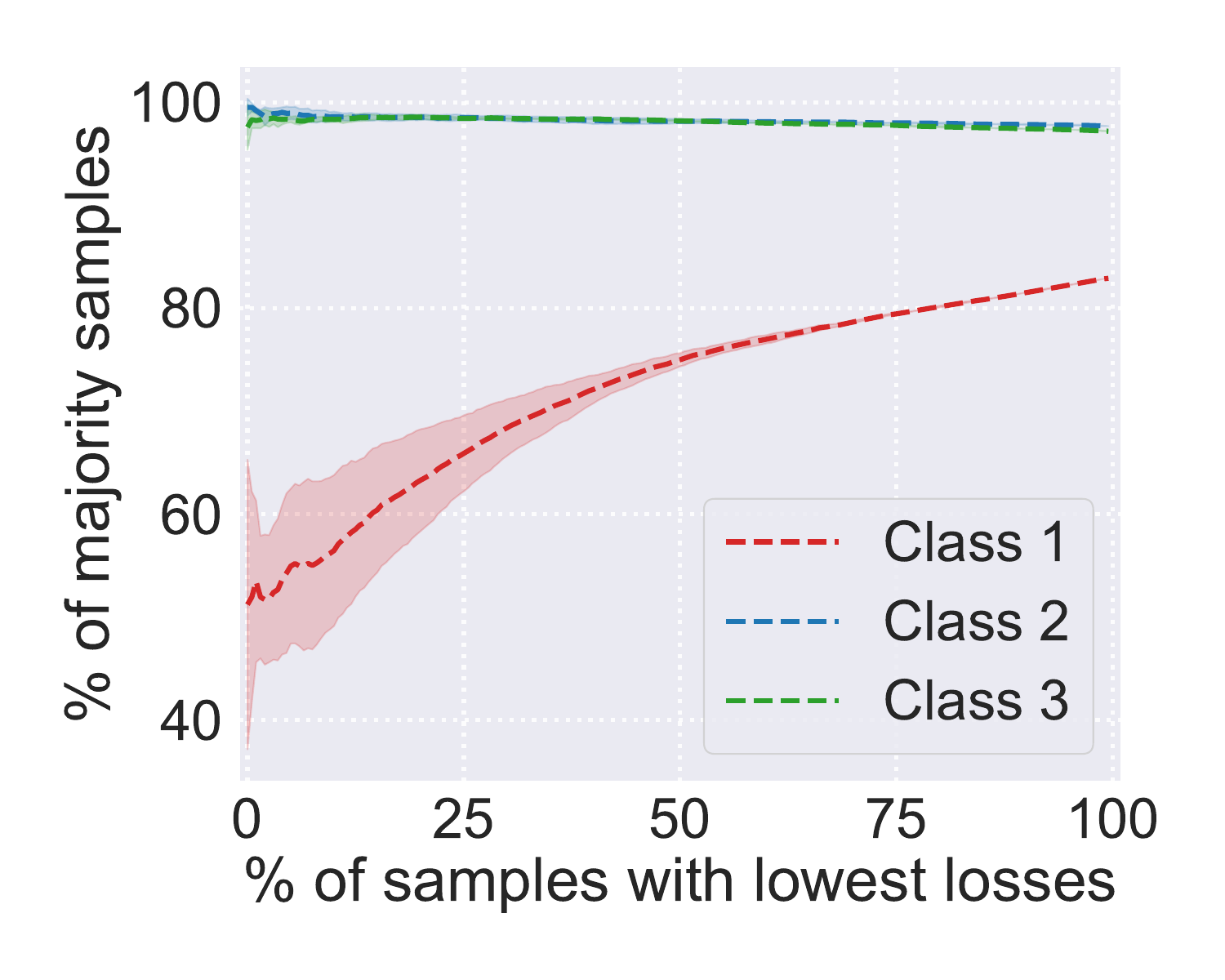}
        \vspace{-0.8em}
        \caption{MultiNLI}
		\label{fig:MultiNLI_groups}
    \end{subfigure}%
    \begin{subfigure}{0.495\textwidth}
		\centering
		\includegraphics[width = 0.49\linewidth]{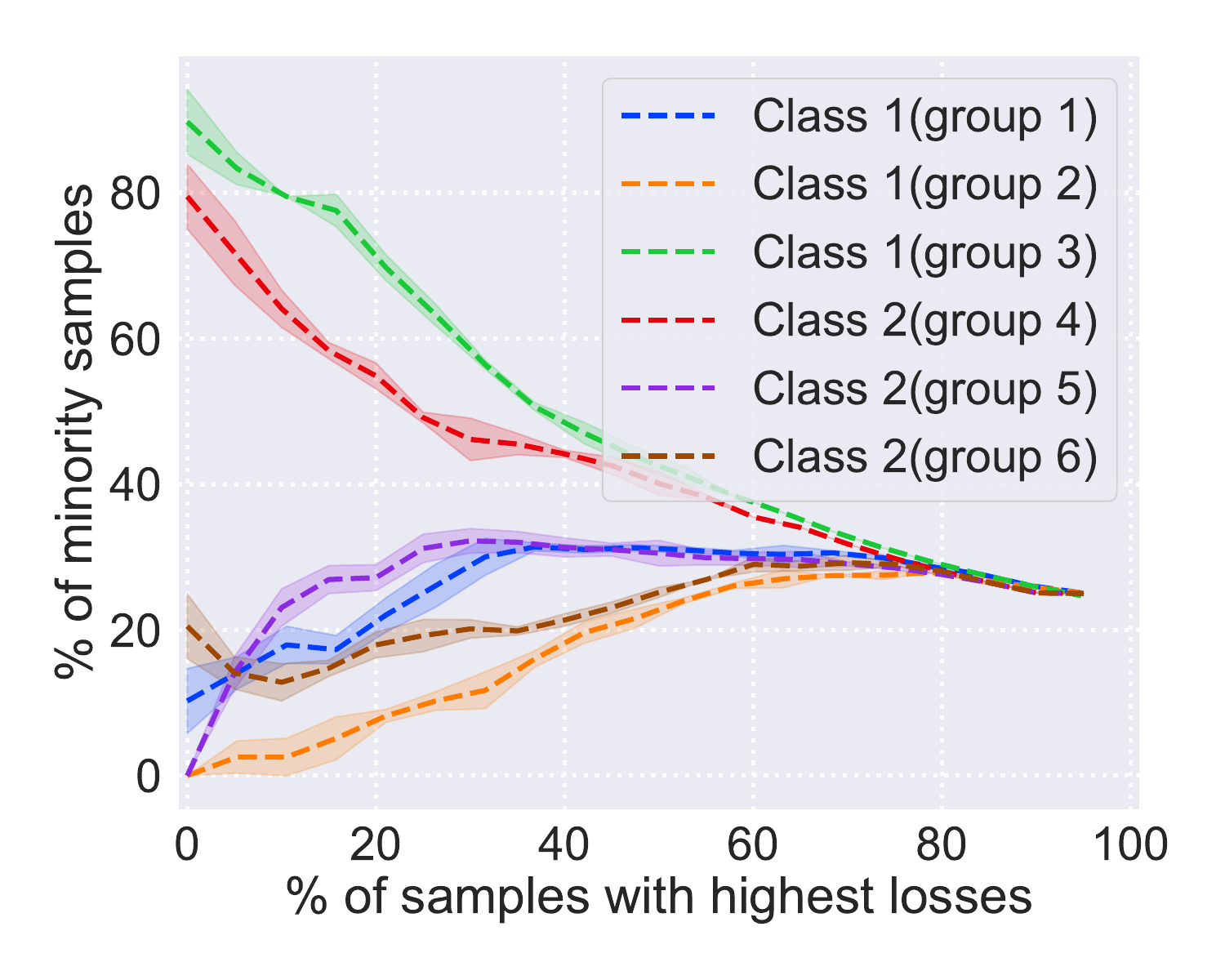}
		\includegraphics[width = 0.49\linewidth]{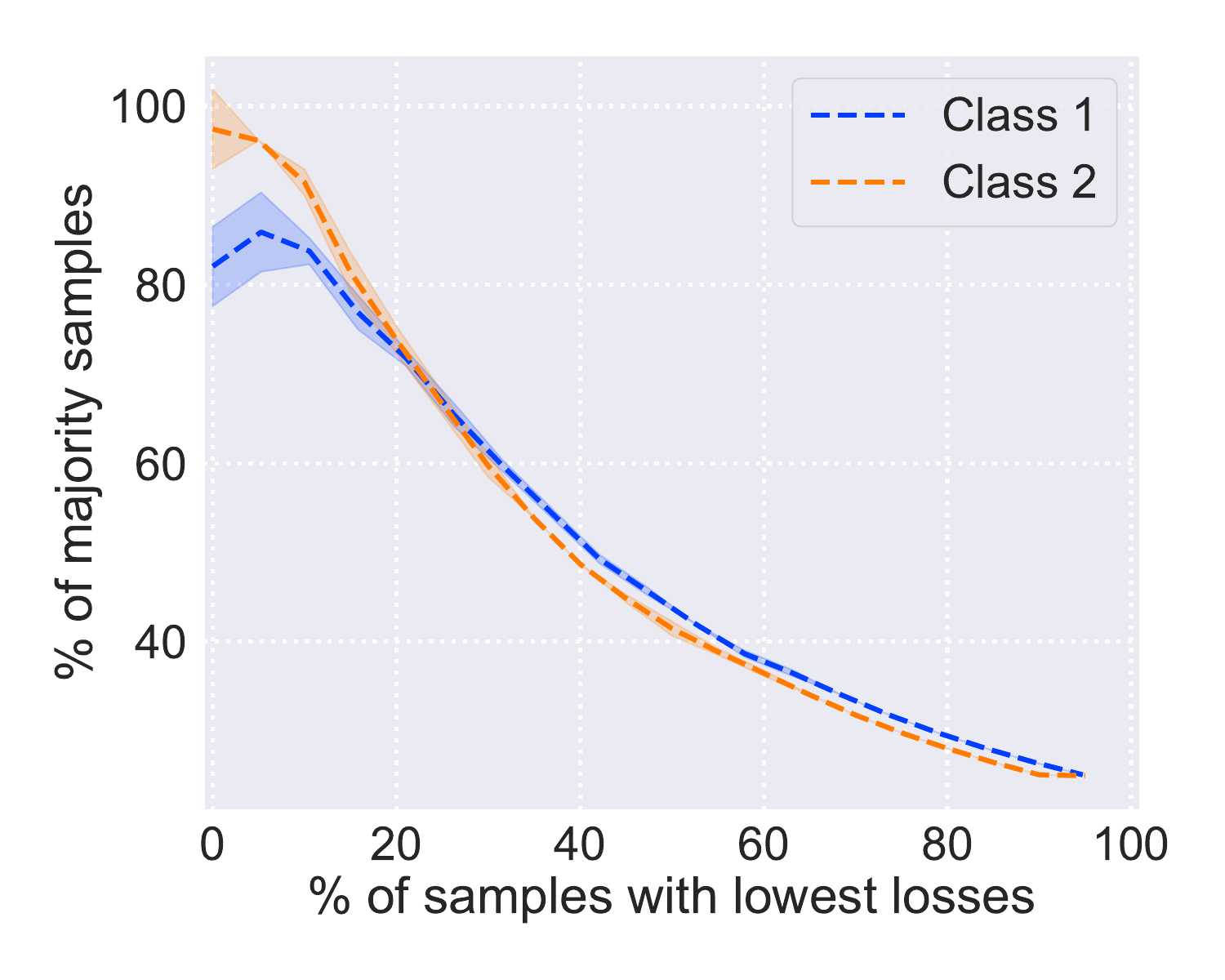}
        \vspace{-0.8em}
		\caption{UrbanCars}
		\label{fig:UrbanCars_groups}
    \end{subfigure}
    
\caption
{
The percentage of samples with the highest (lowest) losses across various thresholds that
belong to the minority (majority) group within different classes in $\mathcal{D}^\text{LL}$ for (a) MultiNLI and (b) UrbanCars  datasets.}
  \label{fig:minority_prop2}
\end{figure}

\subsection{Training Details}
\paragraph{ERM}
For Waterbirds and CelebA, we utilize the ResNet50 checkpoints available in the GitHub repository of \citet{DFR} as our base model. We use the ResNet-50 architecture provided by the \verb|torchvision| package. In the case of CivilComments and MultiNLI, we adopt a similar approach to \citet{DFR}, using \verb|BertForSequenceClassification.from_pretrained| \verb|('bert-base-uncased', ...)| from the \verb|transformers| package. The model is trained using the AdamW optimizer with a learning rate of $10^{-5}$, weight decay of $10^{-4}$, and a batch size of 16 for a total of 5 epochs.

For the UrbanCars dataset, we adhere to the settings described in \citet{whac}, which involves training a ResNet-50 model pretrained on ImageNet using the SGD optimizer with a learning rate of $10^{-3}$, momentum of 0.9, weight decay of $10^{-4}$, and a batch size of 128 for 300 epochs. For the Dominoes-CMF dataset, we train a ResNet18 model pretrained on ImageNet for 20 epochs with a batch size of 128 and an SGD optimizer with a learning rate of $10^{-3}$, momentum of 0.9, and weight decay of $10^{-4}$.
\paragraph{EVaLS and EVaLS-GL}
For every dataset, EIIL was utilized with a learning rate of $0.01$, a total of 20000 steps, and a batch size of 128. The last layer of the model was trained on all datasets using the Adam optimizer. A batch size of 32 and a weight decay of $10^{-4}$ were used for all datasets. Our method was evaluated on the validation sets of each dataset, considering both fine-tuning and retraining of the last layer. For all datasets, with the exception of MultiNLI, retraining provided superior validation results. The specifics regarding the number of epochs and the ranges for hyperparameter search (including learning rate, $\ell_1$-regularization coefficient ($\lambda$), and the number of selected samples ($k$)) for each dataset are as follows:
\begin{itemize}
    \itemindent=-20pt
    \item \textbf{Waterbirds}.
    \begin{itemize}
        \item  epochs = 100,
        \item  lr = $5\times10^{-4}$, 
        \item $\lambda \in \{0, 0.01, 0.02, 0.03, 0.04, 0.05, 0.06, 0.07, 0.08, 0.09, 0.1, 0.2, 0.3, 0.4, 0.5\}$, 
        \item $k \in \{20, 25, 30, 35, 40, 45, 50, 55, 60\}$.
    \end{itemize}
    \item \textbf{CelebA}
        \begin{itemize}
        \item  epochs = 50,
        \item  lr = $5\times10^{-4}$, 
        \item $\lambda \in \{0, 0.01, 0.02, 0.03, 0.04, 0.05, 0.06, 0.07, 0.08, 0.09, 0.1, 0.2, 0.3, 0.4, 0.5,\\ 0.6, 0.7, 0.8, 0.9, 1, 2\}$, 
        \item $k \in \{50, 100, 150, 200, 250, 300\}$.
    \end{itemize}
    \item \textbf{UrbanCars}
        \begin{itemize}
        \item  epochs = 100,
        \item  lr $\in$ \{$5\times10^{-4}$, $10^{-3}$\}, 
        \item $\lambda \in \{0, 0.01, 0.02, 0.05, 0.1, 1\}$, 
        \item $k \in \{10, 20, 30, 50, 63\}$.
    \end{itemize}
    
    \item \textbf{CivilComments}
        \begin{itemize}
        \item  epochs = 50,
        \item  lr $\in \{10^{-4}, 5\times10^{-4}\}$, 
        \item $\lambda \in \{0, 0.01, 0.02, 0.03, 0.04, 0.05, 0.06, 0.07, 0.08, 0.09, 0.1, 0.2, 0.3, 0.4, 0.5,\\ 0.6, 0.7, 0.8, 0.9, 1, 2\}$,
        \item $k \in \{500, 750, 1000, 1250, 1500\}$.
    \end{itemize}
    \item \textbf{MultiNLI}
        \begin{itemize}
        \item  epochs = 200,
        \item  lr $\in \{ 10^{-3}, 10^{-2}\}$, 
        \item $\lambda \in \{0, 0.01, 0.02, 0.03, 0.04, 0.05, 0.06, 0.07, 0.08, 0.09, 0.1, 0.2, 0.3, 0.4, 0.5\}$, 
        \item $k \in \{20, 30, 40, 50, 60, 75, 100, 125, 150, 200, 250, 300\}$.
    \end{itemize}
    \item \textbf{Dominoes}-CMF
        \begin{itemize}
        \item \verb|LogisticRegression(penalty="l1", solver="liblinear")|
        \item $\lambda \in \{0.001, 0.003, 0.01, 0.02, 0.03, 0.05, 0.07, 0.1, 0.2, 0.3, 0.5, 0.7, 1.0, 3.0\}$, 
        \item $k \in [10, 80]$.
    \end{itemize}
    
\end{itemize}

\subsection{Sensitivity to Hyperparameters}
\begin{figure}[!ht]
\centering
    \begin{subfigure}{0.33\textwidth}
		\centering
		\includegraphics[width = \linewidth]{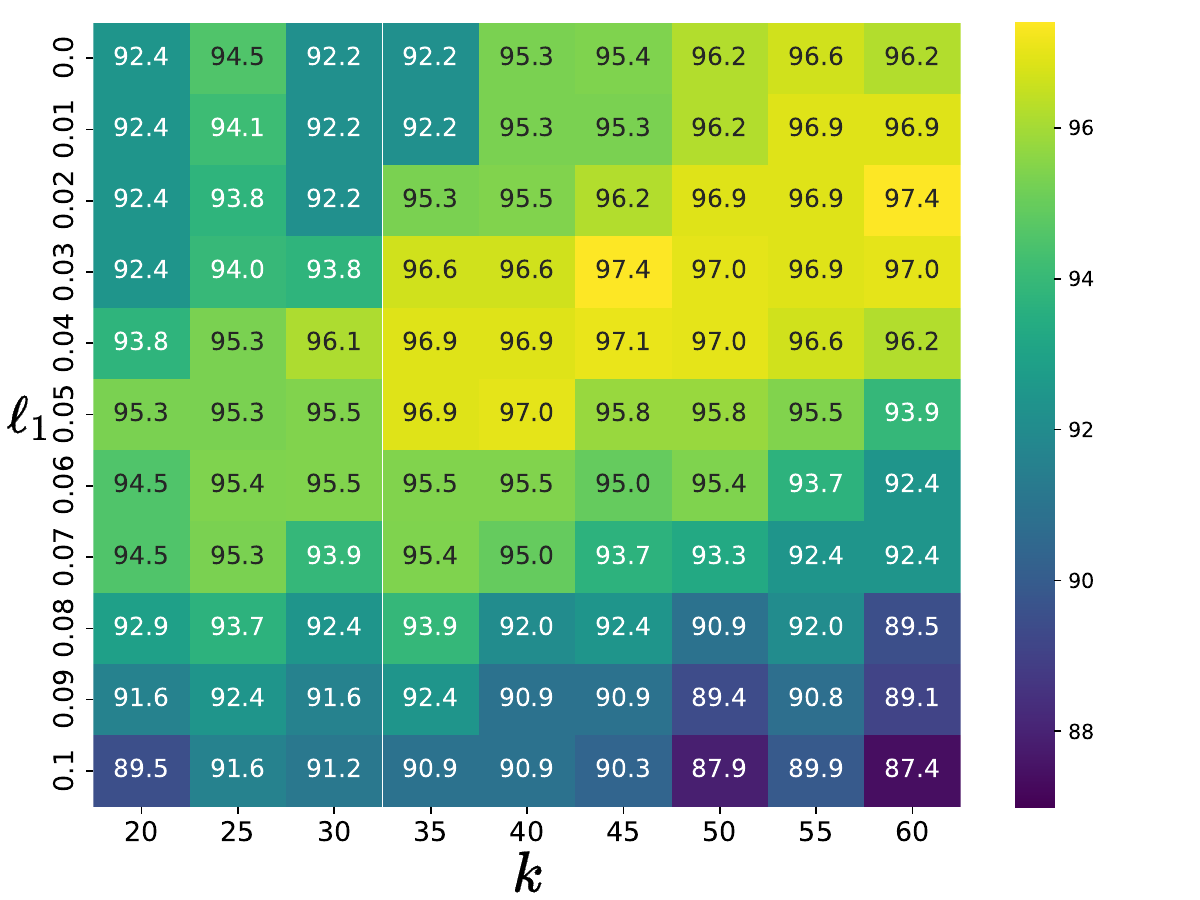}
		\caption{Waterbirds}
    \end{subfigure}%
    \begin{subfigure}{0.33\textwidth}
		\centering
		\includegraphics[width = \linewidth]{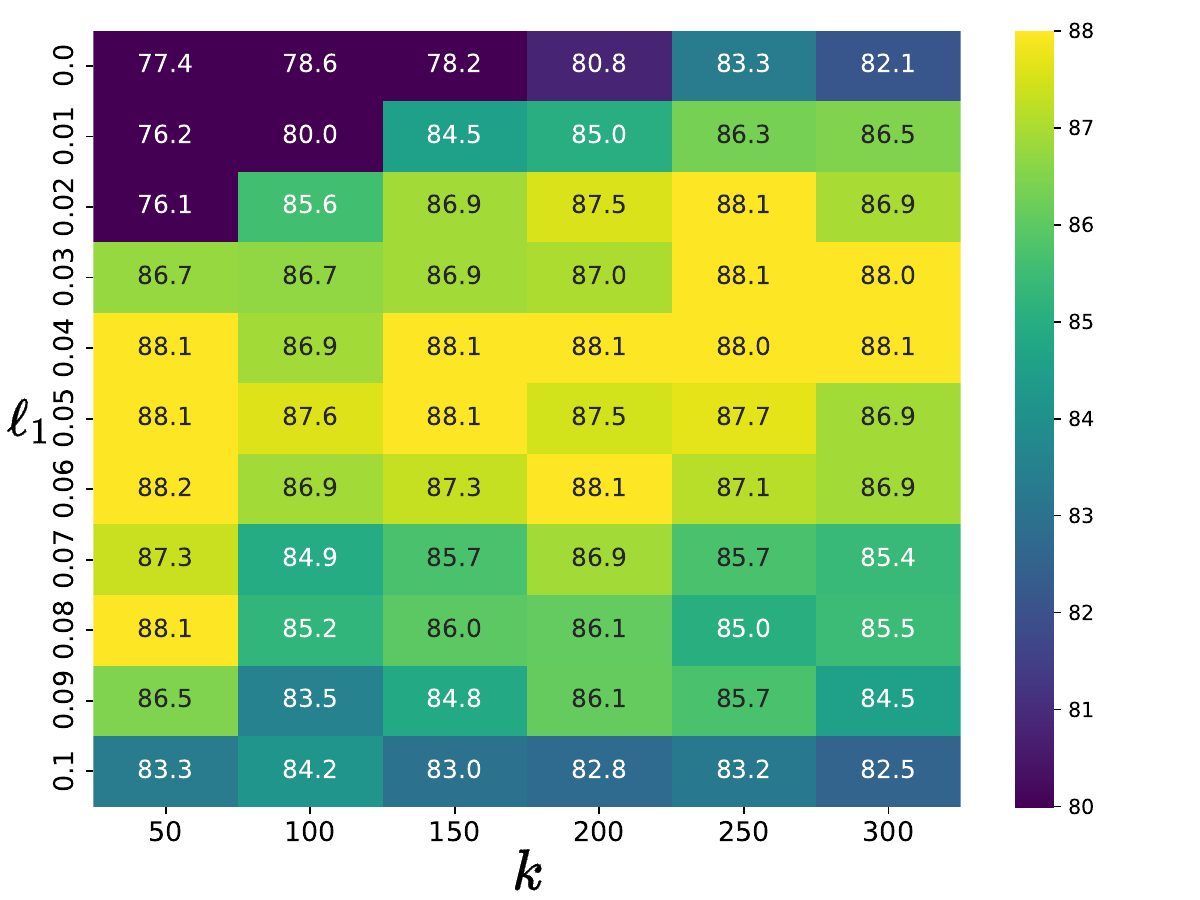}
        \caption{CelebA}
    \end{subfigure}%
    \begin{subfigure}{0.33\textwidth}
		\centering
		\includegraphics[width = \linewidth]{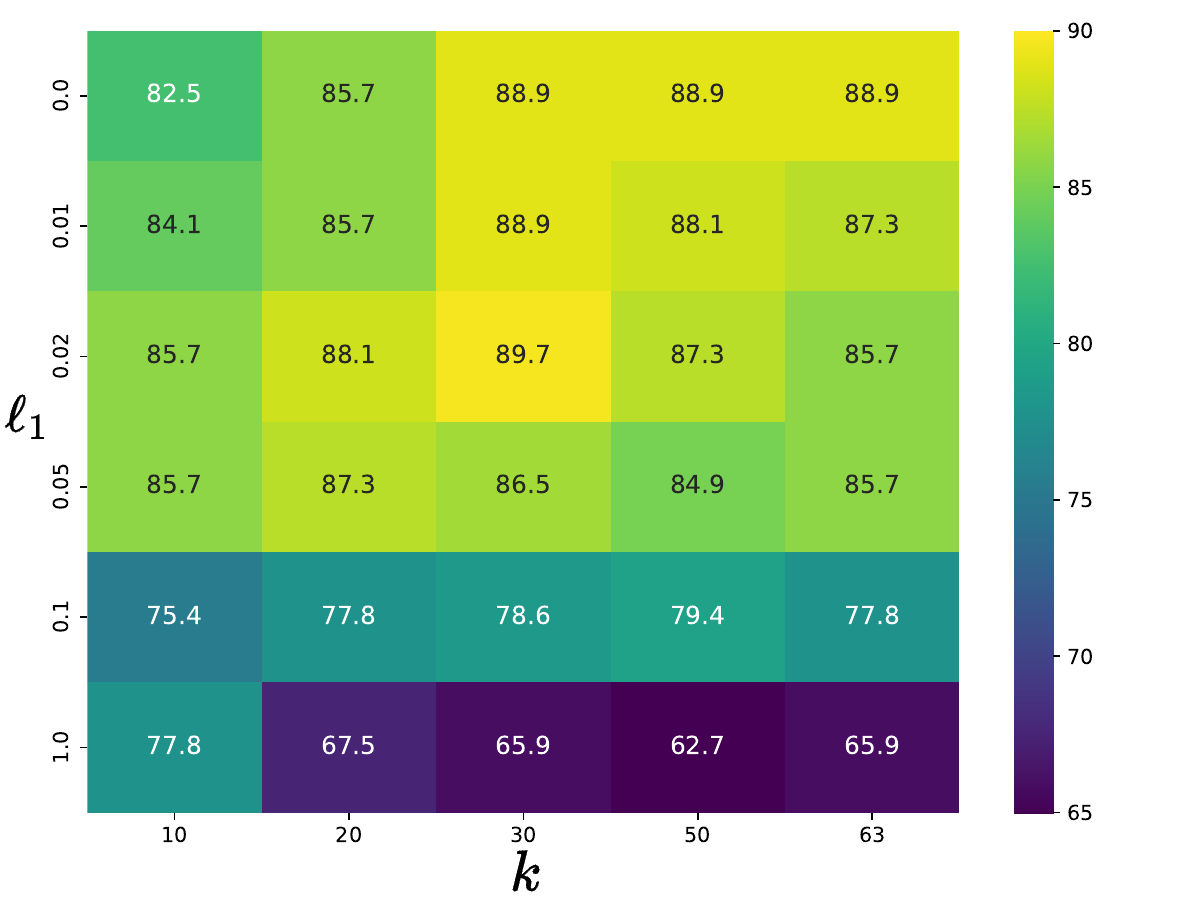}
		\caption{UrbanCars}
    \end{subfigure}%
    \vspace{-0.2cm}
\caption{WGA heatmap on $D^{MS}$ for different hyperparameter settings across various datasets.}
\label{fig:sensitivity_heatmap}
\end{figure}

The parameters $k$ (the number of selected samples from each loss tail) and $\lambda$ (the $\ell_1$ regularization factor) are automatically selected using the environment/group-based validation scheme proposed in our method. Sensitivity heatmaps demonstrate the impact of $k$ and $\lambda$ on the worst-group validation accuracy (WGA) across various datasets.  Importantly, our results demonstrate that for most datasets, multiple hyperparameter combinations yield optimal or near-optimal performance, reducing the need for exhaustive searches. This suggests that the hyperparameter tuning process is not prohibitively difficult, and even relatively shallow or targeted hyperparameter searches suffice to identify optimal hyperparameter configurations. The difference in WGA between the best and worst hyperparameter settings for the Waterbirds, CelebA, and UrbanCars datasets is approximately $10\%$, $16\%$, and $25\%$, respectively.

\section{Ablation Study}
\subsection{Use of EIIL with DFR and AFR}
We conducted an ablation study to investigate the impact of using environments inferred from EIIL on model selection. Specifically, we benchmarked the performance of DFR and AFR with EIIL-inferred groups. The results, presented in Table \ref{tab:eiil_dfr_ablation}, demonstrate the effectiveness of incorporating EIIL-inferred groups in model selection. The results show that while EIIL-inferred groups reduce the performance compared to ground-truth annotations for model selection, they still can be effective for robustness to an extent. Moreover, EVaLS outperforms these two methods when using EIIL inferred environments.

\begin{table}
\caption{Results of DFR and AFR with EIIL-inferred environment for model selection.}
\centering
\label{tab:eiil_dfr_ablation}
\begin{tabular}{lcc}
\toprule
Method & Waterbirds & Celeba \\
\midrule
DFR (with EIIL) & $\mathbf{92.21 \pm 0.02}$ & $\mathbf{85.55 \pm 1.0}$ \\
AFR (with EIIL) & $82.6 \pm 0.04$ & $72.5 \pm 0.01$ \\
\bottomrule
\end{tabular}
\end{table}


\subsection{Comparison of High-Loss and Misclassified-Sample Selection}

Several methods, such as JTT~\citep{JTT}, rely on misclassified points to address group imbalances by treating these points as belonging to a minority group.
To verify the effectiveness of loss-based sampling in comparison with misclassification-based sample selection, we conducted an experiment by replacing loss-based sampling in in EVaLS with selecting misclassified samples and an equal number of randomly chosen correctly classified samples from each class. This results in degraded performance compared to EVaLS on the Waterbirds and UrbanCars datasets, and only a marginal improvement (with higher variance) on CelebA, as summarized in Table \ref{tab:misclassified_vs_loss_based}.

\begin{table}
    \centering
    \caption{Performance comparison between misclassified sample selection and EVaLS on the Waterbirds, CelebA, and UrbanCars datasets. The mean and standard deviation values are calculated over three runs with different seeds.}
    \label{tab:misclassified_vs_loss_based}
    \begin{tabular}{ccccccc}
    \toprule
        \multirow{2}{*}{Method} &  \multicolumn{2}{c}{Waterbirds} & \multicolumn{2}{c}{CelebA} & \multicolumn{2}{c}{UrbanCars}\\\cmidrule(lr){2-7}
        & Worst & Average & Worst & Average & Worst & Average \\\midrule
        Misclassified Selection & $77.8_{\pm 5.2}$ & $94.0_{\pm 0.4}$ & $85.9_{\pm 1.0}$ & $89.4_{\pm 0.8}$ & $78.4_{\pm 4.5}$ & $86.9_{\pm 1.4}$ \\
        EVaLS & ${88.4_{\pm 3.1}}$ & $94.1_{\pm 0.1}$ & ${85.3_{\pm 0.4}}$ & $89.4_{\pm 0.5}$ & ${82.1_{\pm 0.9}}$ & ${88.1_{\pm 0.9}}$ \\
    \bottomrule
    \end{tabular}
\end{table}

\subsection{Other Group Inference Methods}\label{app:other_infer}
In addition to EIIL, other group inference methods could be utilized for partitioning the model selection set into environments. 

\paragraph{Error Splitting} JTT~\cite{JTT} partitions data into two correctly classified and misclassified sets based on the predictions of a model trained with ERM. We split each of these two sets based on labels of samples, obtaining $|\mathcal{Y}|\times2$ environments. 

\paragraph{Random Classifier Splitting} uses a random classifier to classify features obtained from a model trained with ERM into correctly classified and misclassified sets. Similar to error splitting, we split the sets based on group labels. The difference between error splitting and random classifier splitting is solely in the reinitialization of the classification layer. 

The results for EVaLS-ES (EVaLS+Error Sampling) and EVaLS-RC (EVaLS+Random Classifier) are shown in Table~\ref{tab:abl_inference}. One limitation of error splitting is that in datasets with noisy labels or corrupted images, samples that an ERM model misclassifies may not always belong to minority groups. In these situations, choosing models based on their accuracy on corrupted data could lead to the selection of models that are not robust to spurious correlations. This is demonstrated by the results of EVaLS-ES on the CelebA dataset.

This shortcoming of error splitting can be alleviated by employing a random classifier instead of the ERM-trained one. Due to the feature-level similarity between minority and majority samples in datasets affected by spurious correlation~\citep{GEORGE, DFR, surgical}, it is expected that the classifier can differentiate between the groups to some extent. As shown in Table~\ref{tab:abl_inference}, surprisingly, EVaLS-RC produces results that are generally comparable to EVaLS. However, the performance of this method may have high variance, depending on the different initializations of the classifier.

\begin{table}
    \centering
    \caption{The performances of three environment inference methods, when combined with loss-based sample selection, are evaluated on spurious correlation benchmarks. The mean and standard deviation values are calculated over three separate runs, each initiated with a different seed.}
    \label{tab:abl_inference}
    \begin{tabular}{ccccccc}
    \toprule
        \multirow{2}{*}{Method} &  \multicolumn{2}{c}{Waterbirds} & \multicolumn{2}{c}{CelebA} & \multicolumn{2}{c}{UrbanCars}\\\cmidrule(lr){2-7}
        & Worst & Average & Worst & Average &Worst&Average\\\midrule
        EVaLS-ES &$82.1_{\pm 1.2}$ & $\boldsymbol{94.3_{\pm 0.04}}$& $48.4_{\pm 11.6}$ &$69.5_{\pm 6.5}$ & $79.2_{\pm 2.9}$& $86.1_{\pm 0.9}$ \\
        EVaLS-RC & $\boldsymbol{88.7_{\pm 1.0}}$ &$94.3_{\pm 1.1}$ & $78.1_{\pm 5.1}$ & $\boldsymbol{93.5_{\pm 0.2}}$ & $\boldsymbol{82.4_{\pm 3.2}}$ & $\boldsymbol{88.2_{\pm 0.8}}$\\
        \midrule
        EVaLS &$88.4_{\pm 3.1}$ &$94.1_{\pm 0.1}$ &$\boldsymbol{85.3_{\pm 0.4}}$&$89.4_{\pm 0.5}$ & $82.1_\pm{0.9}$ & $88.1_{\pm 0.9}$\\\bottomrule
    \end{tabular}
\end{table}


\section{Societal Impacts}
Real-world datasets often encapsulate social biases that stem from entrenched stereotypes and historical discrimination, affecting various groups such as genders and races. Machine learning methods, which learn the correlation between patterns in input data and their targets (e.g., labels in a classification task)~\citep{Beery_2018_ECCV}, inadvertently absorb this bias. This unintended consequence leads to fairness issues in many applications. While strategies to mitigate such biases have been proposed (as discussed comprehensively in Section~\ref{sec:related_work}), societal biases are not always known and determined. We believe that our work, as it addresses these unidentified biases, takes a significant step towards making machine learning fairer for our society.

\section{Computational Resources}
Each experiment was conducted on one of the following GPUs: NVIDIA H100 with 80G memory, NVIDIA A100 with 80G memory, NVIDIA Titan RTX with 24G memory, Nvidia GeForce RTX 3090 with 24G memory, and NVIDIA GeForce RTX 3080 Ti with 12G memory.

\end{document}